\definecolor{pearThree}{HTML}{E74C3C}
\definecolor{pearcomp}{HTML}{B97E29}
\definecolor{pearDark}{HTML}{2980B9}
\definecolor{pearDarker}{HTML}{1D2DEC}
\newlength{\myheight}
\tikzset{labels/.style={font=\sffamily\scriptsize},
    circuit/.style={draw,minimum width=2cm,minimum height=\myheight,very thick,inner sep=1mm,outer sep=0pt,cap=round,font=\sffamily\bfseries},
    triangle 45/.tip={Triangle[angle=45:8pt]}
}
\newcommand{\greekalpha}[1]{\c@greekalpha{#1}}
\newcommand{\c@greekalpha}[1]{%
  {%
    \ifcase\number\value{#1} %
    \or
    \textalpha
    \or
    \textbeta
    \or
    \textgamma
    \fi
  }%
}
\AddEnumerateCounter*{\greekalpha}{\c@greekalpha}{5}
\newcommand{\CommaBin}{\mathbin{\raisebox{0.5ex}{,}}}
\let\originalleft\left
\let\originalright\right
\renewcommand{\left}{\mathopen{}\mathclose\bgroup\originalleft}
\renewcommand{\right}{\aftergroup\egroup\originalright}
\newcommand{\wt}[1]{\widetilde{#1}}
\newcommand{\wh}[1]{\widehat{#1}}
\newcommand{\StopOne}{{\small\textsc{STOP1}}\xspace}
\newcommand{\StopTwo}{{\small\textsc{STOP2}}\xspace}
\newcommand{\Lpluseps}{\scaleto{L + \epsilon}{5.5pt}}
\newcommand{\DC}{{\small\textsc{AX}}\xspace}
\newcommand{\DCtitle}{\textsc{AX}\xspace}
\newcommand{\DCL}{{\small\textsc{AX}\textsubscript{L}}\xspace}
\newcommand{\DCstar}{{\small\textsc{AX}$^{\star}$}\xspace}
\newcommand{\DCprime}{{\small\textsc{AX}$'$}\xspace}
\newcommand{\cS}{\mathcal{S}}
\newcommand{\cA}{\mathcal{A}}
\newcommand{\cK}{\mathcal{K}}
\newcommand{\SA}{\mathcal{S} \times \mathcal{A}}
\newcommand{\ALGO}{\textup{\texttt{DisCo}}\xspace}
\newcommand{\ALGOtitle}{\large{\textup{\texttt{DisCo}}}\xspace}
\newcommand{\UcbExplore}{\textup{\texttt{UcbExplore}}\xspace}
\newcommand{\UcbExplorelarge}{\large{\textup{\texttt{UcbExplore}}}\xspace}
\newcommand{\Ucb}{{\small{\textsc{Ucb}}}\xspace}
\newcommand{\MNM}{{\small\textsc{MNM}}\xspace}
\newcommand{\OptMod}{{\small\textsc{OptMod}}\xspace}
\newcommand{\VI}{{\small\textsc{VI}}\xspace}
\newcommand{\VISSP}{{\small$\textsc{VI}_{\textsc{SSP}}$}\xspace}
\newcommand{\OVISSP}{{\small$\textsc{OVI}_{\textsc{SSP}}$}\xspace}
\newcommand{\OVISSPmath}{{\small\textsc{OVI}_{\textsc{SSP}}}\xspace}
\newcommand{\RESET}{{\small\textsc{RESET}}\xspace}
\DeclareMathOperator*{\argmin}{arg\,min}
\newcommand\myeqa{\mathrel{\stackrel{\makebox[0pt]{\mbox{\normalfont\tiny (a)}}}{=}}}
\newcommand\myineeqa{\mathrel{\stackrel{\makebox[0pt]{\mbox{\normalfont\tiny (a)}}}{\leq}}}
\newcommand\myineeqb{\mathrel{\stackrel{\makebox[0pt]{\mbox{\normalfont\tiny (b)}}}{\leq}}}
\newcommand\myineeqc{\mathrel{\stackrel{\makebox[0pt]{\mbox{\normalfont\tiny (c)}}}{\leq}}}
\newcommand\myineeqd{\mathrel{\stackrel{\makebox[0pt]{\mbox{\normalfont\tiny (d)}}}{\leq}}}
\newcommand\footnoteref[1]{\protected@xdef\@thefnmark{\ref{#1}}\@footnotemark}
\DeclarePairedDelimiter\abs{\lvert}{\rvert}%
\DeclarePairedDelimiter\norm{\lVert}{\rVert}%
\newtheorem{theorem}{Theorem}
\newtheorem{lemma}{Lemma}
\newtheorem{corollary}{Corollary}
\newtheorem{definition}{Definition}
\newtheorem{assumption}{Assumption}
\theoremstyle{definition}
\renewcommand{\epsilon}{\varepsilon}
\renewcommand{\hat}{\widehat}
\newcommand{\todojout}[1]{\todo[color=orange!30!yellow!10]{\scriptsize JT: #1}}
\newcommand{\jt}[1]{\textcolor{black}{#1}}
\title{Improved Sample Complexity for Incremental Autonomous Exploration in MDPs}
\author{%
  Jean Tarbouriech \hspace{-4em}\\
  Facebook AI Research Paris \& Inria Lille\hspace{-5em}\\
  \hspace{-4em}\texttt{jean.tarbouriech@gmail.com\hspace{-5em}}
  % examples of more authors
  \And
  \hspace{-2em}Matteo Pirotta \\
  \hspace{-2em}Facebook AI Research Paris\\
  \hspace{-2em}\texttt{pirotta@fb.com}
  \AND
  \hspace{3em}Michal Valko \\
  \hspace{3em}DeepMind Paris\\
  \hspace{3em}\texttt{valkom@deepmind.com} \\
  \And
  \hspace{2em}Alessandro Lazaric \\
  \hspace{2em}Facebook AI Research Paris\\
  \hspace{2em}\texttt{lazaric@fb.com}
}
\begin{document}

\maketitle

\vspace{-0.1in}
	% For TOC in appendix (https://tex.stackexchange.com/a/419290)
	\doparttoc % Tell to minitoc to generate a toc for the parts
	\faketableofcontents % Run a fake tableofcontents command for the partocs

\begin{abstract}%
%\vspace{-0.05in}
    We investigate the exploration of an unknown environment when no reward function is provided. Building on the incremental exploration setting introduced by Lim and Auer~\cite{lim2012autonomous}, we define the objective of learning the set of $\epsilon$-optimal goal-conditioned policies attaining all states that are incrementally reachable within $L$ steps (in expectation) from a reference state $s_0$. In this paper, we introduce a novel model-based approach that interleaves discovering new states from $s_0$ and improving the accuracy of a model estimate that is used to compute goal-conditioned policies to reach newly discovered states. The resulting algorithm, \ALGO, achieves a sample complexity scaling as $\widetilde{O}(L^5 S_{\Lpluseps} \Gamma_{\Lpluseps} A \,\epsilon^{-2})$, where~$A$ is the number of actions, $S_{\Lpluseps}$ is the number of states that are incrementally reachable from $s_0$ in $L+\epsilon$ steps, and $\Gamma_{\Lpluseps}$ is the branching factor of the dynamics over such states. This improves over the algorithm proposed in \citep{lim2012autonomous} in both~$\epsilon$ and $L$ at the cost of an extra $\Gamma_{\Lpluseps}$ factor, which is small in most environments of interest. Furthermore, \ALGO is the first algorithm that can return an $\epsilon/c_{\min}$-optimal policy for any cost-sensitive shortest-path problem defined on the $L$-reachable states with minimum cost $c_{\min}$. Finally, we report preliminary empirical results confirming our theoretical findings.
\end{abstract}

%\begin{keywords}%
%  List of keywords%
%\end{keywords}

%!TEX root = MAIN_AutExp.tex

\vspace{-0.12in}
\section{Introduction}
\vspace{-0.02in}

In cases where the reward signal is not informative enough --- e.g., too sparse, time-varying or even absent --- a reinforcement learning (RL) agent needs to explore the environment driven by objectives other than reward maximization, see~\citep[e.g.,][]{schmidhuber1991possibility, chentanez2005intrinsically, oudeyer2009intrinsic, singh2010intrinsically, baranes2010intrinsically}. This can be performed by designing intrinsic rewards to drive the learning process, for instance via state visitation counts~\citep{bellemare2016unifying, tang2017exploration}, novelty or prediction errors~\citep{houthooft2016variational, pathak2017curiosity,azar2019world}. Other recent methods perform information-theoretic skill discovery to learn a set of diverse and task-agnostic behaviors~\citep{eysenbach2018diversity, sharma2019dynamics, campos2020explore}. Alternatively, goal-conditioned policies learned by carefully designing the sequence of goals during the learning process are often used to solve sparse reward problems~\cite{ecoffet2020first} and a variety of goal-reaching tasks~\citep{florensa2018automatic, colas2019curious, warde2018unsupervised, pong2019skew}.

\jt{While the approaches reviewed above effectively leverage deep RL techniques and are able to achieve impressive results in complex domains (e.g., Montezuma's Revenge~\cite{ecoffet2020first} or real-world robotic manipulation tasks~\cite{pong2019skew}), they often lack substantial theoretical understanding and guarantees. Recently, some \textit{unsupervised RL} objectives were analyzed rigorously. Some of them quantify how well the agent visits the states under a sought-after frequency, e.g., to induce a maximally entropic state distribution~\citep{hazan2019provably, tarbouriech2019active, cheung2019exploration, tarbouriech2020active}. While such strategies provably mimic their desired behavior via a Frank-Wolfe algorithmic scheme, they may not learn how to effectively reach any state of the environment and thus may not be sufficient to efficiently solve downstream tasks. Another relevant take is the reward-free RL paradigm of~\citep{jin2020reward}: following its exploration phase, the agent is able to compute a near-optimal policy for any reward function at test time. While this framework yields strong end-to-end guarantees, it is limited to the finite-horizon setting and the agent is thus unable to tackle tasks beyond finite-horizon, e.g., goal-conditioned tasks.}

In this paper, we build on and refine the setting of incremental exploration of~\citep{lim2012autonomous}: the agent starts at an initial state $s_0$ in an unknown, possibly large environment, and it is provided with a \textsc{reset} action to restart at $s_0$. \jt{At a high level, in this setting the agent should explore the environment and stop when it has identified the \textit{tasks} within its \textit{reach} and learned to \textit{master} each of them sufficiently well.} More specifically, the objective of the agent is to learn a goal-conditioned policy for \textit{any} state that can be reached from $s_0$ within $L$ steps in expectation; such a state is said to be $L$-controllable. Lim and Auer~\citep{lim2012autonomous} address %their setting by
this setting with the \UcbExplore method for which they bound the number of exploration steps that are required to identify in an incremental way all $L$-controllable states (i.e., the algorithm needs to define a suitable stopping condition) and to return a set of policies that are able to reach each of them in \textit{at most} $L+\epsilon$ steps. A key aspect of \UcbExplore is to first focus on simple states (i.e., states that can be reached within a few steps), learn policies to efficiently reach them, and leverage them to identify and tackle states that are increasingly more difficult to reach. This approach aims to avoid wasting exploration in the attempt of reaching states that are further than $L$ steps from $s_0$ or that are too difficult to reach given the limited knowledge available at earlier stages of the exploration process. Our  main contributions are: % \todompout{Put inline to save space? Or use a compressed list?}
\begin{itemize}[leftmargin=.2in,topsep=-4pt,itemsep=0pt,partopsep=0pt, parsep=0pt]
	\setlength\itemsep{0em}
	\item We strengthen the objective of incremental exploration and require the agent to learn $\epsilon$-optimal goal-conditioned policies for any $L$-controllable state. Formally, let $V^\star(s)$ be the length of the shortest path from $s_0$ to $s$, then the agent needs to learn a policy to navigate from $s_0$ to $s$ in at most $V^\star(s)+\epsilon$ steps, while in~\citep{lim2012autonomous} any policy reaching $s$ in \textit{at most} $L+\epsilon$ steps is acceptable.
	\item We design \ALGO, a novel algorithm for incremental exploration. \ALGO relies on an estimate of the transition model to compute goal-conditioned policies to the states observed so far and then use those policies to improve the accuracy of the model and incrementally discover new states.
	\item We derive a sample complexity bound for \ALGO scaling as\footnote{We say that $f(\epsilon) = \wt O(\epsilon^\alpha)$ if there are constants $a$, $b$, such that $f(\epsilon) \leq a \cdot \epsilon^\alpha \log^b\big(\epsilon\big)$.} $\widetilde{O}(L^5 S_{\Lpluseps} \Gamma_{\Lpluseps} A \, \epsilon^{-2})$, where $A$ is the number of actions, $S_{\Lpluseps}$ is the number of states that are \textit{incrementally} controllable from $s_0$ in $L+\epsilon$ steps, and $\Gamma_{\Lpluseps}$ is the branching factor of the dynamics over such incrementally controllable states. Not only is this sample complexity obtained for a more challenging objective than \UcbExplore, but it also improves in both $\epsilon$ and $L$ at the cost of an extra $\Gamma_{L+\epsilon}$ factor, which is small in most environments of interest.
	\item Leveraging the model-based nature of \ALGO, we can also readily compute an $\epsilon/c_{\min}$-optimal policy for \textit{any} cost-sensitive shortest-path problem defined on the $L$-controllable states with minimum cost $c_{\min}$. This result serves as a goal-conditioned counterpart to the reward-free exploration framework defined by Jin et al.\,\cite{jin2020reward} for the finite-horizon setting.    %\textcolor{red}{This result is built on a sample-complexity analysis for stochastic shortest-path (SSP) problems which may be of independent interest.}
	%\todojout{I removed the expression "Similar to~\citep{jin2020reward},..." in this last bullet point}
\end{itemize}

% !TEX root = MAIN_AutExp.tex
\vspace{-0.015in}
%\vspace{-0.05in}
\section{Incremental Exploration to Discover and Control}
\label{section_span_problem}
\vspace{-0.015in}
In this section we expand 
%the setting formalized by Lim and Auer 
\cite{lim2012autonomous}, with a more challenging objective for autonomous exploration.
%\todojout{here I removed the mention to DC (see the commented sentence)}
%Since the exploration strategy should incrementally discover new states and learn near-optimal goal-conditioned policies to reach them, we refer to the overall setting as the \textit{discover and control} (\DC) problem.
\vspace{-0.015in}
\subsection{$L$-Controllable States}
\vspace{-0.015in}
We consider a \textit{reward-free} Markov decision process\,\citep[][Sect.\,8.3]{puterman2014markov} $M := \langle\mathcal{S}, \mathcal{A}, p, s_0 \rangle$. We assume a finite action space $\mathcal{A}$ with $A = \abs{\mathcal{A}}$ actions, and a finite, possibly large state space $\mathcal{S}$ for which an upper bound $S$ on its cardinality is known, i.e., $\abs{\mathcal{S}} \leq S$.\footnote{Lim and Auer~\cite{lim2012autonomous} originally considered a countable, possibly infinite state space; however this leads to a technical issue in the analysis of \UcbExplore (acknowledged by the authors via personal communication and explained in App.\,\ref{subsection_issue_infinite_state_space}), which disappears by considering only finite state spaces.}
%\todomvout{it's not the setting that has the problem, right? it's the guarantees for UCBexp for this setting that has a problem bc of the lower bound.}
Each state-action pair $(s,a) \in \mathcal{S} \times \mathcal{A}$ is characterized by an unknown transition probability distribution $p(\cdot \vert s,a)$ over next states. We denote by $\Gamma_{\mathcal{S'}} := \max_{s\in\mathcal{S'},a} \norm{ \{ p(s'|s,a) \}_{s' \in \cS'}}_0$ the largest branching factor of the dynamics over states in any subset $\mathcal{S'}\subseteq \mathcal{S}$. The environment has no extrinsic reward, and $s_0 \in \mathcal{S}$ is a designated initial state.
%. Finally, $s_0 \in \mathcal{S}$ is a designated initial state.

A deterministic stationary policy $\pi: \mathcal{S} \rightarrow \mathcal{A}$ is a mapping between states to actions and we denote by $\Pi$ the set of all possible policies. Since in environments with arbitrary dynamics the learner may get stuck in a state without being able to return to $s_0$, we introduce the following assumption.\footnote{This assumption should be contrasted with the finite-horizon setting, where each policy resets automatically after $H$ steps, or assumptions on the MDP dynamics such as ergodicity or bounded diameter, which guarantee that it is always possible to find a policy navigating between any two states.}

\begin{assumption}
	The action space contains a $\RESET$ action s.t.\ $p(s_0| s,\RESET) = 1$ for any $s \in \mathcal{S}$.
	%The action space is such that exists an action $\RESET \in \mathcal{A}$ such that $p(s_0| s,\RESET) = 1$ for any $s \in \mathcal{S}$.
	%In every state, there is a designated \RESET action available, that will transition back to the starting state~$s_0$ with probability~$1$.
	\label{assumption_reset}
\end{assumption}

We make explicit the states where a policy $\pi$ takes action $\RESET$ in the following definition.%notion of policy \textit{restricted to a subset of states}.

%\todomv{can we use $\subseteq$  or $\subsetneq$ instead of $\subset$ (which is always a bit ambigous)? }

\begin{definition}[Policy restricted on a subset]
	For any $\mathcal{S}' \subseteq \mathcal{S}$, a policy $\pi$ is \textit{restricted on $\mathcal{S}'$} if $\pi(s) = \RESET$ for any $s \notin \mathcal{S}'$. We denote by $\Pi(\mathcal{S}')$ the set of policies restricted on $\mathcal{S}'$.
\end{definition}

We measure the performance of a policy in navigating the MDP as follows.

\begin{definition}
For any policy $\pi$ and a pair of states $(s, s') \in \mathcal{S}^2$, let $\tau_{\pi}(s \rightarrow s')$ be the (random) number of steps it takes to reach $s'$ starting from $s$ when executing policy $\pi$, i.e.,~$\tau_{\pi}(s \rightarrow s') := \inf \{ t \geq 0: s_{t+1} = s' \,\vert\, s_1 = s, \pi \}$. We also set $v_{\pi}(s \rightarrow s') := \mathbb{E}\left[\tau_{\pi}(s \rightarrow s')\right]$ as the expected traveling time, which corresponds to the value function of policy $\pi$ in a stochastic shortest-path setting (SSP,~\cite[][Sect.\,3]{bertsekas1995dynamic}) with initial state $s$, goal state $s'$ and unit cost function. Note that we have $v_{\pi}(s \rightarrow s') = + \infty$ when the policy $\pi$ does not reach $s'$ from $s$ with probability 1. Furthermore, for any subset $\mathcal{S}' \subseteq \mathcal{S}$
and any state $s$, we denote by
\begin{align*}
V^{\star}_{\mathcal{S}'}(s_0 \rightarrow s) := \min_{\pi \in \Pi(\mathcal{S}')} v_{\pi}(s_0 \rightarrow s),
\end{align*}
\vspace{-0.05in}%
the length of the shortest path to $s$, restricted to policies resetting to $s_0$ from any state outside $\mathcal{S}'$.
\end{definition}

The objective of the learning agent is to \textit{control efficiently} the environment in the \textit{vicinity} of $s_0$. %Since the state space $\mathcal{S}$ may be large, we expect the agent to control the states in the \textit{vicinity} of $s_0$.
We say that a state $s$ is controlled if the agent can reliably navigate to it from $s_0$, that is, there exists an effective \textit{goal-conditioned policy} --- i.e., a \textit{shortest-path policy} --- from $s_0$ to $s$. %More formally, we introduce the notion of set of $L$-controllable states.

\begin{definition}[$L$-controllable states]
Given a reference state $s_0$, we say that a state $s$ is \textit{$L$-controllable} if there exists a policy $\pi$ such that $v_{\pi}(s_0 \rightarrow s) \leq L$. The set of \textit{$L$-controllable states} is then
\begin{align}\label{eq_S_L}
\mathcal{S}_L := \{ s \in \mathcal{S} : \min_{\pi \in \Pi} v_{\pi}(s_0 \rightarrow s) \leq L \}.
\end{align}
\end{definition}
\vspace{-0.05in}
We illustrate the concept of controllable states in Fig.\,\ref{fig_star} for $L=3$. Interestingly, in the right figure, the black states are not $L$-controllable. In fact, there is no policy that can directly choose which one of the black states to reach. On the other hand, the red state, despite being in some sense \textit{further} from $s_0$ than the black states, \textit{does} belong to $S_L$. In general, there is a crucial difference between the existence of a \textit{random} realization where a state $s$ is reached from $s_0$ in less than $L$ steps (i.e., black states) and the notion of $L$-\textit{controllability}, which means that there exists a policy that consistently reaches the state in a number of steps less or equal than $L$ on average (i.e., red state). This explains the choice of the term \emph{controllable} over \emph{reachable}, since a state $s$ is often said to be reachable if there is a policy $\pi$ with a non-zero probability to eventually reach it, which is a  weaker requirement.

\tikzset{
  pics/carc/.style args={#1:#2:#3}{
    code={
      \draw[pic actions] (#1:#3) arc(#1:#2:#3);
    }
  }
}

\newcommand{\stargraph}[2]{
\resizebox{4.5cm}{4.5cm}{%
\begin{tikzpicture}
    \node[circle,draw=black,fill=white] at (360:0mm) (center) {};
    \foreach \n in {1,...,#1}{
        %\node[circle,fill=black] at ({\n*360/#1}:#2*6 cm) (n\n) {};
        %\draw (center)--(n\n);

        \node[circle,draw=black] at ({\n*360/#1}:#2*5 cm) (n\n) {};
        \draw (center)--(n\n);
				\node[circle,draw=black,fill=black] at ({\n*360/#1}:#2*5 cm) (n\n) {};

        \node[circle,draw=black] at ({\n*360/#1}:#2*4 cm) (n\n) {};
        \draw (center)--(n\n);
				\node[circle,draw=black,fill=black] at ({\n*360/#1}:#2*4 cm) (n\n) {};

        \node[circle,fill=red!50] at ({\n*360/#1}:#2*3 cm) (n\n) {};
        \draw (center)--(n\n);

        \node[circle,fill=red!50] at ({\n*360/#1}:#2*2 cm) (n\n) {};
        \draw (center)--(n\n);

        \node[circle,fill=red!50] at ({\n*360/#1}:#2*1 cm) (n\n) {};
        \draw (center)--(n\n);

        \pic[red,thick,dashed]{carc=-180:180:#2*3.4cm};
        %\node[circle,fill=black] at ({\n*360/#1}:#3cm) (n\n) {};
        %\draw (center)--(n\n);
        % \node at (0,-#2*1.5) {$K_{1,#1}$}; % delete line to remove label

    }
\end{tikzpicture}
}
}

\begin{figure}[t]
    \begin{minipage}{.34\linewidth}
%    \begin{figure}[t!]
%\begin{minipage}[t]{1.0\columnwidth}
%\vspace{-1in}
    %\centering
    \stargraph{8}{0.6}
    %\caption{Star-graph environment, where the starting state $s_0$ is depicted in green. If we set $L=3$, then the states belonging to $\mathcal{S}_L$ are depicted in red.}
\end{minipage}%
\begin{minipage}{0.47\linewidth}
\caption{Two environments where the starting state $s_0$ is in white. \emph{Left:} Each transition between states is deterministic and depicted with an edge. \emph{Right:} Each transition from $s_0$ to the first layer is \textit{equiprobable} and the transitions in the successive layers are deterministic. If we set $L=3$, then the states belonging to $\mathcal{S}_L$ are colored in red. As the right figure illustrates, $L$-controllability is not necessarily linked to a notion of distance between states and an $L$-controllable state may be achieved by traversing states that are not $L$-controllable themselves.}% \textcolor{red}{(TODOJ: change the colors of nodes as Alessandro asked)}}
\label{fig_star}
\end{minipage}%
\begin{minipage}{0.19\linewidth}
    \flushright
	\begin{tikzpicture}[thick,scale=0.9]
	\node[circle,draw=black,fill=white] at (1,1.5) (1) {}; %(0,0) (1) {};
    \node[circle,fill=black] at (0,0.5) (2) {}; %(1,1) (2) {};
    \node[circle,fill=black] at (1,0.5) (3) {}; % (1,0) (3) {};
    \node[circle,fill=black] at (2,0.5) (4) {}; % (1,-1) (4) {};
    \node[circle,fill=black] at (0,-0.5) (5) {}; % (2,1) (5) {};
    \node[circle,fill=black] at (1,-0.5) (6) {}; % (2,0) (6) {};
    \node[circle,fill=black] at (2,-0.5) (7) {}; % (2,-1) (7) {};
    \node[circle,fill=red!50 ] at (1,-1.5) (8) {}; % (3,0) (8) {};
	\begin{scope}[>={Stealth[black]},
	every node/.style={fill=white,circle},
	every edge/.style={draw=gray, thick},
	every loop/.style={draw=gray, thick, min distance=5mm,looseness=5}]
	\path[]
	(1) [->,thick] edge[] node[scale=0.001, text width = 0mm] {} (2)
	(1) [->,thick] edge[] node[scale=0.001, text width = 0mm] {} (3)
	(1) [->,thick] edge[] node[scale=0.001, text width = 0mm] {} (4)
	(2) [->,thick] edge[] node[scale=0.001, text width = 0mm] {} (5)
	(3) [->,thick] edge[] node[scale=0.001, text width = 0mm] {} (6)
	(4) [->,thick] edge[] node[scale=0.001, text width = 0mm] {} (7)
    (5) [->,thick] edge[] node[scale=0.001, text width = 0mm] {} (8)
    (6) [->,thick] edge[] node[scale=0.001, text width = 0mm] {} (8)
    (7) [->,thick] edge[] node[scale=0.001, text width = 0mm] {} (8);
	\end{scope}
	\end{tikzpicture}
     \end{minipage}
%\vspace{-0.15in}
% Star-like graph, where the starting state $s_0$ is depicted in green. If we set $L=3$, then the states belonging to $\mathcal{S}_L$ are depicted in red.}
% \label{fig_star}
\end{figure}
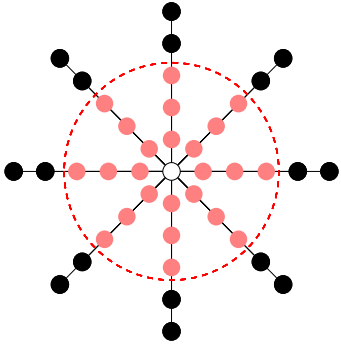

%\vspace{-0.07in}
Unfortunately, Lim and Auer~\cite{lim2012autonomous} showed that in order to discover all the states in $\mathcal{S}_L$, the learner may require a number of exploration steps that is \textit{exponential} in $L$ or $\abs{\mathcal{S}_L}$. Intuitively, this negative result is due to the fact that the minimum in Eq.\,\ref{eq_S_L} is over the set of all possible policies, including those that may traverse states that are not in $\mathcal{S}_L$.\footnote{We refer the reader to~\cite[][Sect.\,2.1]{lim2012autonomous} for a more formal and complete characterization of this negative result.} Hence, we similarly constrain the learner to focus on the set of \textit{incrementally controllable} states.

\begin{definition}[Incrementally controllable states $\mathcal{S}_L^{\rightarrow}$]
Let $\prec$ be some partial order on $\mathcal{S}$. The set $\mathcal{S}_L^{\prec}$ of states controllable in $L$ steps w.r.t.\ $\prec$ is defined inductively as follows. The initial state $s_0$ belongs to $\mathcal{S}_L^{\prec}$ by definition and if there exists a policy $\pi$ restricted on $\{ s' \in \mathcal{S}_L^{\prec}: s' \prec s \}$ with $v_{\pi}(s_0 \rightarrow s) \leq L$, then $s \in \mathcal{S}_L^{\prec}$.
%\begin{itemize}[leftmargin=*,topsep=-1pt]%[noitemsep,topsep=0pt]
%    \item $s_0 \in \mathcal{S}_L^{\prec}$,
%    \item if there exists a policy $\pi$ restricted to $\{ s' \in \mathcal{S}_L^{\prec}: s' \prec s \}$ with $v_{\pi}(s_0 \rightarrow s) \leq L$, then $s \in \mathcal{S}_L^{\prec}$.\todompout{Maybe write it mathematically}
%\end{itemize}
The set $\mathcal{S}_L^{\rightarrow}$ of incrementally $L$-controllable states is defined as $\mathcal{S}_L^{\rightarrow} := \cup_{\prec} \mathcal{S}_L^{\prec}$, where the union is over all possible partial orders.
\label{definition_incremental_set}
\end{definition}
By way of illustration, in Fig.\,\ref{fig_star} for $L=3$, it holds that $\mathcal{S}_L^{\rightarrow} = \mathcal{S}_L$ in the left figure, whereas $\mathcal{S}_L^{\rightarrow} = \{ s_0 \} \neq \mathcal{S}_L$ in the right figure. Indeed, while the red state is $L$-controllable, it requires traversing the black states, which are not $L$-controllable. %\todojout{replace $L$ by 3?}

%\vspace{-0.05in}

\subsection{\DC Objectives}\label{ssec:dc.objectves}

We are now ready to formalize two alternative objectives for \textit{Autonomous eXploration} (\DC) in MDPs.
%\todojout{removed mention of Discover and Control here}

%\DCstar, \DCL

%For each state identified as being in $\mathcal{S}_L^{\rightarrow}$, the objective of the agent is to find a policy to \textit{control} it, i.e., to reach it efficiently. Building on the definition of sample complexity of \cite{lim2012autonomous}, we propose alternative objective functions that aim for tighter controllability. Ideally, one would hope to find for \textit{each} state\todompout{goal/target state?} $s \in \mathcal{S}_L^{\rightarrow}$ an $\epsilon$-optimal\todompout{I found a little bit strange to talk about optimality since this concept has never been defined.} policy $\pi_s$ for a fixed error threshold $\epsilon > 0$, i.e., $v_{\pi_s}(s_0 \rightarrow s) \leq \min_{\pi} v_{\pi}(s_0 \rightarrow s) + \epsilon$.
%%\begin{align*}
%%    v_{\pi_s}(s_0 \rightarrow s) \leq \min_{\pi \in \Pi^{\text{SD}}} v_{\pi}(s_0 \rightarrow s) + \epsilon.
%%\end{align*}
%However, for similar reasons as above, solving such an objective may require an exponential number of exploration steps by having to traverse the entire state space $\mathcal{S}$. Hence, in line with the focus narrowed down to $\mathcal{S}_L^{\rightarrow}$, we introduce the following definition of sample complexity.

\begin{definition}[\DC sample complexity]\label{definition_exploration_bound}
Fix any length $L \geq 1$, error threshold $\epsilon > 0$ and confidence level $\delta \in (0,1)$. % the $(L, \epsilon)$-exploration steps are the complement of the set which contains the time steps at which
%For a given requirement $\DC \in \{ \text{\DCL, \DCstar} \}$,
The sample complexities $\mathcal{C}_{\DCL}(\mathfrak{A},L,\epsilon,\delta)$ and $\mathcal{C}_{\small\textsc{AX}^{\star}}(\mathfrak{A},L,\epsilon,\delta)$ are defined as the number of time steps required by a learning algorithm $\mathfrak{A}$ to identify a set $\mathcal{K} \supseteq \mathcal{S}_L^{\rightarrow}$ such that with probability at least $1 - \delta$, it has learned a set of policies $\{ \pi_s \}_{s \in \mathcal{K}}$ that respectively verifies %one of the \DC requirements:
the following \DC requirement
\begin{itemize}[itemsep=0.1pt,topsep=0pt]
    \item[\textup{(\DCL)}] ~ $ \forall s \in \mathcal{K}, v_{\pi_s}(s_0 \rightarrow s) \leq L + \epsilon$,
    \item[\textup{(\DCstar)}] ~ $\forall s \in \mathcal{K}, v_{\pi_s}(s_0 \rightarrow s) \leq V^{\star}_{\mathcal{S}_L^{\rightarrow}}(s_0 \rightarrow s) + \epsilon.$
\end{itemize}

%\begin{enumerate}[label={(\arabic*)},itemsep=0.1pt,topsep=0pt]
%    \item\label{autexp_condition} $ \forall s \in \mathcal{K}, v_{\pi_s}(s_0 \rightarrow s) \leq L + \epsilon$,
%    \item\label{span_condition} $\forall s \in \mathcal{K}, v_{\pi_s}(s_0 \rightarrow s) \leq V^{\star}_{\mathcal{S}_L^{\rightarrow}}(s_0 \rightarrow s) + \epsilon.$
%    \item\label{alternative_condition} $\forall s \in \mathcal{K}, v_{\pi_s}(s_0 \rightarrow s) \leq V^{\star}_{\mathcal{S}_{V^{\star}(s)}^{\rightarrow}}(s_0 \rightarrow s) + \epsilon$,
%\end{enumerate}
%
\end{definition}

Designing agents satisfying the objectives defined above introduces critical difficulties w.r.t.\,standard goal-directed learning in RL. First, the agent has to find accurate policies for a set of goals (i.e., all incrementally $L$-controllable states) and not just for one specific goal. On top of this, the set of desired goals itself (i.e., the set $\mathcal{S}_L^{\rightarrow}$) is \textit{unknown} in advance and has to be estimated online. Specifically, \DCL is the original objective introduced in~\cite{lim2012autonomous} and it requires the agent to discover all the incrementally $L$-controllable states as fast as possible.\footnote{Note that we translated in  the condition in \cite{lim2012autonomous} of a relative error of $L \epsilon$ to an absolute error of $\epsilon$, to align it with the common formulation of sample complexity in RL.\label{footref_absolute_relative_error}} At the end of the learning process, for each state $s\in\mathcal{S}_L^{\rightarrow}$ the agent should return a policy that can reach $s$ from $s_0$ in at most $L$ steps (in expectation). Unfortunately, this may correspond to a rather poor performance in practice. Consider a state $s\in\mathcal{S}_L^{\rightarrow}$ such that $V^{\star}_{\mathcal{S}_L^{\rightarrow}}(s_0 \rightarrow s) \ll L$, i.e., the shortest path between $s_0$ to $s$ following policies restricted on $\mathcal{S}_L^{\rightarrow}$ is much smaller than $L$. Satisfying \DCL only guarantees that a policy reaching $s$ in $L$ steps is found. On the other hand, objective \DCstar is more demanding, as it requires learning a near-optimal shortest-path policy for each state in $\mathcal{S}_L^{\rightarrow}$. Since $V^{\star}_{\mathcal{S}_L^{\rightarrow}}(s_0 \rightarrow s) \leq L$ and the gap between the two quantities may be arbitrarily large, especially for states close to $s_0$ and far from the fringe of $\mathcal{S}_L^{\rightarrow}$, \DCstar is a significantly tighter objective than \DCL and it is thus preferable in practice.

We say that an exploration algorithm solves the \DC problem if its sample complexity $\mathcal{C}_{\DC}(\mathfrak{A},L,\epsilon,\delta)$ in Def.\,\ref{definition_exploration_bound} is polynomial in $\abs{\mathcal{K}}$, $A$, $L,$ $\epsilon^{-1}$ and $\log(S)$. Notice that requiring a logarithmic dependency on the size of $\mathcal{S}$ is crucial but nontrivial, since the overall state space may be large and we do not want the agent to waste time trying to reach states that are not $L$-controllable. The dependency on the (algorithmic-dependent and random) set $\mathcal{K}$ can be always replaced using the upper bound $\abs{\mathcal{K}} \leq \abs{\mathcal{S}^{\rightarrow}_{L+\epsilon}}$, which is implied with high probability by both \DCL and \DCstar conditions. Finally, notice that the error threshold $\epsilon > 0$ has a two-fold impact on the performance of the algorithm. First, $\epsilon$ defines the largest set $\mathcal{S}^{\rightarrow}_{L + \epsilon}$ that could be returned by the algorithm: the larger $\epsilon$, the bigger the set. Second, as $\epsilon$ increases, the quality (in terms of controllability and navigational precision) of the output policies worsens w.r.t.\,the shortest-path policy restricted on $\mathcal{S}_L^{\rightarrow}$. %In light of this, it is preferable to choose a small $\epsilon$ to achieve accurate controllability and navigational precision. \textcolor{red}{(I don't like this last sentence)}

% !TEX root = MAIN_AutExp.tex
%\vspace{-0.03in}

\section{The \ALGOtitle Algorithm}

%\vspace{-0.02in}

The algorithm \ALGO\,--- for \texttt{Discover and Control} --- is detailed in Alg.\,\ref{algorithm_SSP_generative_model}.
%\todomvout{IncaRL? has Peru spirit and A is for the algorithm ;-)}
It maintains a set $\mathcal{K}$ of \say{controllable} states and a set $\mathcal{U}$ of states that are considered \say{uncontrollable} \textit{so far}. A state $s$ is tagged as controllable when a policy to reach $s$ in at most $L + \epsilon$ steps (in expectation from $s_0$) has been found with high confidence, and we denote by $\pi_s$ such policy. The states in $\mathcal{U}$ are states that have been discovered as potential members of $\mathcal{S}_L^{\rightarrow}$, but the algorithm has yet to produce a policy to control any of them in less than $L + \epsilon$ steps. The algorithm stores an estimate of the transition model and it proceeds through rounds, which are indexed by $k$ and incremented whenever a state in $\mathcal{U}$ gets transferred to the set $\mathcal{K}$, i.e., when the transition model reaches a level of accuracy sufficient to compute a policy to control one of the states encountered before. We denote by $\mathcal{K}_k$ (resp.\,$\mathcal{U}_k$) the set of controllable (resp.\,uncontrollable) states at the beginning of round $k$. \ALGO stops at a round $K$ when it can confidently claim that all the remaining states outside of $\mathcal{K}_K$ cannot be $L$-controllable.
%\todomvout{potential confusion, there can be many such policies, and even we could decided that over time we find a better one... and the notation insinuates  that it is it unique}
%\todomvout{whp?}

At each round, the algorithm uses all samples observed so far to build an estimate of the transition model denoted by $\wh{p}(s'|s,a) = N(s,a,s') / N(s,a)$, where $N(s,a)$ and $N(s,a,s')$ are counters for state-action and state-action-next state visitations. Each round is divided into two phases. The first is a \textit{sample collection} phase. At the beginning of round $k$, the agent collects additional samples until $n_k := \phi(\mathcal{K}_k)$ samples are available at each state-action pair in $\mathcal{K}_k \times \mathcal{A}$ (step \ding{172}). A key challenge lies in the careful (and adaptive) choice of the allocation function $\phi$, which we report in the statement of Thm.\,\ref{thm:sample.comlexity} (see Eq.\,\ref{budget_modest.detail} in App.\,\ref{app_state_transfer} for its exact definition). Importantly, the incremental construction of $\mathcal{K}_k$ entails that sampling at each state $s \in \mathcal{K}_k$ can be done efficiently. In fact, for all $s\in\mathcal{K}_k$ the agent has already confidently learned a policy $\pi_s$ to reach $s$ in at most $L + \epsilon$ steps on average (see how such policy is computed in the second phase). \jt{The generation of transitions $(s,a,s')$ for $(s,a)\in \mathcal{K}_k \times \mathcal{A}$ achieves two objectives at once. First, it serves as a discovery step, since all observed next states $s'$ not in $\mathcal{U}_k$ are added to it --- in particular this guarantees sufficient exploration at the fringe (or border) of the set $\mathcal{K}_k$. % the set of so far controllable states. 
Second, it improves the accuracy of the model $p$ in the states in~$\mathcal{K}_k$, which is essential in computing near-optimal policies and thus fulfilling the \DCstar condition.}

%when generating new samples from a state-action pair $(s,a)\in \mathcal{K}_k \times \mathcal{A}$, all the observed states not in $\mathcal{U}_k$ are added to it --- in particular this enables to explore on the fringe (or border) of the so far controllable states $\mathcal{K}_k$. Second, it improves the accuracy of the model $p$ restricted to the states in $\mathcal{K}_k$, which is essential in computing near-optimal policies and thus fulfilling the \DCstar condition.
%\todojout{"accuracy of the model" is understandable, right, or should we say "accuracy of estimation of model"?}

%\todomvout{ALGO has a tiny bit too many different types of fonts: different continue/continue ... different condionts "is empty" / "continue")}

\newlength{\textfloatsepsave}
\setlength{\textfloatsepsave}{\textfloatsep}
\setlength{\textfloatsep}{0pt}

\begin{algorithm}[t!]
	\begin{small}
		\DontPrintSemicolon
		% \noindent\rule{\linewidth}{0.4pt} \\
		\caption{Algorithm \ALGO}
		\label{algorithm_SSP_generative_model}
        \KwIn{Actions $\mathcal{A}$, initial state $s_0$, confidence parameter $\delta \in (0,1)$, error threshold $\epsilon > 0$, $L \geq 1$ and (possibly adaptive) allocation function $\phi: \mathcal{P}(\mathcal{S}) \rightarrow \mathbb{N}$ (where $\mathcal{P}(\mathcal{S})$ denotes the power set of $\cS$).}
		% \textbf{Input:} confidence parameter $\delta \in (0,1)$, error threshold $\epsilon > 0$, $L \geq 1$, $\mathcal{A}$ and $s_0$. \\
		% \textbf{Input:} scheme for Optimistic Value Iteration (\OVISSP) (see App.\,\ref{app_blackbox_computation_optimistic_SSP_policy}). \\
		% \textbf{Input:} (possibly adaptive) allocation function $\phi: \mathcal{P}(\mathcal{S}) \rightarrow \mathbb{N}$. \label{line_phi_function} \\
        Initialize $k := 0$, $\mathcal{K}_0 := \{ s_0 \}$, $\mathcal{U}_0 := \{ \}$ and a restricted policy $\pi_{s_0} \in \Pi(\mathcal{K}_0)$.\;
		Set $\epsilon := \min \{ \epsilon, 1 \}$  \label{line_epsilon_min}
        and  $\texttt{continue} := \texttt{True}$.\;
		%Set $\epsilon := \min \{ \epsilon, L+1 \}$ and $\epsilon' := \displaystyle \frac{\epsilon}{39(L+1) \log(\frac{39(L+1) }{\epsilon})}$. \\
		%Set $H := \left\lceil 3(1+\epsilon')(L+1)\log\left(\frac{2}{\epsilon'}\right) \right\rceil$. \\
		% Set $\epsilon :\approx \frac{\epsilon}{L \log\left(\frac{L}{\epsilon}\right)}$. \\
		% \tcp{\small{For notational purposes, initialize $\Psi: \mathcal{K} \times \Pi^\text{SD} \times \mathbb{N} \rightarrow \Pi^\text{NSD}$, with $\Psi(s, \pi_s, n)$ the non-stationary policy for target state $s$ that executes the actions prescribed by $\pi_s$ and resets every $n$ steps.
		%Set $k := 0$. \CommentSty{\small{//index of the round}} \\
		% Set $\texttt{continue} := \texttt{True}$. \\
		\While{$\textup{\texttt{continue}}$}{
			Set $k \mathrel{+}= 1$. \CommentSty{\small{//new round}} \\
			%Set sample requirement $n_k := \phi(\mathcal{K}_k)$. \\
			\tcp{\small{\textbf{\ding{172} Sample collection on $\mathcal{K}$}}}
%			For each $(s,a) \in \mathcal{K}_k \times \mathcal{A}$, execute policy $\pi_{s}$ to collect additional samples $(s,a,s')$ until at least $n_k := \phi(\mathcal{K}_k)$ samples at $(s,a)$ are available, and add the newly-visited states $s' \notin \mathcal{K}_k$ to $\mathcal{U}_k$. \;
            For each $(s,a) \in \mathcal{K}_k \times \mathcal{A}$, execute policy $\pi_{s}$ until the total number of visits $N_k(s,a)$ to $(s,a)$ satisfies  $N_k(s,a) \geq n_k := \phi(\mathcal{K}_k)$.
            For each $(s,a) \in \mathcal{K}_k \times \mathcal{A}$, add $s' \sim p(\cdot|s,a)$ to $\mathcal{U}_k$ if $s' \notin \mathcal{K}_k$.\;
			\tcp{\small{\textbf{\ding{173} Restriction of candidate states $\mathcal{U}$}}}
            Compute transitions  $\wh p_k(s'|s,a)$
            and $\mathcal{W}_k := \Big\{ s' \in \mathcal{U}_k: \exists (s,a) \in \mathcal{K}_k \times \mathcal{A}, \wh{p}_k(s' \vert s,a) \geq \frac{1 - \epsilon / 2}{L} \Big\}\cdot$ \;
			\If{$\mathcal{W}_k$ \textup{is empty}}
			{Set $\texttt{continue} := \texttt{False}$. ~\CommentSty{\small{//condition \StopOne}}}
			\Else{
				\tcp{\small{\textbf{\ding{174} Computation of the optimistic policies on $\mathcal{K}$}}}
				\For{\textup{each state} $s' \in \mathcal{W}_k$}{
                        Compute $(\wt{u}_{s'}, \wt{\pi}_{s'}) := \OVISSPmath(\cK_k, \cA, s', N_k, \frac{\epsilon}{6L}),$ see Alg.\,\ref{alg:OVISSP.app} in App.\,\ref{app:app_full_proof.ovi}.\;
                    % Compute $(\wt{u}_s, \wt{\pi}_s) := \OVISSPmath\left(\small{\texttt{goal}} = s, ~\small{\texttt{non-goals}} = \mathcal{K}_k \cup {x}, ~\small{\texttt{samples}} = N_k,~\small{\texttt{costs}} = 1, ~\small{\texttt{precision}}~\gamma := \frac{\epsilon}{6 L} \right)$.
                }
                Let $s^\dagger := \argmin_{s \in \mathcal{W}_k} \wt{u}_s(s_0)$ and $\wt{u}^{\dagger} := \wt{u}_{s^\dagger}(s_0)$.\;
                %$\wt{u}^{\dagger}(s_0) := \min_{s \in \mathcal{W}_k} \wt{u}_s(s_0)$. \\
				\If{$\wt{u}^{\dagger} > L$}
				{Set $\texttt{continue} := \texttt{False}$. ~\CommentSty{\small{//condition \StopTwo}}
				}
				\Else
				{
					\tcp{\small{\textbf{\ding{175} State transfer from $\mathcal{U}$ to $\mathcal{K}$}}}
					% Choose a state $s^{\dagger} \in \arg\min_{s \in \mathcal{W}_k} \wt{u}_s(s_0)$ and %consider its associated policy $\wt{\pi}_{s^\dagger}$ (i.e., greedy w.r.t.\,$\wt{u}^{\dagger}$ in $\wt{p}_{s^{\dagger}}$).
					% set $\mathcal{K}_{k+1} := \mathcal{K}_k \cup \{ s^{\dagger} \}$,~ $\mathcal{U}_{k+1} := \mathcal{U}_k \setminus \{ s^{\dagger} \}$ and $\pi_{s^{\dagger}} := \wt{\pi}_{s^\dagger}$. \\
                    Set $\mathcal{K}_{k+1} := \mathcal{K}_k \cup \{ s^{\dagger} \}$,~ $\mathcal{U}_{k+1} := \mathcal{U}_k \setminus \{ s^{\dagger} \}$ and $\pi_{s^{\dagger}} := \wt{\pi}_{s^\dagger}$.
				}
			}
		}
		\tcp{\small{\textbf{\ding{176} Policy consolidation:\,computation on the final set $\mathcal{K}$}}}
		Set $K := k$.\\
		\For{\textup{each state} $s \in \mathcal{K}_{K}$}{
                Compute $(\wt{u}_s, \wt{\pi}_s) := \OVISSPmath(\mathcal{K}_{K}, \cA, s, N_K, \frac{\epsilon}{6L})$.\;
			% Compute $(\_, \pi_s) := \OVISSPmath\left(\small{\texttt{goal}} = s, ~\small{\texttt{non-goals}} = \mathcal{K}_k \cup {x}, ~\small{\texttt{samples}} = N_k,~\small{\texttt{costs}} = 1, ~\small{\texttt{precision}}~\gamma := \frac{\epsilon}{6 L} \right)$.
			%Denote $\nu_{s^{\dagger}} := (\wt{\pi}^{\dagger})^{\vert J^{\dagger}} \in \Pi^{\vert J^{\dagger}}(\mathcal{K}_k)$
		}
		\textbf{Output:} the states $s$ in $\mathcal{K}_{K}$ and their corresponding policy $\pi_s := \wt{\pi}_s$.
	\end{small}
\end{algorithm}

\setlength{\textfloatsep}{\textfloatsepsave}

The second phase does not require interacting with the environment and it focuses on the \textit{computation of optimistic policies}. The agent begins by significantly restricting the set of candidate states in each round to alleviate the computational complexity of the algorithm. Namely, among all the states in $\mathcal{U}_k$, it discards those that do not have a high probability of belonging to $\mathcal{S}_L^{\rightarrow}$ by considering a restricted set $\mathcal{W}_k \subseteq \mathcal{U}_k$ (step \ding{173}). In fact, if the estimated probability $\wh{p}_k$ of reaching a state $s\in\mathcal{U}_k$ from any of the controllable states in $\mathcal{K}_k$ is lower than $(1-\epsilon/2)/L$, then no shortest-path policy restricted on $\mathcal{K}_k$ could get to $s$ from $s_0$ in less than $L+\epsilon$ steps on average. Then for each state $s'$ in $\mathcal{W}_k$, \ALGO computes an optimistic policy restricted on $\mathcal{K}_k$ to reach $s'$. Formally, for any candidate state $s' \in \mathcal{W}_k$, we define the induced stochastic shortest path (SSP) MDP $M'_{k}$ with goal state $s'$ as follows.

\begin{definition}%[Induced MDP $M^{\dagger}_{k}$]
        We define the SSP-MDP $M'_{k} := \langle \mathcal{S}, \mathcal{A}'_k(\cdot), c'_{k}, p'_k\rangle$ with goal state $s'$, where the action space is such that $\cA'_k(s) = \cA$ for all  $s\in\mathcal{K}_k$ and $\cA'_k(s) = \{\RESET\}$ otherwise (i.e., we focus on policies restricted on $\cK_k$). The cost function is such that for all $a \in \mathcal{A}$, $c'_{k}(s',a) = 0$, and for any $s \neq s'$, $c'_{k}(s,a) = 1$. The transition model is $p'_{k}(s' \vert s',a) = 1$ and $p'_{k}(\cdot \vert s,a) = p(\cdot \vert s,a)$ otherwise.\footnote{In words, all actions at states in $\mathcal{K}_k$ behave exactly as in $M$ and suffer a unit cost, in all states outside $\cK_k$ only the reset action to $s_0$ is available with a unit cost, and all actions at the goal $s'$ induce a zero-cost self-loop.}
	\label{definition_induced_mdp}
\end{definition}

The solution of $M'_k$ is the shortest-path policy from $s_0$ to $s'$ restricted on $\cK_k$. Since $p'_k$ is unknown, \ALGO cannot compute the exact solution of $M'_k$, but instead, it executes optimistic value iteration (\OVISSP) for SSP~\cite{tarbouriech2019no, cohen2020near} to obtain a value function $\wt{u}_{s'}$ and its associated greedy policy $\wt{\pi}_{s'}$ restricted on $\cK_k$ (see App.\,\ref{app:app_full_proof.ovi} for more details).

The agent then chooses a candidate goal state $s^{\dagger}$ for which the value $\wt{u}^\dagger:=\wt{u}_{s^\dagger}(s_0)$ is the smallest. This step can be interpreted as selecting the optimistically most promising new state to control. Two cases are possible. If $\wt{u}^{\dagger} \leq L$, then $s^{\dagger}$ is added to $\mathcal{K}_k$ (step \ding{175}), since the accuracy of the model estimate on the state-action space $\mathcal{K}_k \times \mathcal{A}$ guarantees that the policy $\wt{\pi}_{s^\dagger}$ is able to reach the state $s^{\dagger}$ in less than $L+\epsilon$ steps in expectation with high probability (i.e., $s^{\dagger}$ is incrementally $(L+\epsilon)$-controllable). Otherwise, we can guarantee that $\mathcal{S}_L^{\rightarrow} \subseteq \mathcal{K}_k$ with high probability. In the latter case, the algorithm terminates and, using the current estimates of the model, it recomputes an optimistic shortest-path policy $\pi_s$ restricted on the final set $\mathcal{K}_{K}$  for each state $s \in \mathcal{K}_{K}$ (step \ding{176}). \jt{This policy consolidation step is essential to identify near-optimal policies restricted on the final set $\mathcal{K}_{K}$ (and thus on $\mathcal{S}_L^{\rightarrow}$): indeed the expansion of the set of the so far controllable states may alter and refine the optimal goal-reaching policies restricted on it (see App.\,\ref{app_objectives}).}

\jt{\textbf{Computational Complexity.} Note that algorithmically, we do not need to define $M'_k$ (Def.\,\ref{definition_induced_mdp}) over the whole state space $\cS$ as we can limit it to $\mathcal{K}_k \cup \{ s' \}$, i.e., the candidate state $s'$ and the set $\mathcal{K}_k$ of so far controllable states. As shown in Thm.\,\ref{theorem_bound_UCSSPGM}, this set can be significantly smaller than~$\cS$. In particular this implies that the computational complexity of the value iteration algorithm used to compute the optimistic policies is independent from $S$ (see App.\,\ref{subsection_computational_complexities} for more details).}

%\todojout{I removed the paragraph on comparison of computational complexities with \UcbExplore, it's currently in App.\,\ref{subsection_comparison_computational_complexities}}

%\iffalse
%\begin{algorithm}[t]
% \noindent\rule{\linewidth}{0.4pt} \\
%\caption{$\maybeEVI_{\textrm{SSP}}$}
%\label{algorithm_VI_SSP}
%  \textbf{Input:} any (potentially extended) SSP-optimal Bellman operator $\mathcal{L}$, accuracy $\varepsilon > 0$ \\
%  Set $n=0$, $u_0 = \textbf{0}$ (vector of size $S$) and $u_1 = \mathcal{L} u_0$ \\
%  \While{$\norm{u_{n+1} - u_n}_{\infty} > \varepsilon$}{
%  $n \mathrel{+}= 1$ \\
%  $u_{n+1} = \mathcal{L} u_n$ \\
%  }
%  \textbf{Output:} $u_n$ and the greedy policy extracted from $u_n$ \\
% \noindent\rule{\linewidth}{0.4pt}
%\end{algorithm}
%\fi

% !TEX root = MAIN_AutExp.tex
\section{Sample Complexity Analysis of \ALGOtitle}
\label{sect_analysis}

We now present our main result: a sample complexity guarantee for \ALGO for the \DCstar objective, which directly implies that \DCL is also satisfied.

%\todomvout{For the Thm below I uncomenet the label for the $phi$ functions but it resolves as Line 1, as the function is in the unnumbered line - maybe we can refer to its apparition in Line 5 }
%\todojout{before $\phi$ was referring to a line in Alg, I removed it, should we put it back?}

\begin{theorem}\label{thm:sample.comlexity}
    There exists an absolute constant $\alpha > 0$ such that for any $L \geq 1$, $\epsilon \in (0,1],$ and $\delta \in (0,1)$, if we set the allocation function $\phi$ as
    \begin{align}
        \phi : \mathcal{X} \rightarrow \alpha \cdot \left( \frac{L^4 \wh{\Theta}(\mathcal{X})}{\epsilon^2} \log^2 \left( \frac{L S A }{\epsilon \delta} \right) + \frac{L^2 \abs{\mathcal{X}}}{ \epsilon} \log\left( \frac{L S A }{\epsilon \delta} \right) \right)\CommaBin
        %\phi : \mathcal{X} \rightarrow \alpha \cdot \left( \frac{L^3 \wh{\Theta}(\mathcal{X})}{\epsilon^2} \log\left( \frac{L S A }{\epsilon \delta} \right) + \frac{L^2 \abs{\mathcal{X}}}{ \epsilon} \log\left( \frac{L S A }{\epsilon \delta} \right) + L^2 \wh{\Theta}(\mathcal{X}) \log^2\left( \frac{L S A }{\delta} \right) \right)\CommaBin
        \label{allocation_function}
    \end{align}
    with $\wh{\Theta}(\mathcal{X}) := \max_{(s,a) \in \mathcal{X} \times \mathcal{A}} \big(\sum_{s' \in \mathcal{X}} \sqrt{ \wh{p}(s' \vert s,a)(1 - \wh{p}(s' \vert s,a))}\big)^2$,
%    \begin{align*}
%        \wh{\Theta}(\mathcal{X}) := \max_{(s,a) \in \mathcal{X} \times \mathcal{A}} \sum_{s' \in \mathcal{X}} \wh{p}(s' \vert s,a),
%    \end{align*}
    then the algorithm \ALGO (Alg.\,\ref{algorithm_SSP_generative_model}) satisfies the following sample complexity bound for \DCstar
    \begin{align}\label{eq:algo.bound}
        \mathcal{C}_{\small\textsc{AX}^{\star}}(\ALGO, L, \epsilon, \delta) = \wt{O}\left( \frac{L^5 \Gamma_{\Lpluseps} S_{\Lpluseps} A}{\epsilon^2} + \frac{L^3 S_{\Lpluseps}^2 A}{\epsilon}\right)\CommaBin
    \end{align}
where $S_{\Lpluseps} := \abs{\mathcal{S}_{L+\epsilon}^{\rightarrow}}$ and
\begin{align*}
    \Gamma_{\Lpluseps} := \max_{(s,a) \in \mathcal{S}_{\Lpluseps}^{\rightarrow} \times \mathcal{A}} \norm{\{ p(s' \vert s, a)\}_{s' \in \mathcal{S}_{\Lpluseps}^{\rightarrow}}}_0 \leq S_{\Lpluseps}
\end{align*}
is the maximal support of the transition probabilities $p(\cdot \vert s, a)$ \textit{restricted} to the set $\mathcal{S}_{\Lpluseps}^{\rightarrow}$.
\label{theorem_bound_UCSSPGM}
\end{theorem}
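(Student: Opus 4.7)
I would condition on a high-probability good event $\mathcal{E}$ on which all empirical transition estimates $\hat{p}_k(\cdot|s,a)$ concentrate around $p(\cdot|s,a)$ via an empirical Bernstein bound of the form $|\hat{p}_k(s'|s,a)-p(s'|s,a)| = O(\sqrt{p(s'|s,a)(1-p(s'|s,a))/N_k(s,a)} + 1/N_k(s,a))$ for every $(s,a)\in\mathcal{K}_k\times\mathcal{A}$ and round $k$, together with a support inclusion $\{s':\hat{p}_k(s'|s,a)>0\}\subseteq \mathcal{S}_{\Lpluseps}^\rightarrow$. Under $\mathcal{E}$, the proof splits into a correctness argument establishing \DCstar and a sample-complexity accounting yielding Eq.\,\ref{eq:algo.bound}.

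For correctness I proceed in three steps. Step one is an invariant: whenever a state $s^\dagger$ is transferred to $\mathcal{K}_k$ (step~\ding{175}), the combination of optimism of \OVISSP, its tolerance $\epsilon/(6L)$, and the test $\tilde{u}^\dagger \leq L$ yields $v_{\pi_{s^\dagger}}(s_0 \rightarrow s^\dagger) \leq L + \epsilon$ in the true MDP, preserving $\mathcal{K}_k \subseteq \mathcal{S}_{\Lpluseps}^\rightarrow$. Step two is termination soundness: upon exit by either \StopOne or \StopTwo, I show $\mathcal{S}_L^\rightarrow \subseteq \mathcal{K}_K$. The restriction step is safe because the threshold $(1-\epsilon/2)/L$ is calibrated so that any state excluded from $\mathcal{W}_k$ cannot be reached in $\leq L$ expected steps by any policy restricted on $\mathcal{K}_k$; \StopTwo then uses the optimistic inequality $\tilde{u}^\dagger \leq V^\star_{\mathcal{K}_k}(s_0 \rightarrow s^\dagger) + \epsilon$ to rule out such a restricted policy for every remaining candidate; a short induction over the partial order defining $\mathcal{S}_L^\rightarrow$ handles states that never appear in $\mathcal{U}_k$. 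Step three is consolidation: reapplying \OVISSP on $\mathcal{K}_K \supseteq \mathcal{S}_L^\rightarrow$ with the accumulated $n_K$ samples per state-action, together with a Bernstein Bellman-error propagation tuned to $\phi$, yields $v_{\pi_s}(s_0 \rightarrow s) \leq V^\star_{\mathcal{S}_L^\rightarrow}(s_0 \rightarrow s) + \epsilon$ for every $s\in\mathcal{K}_K$.

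The sample-complexity accounting rests on two observations. First, because $\phi$ is nondecreasing in $\mathcal{K}_k$ and samples are never discarded across rounds, the agent gathers at most $n_K = \phi(\mathcal{K}_K)$ total samples at each $(s,a)\in\mathcal{K}_K\times\mathcal{A}$ over the entire run. Second, each such sample is obtained by executing $\pi_s$, which by the invariant above reaches $s$ from $s_0$ in at most $L+\epsilon+1$ expected steps; an Azuma-type concentration converts these expected costs into a high-probability bound on realized travel times. Multiplying yields a total interaction time of order $|\mathcal{K}_K|\cdot A\cdot \phi(\mathcal{K}_K)\cdot L$, and plugging $|\mathcal{K}_K|\leq S_{\Lpluseps}$ together with $\hat{\Theta}(\mathcal{K}_K)\leq \Gamma_{\Lpluseps}$ (via Cauchy-Schwarz on $\sum_{s'}\sqrt{\hat{p}(1-\hat{p})}$ and the support inclusion from $\mathcal{E}$) recovers both additive terms of Eq.\,\ref{eq:algo.bound}.

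The main technical obstacle I expect is the joint calibration of $\phi$: it must simultaneously make the optimistic values $\tilde{u}^\dagger$ accurate enough that both the transfer test $\tilde{u}^\dagger\leq L$ and the termination test $\tilde{u}^\dagger > L$ soundly reflect the true restricted SSP values to within $\epsilon$, and strong enough to yield \DCstar-quality consolidated policies, despite the SSP horizon itself scaling with $L$ and thus amplifying estimation errors. Keeping only a single factor $\Gamma_{\Lpluseps}$ (rather than $S$) in the leading term is the most delicate piece: it requires plugging the empirical-Bernstein bound into the Bellman recursion together with the support inclusion $\{s':\hat{p}(s'|s,a)>0\}\subseteq \mathcal{S}_{\Lpluseps}^\rightarrow$, and summing variance contributions across the SSP horizon via a law-of-total-variance argument in order not to introduce a spurious $S$-dependence.
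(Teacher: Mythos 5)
Your overall architecture matches the paper's: a Bernstein-based good event, a transfer invariant giving $v_{\pi_{s^\dagger}}(s_0\to s^\dagger)\le L+\epsilon$, termination soundness via the $1/L$ one-step-probability argument behind the $\mathcal{W}_k$ threshold, a consolidation step on $\mathcal{K}_K$, and the accounting $|\mathcal{K}_K|\cdot A\cdot\phi(\mathcal{K}_K)\cdot O(L)$ with Cauchy--Schwarz turning $\wh{\Theta}$ into $\Gamma_{L+\epsilon}$. However, there is one genuine gap at the heart of the estimation argument: you assert the support inclusion $\{s':\hat{p}(s'|s,a)>0\}\subseteq\mathcal{S}_{L+\epsilon}^{\rightarrow}$ as part of the good event, and this is false in general. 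A pair $(s,a)$ with $s\in\mathcal{K}_k$ may transition to arbitrarily many states that are not incrementally controllable (the right panel of Fig.~\ref{fig_star} is exactly such a situation), so if you propagate per-next-state Bernstein errors through the Bellman recursion over the raw model $\hat p(\cdot|s,a)$, the $\ell_1$ model error picks up the branching factor over \emph{all} of $\mathcal{S}$, reintroducing a polynomial dependence on $S$ and breaking the claimed $\Gamma_{L+\epsilon}$ leading term. The paper's resolution is structural, not probabilistic: it builds the restricted SSP-MDP of Def.~\ref{def:ssp.mdp.onKk}, collapsing all of $\mathcal{S}\setminus(\mathcal{K}_k\cup\{s^\dagger\})$ into a single meta-state $x$ that deterministically resets to $s_0$, and assigns that meta-state the aggregate bonus $\beta_k(s,a,x)=\sum_{y\in\mathcal{K}_k\cup\{s^\dagger\}}\beta_k(s,a,y)$ (Eq.~\ref{bonuses}); the simulation lemma (Lem.~\ref{lemma_simulation_ssp}) is then applied in this $(|\mathcal{K}_k|+2)$-state MDP, which is what confines the error to $\Gamma_{\mathcal{K}_k}$. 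Your proof needs this (or an equivalent aggregation device) to go through; without it the leading term is not $L^5\Gamma_{L+\epsilon}S_{L+\epsilon}A/\epsilon^2$.

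A secondary point: the ``law-of-total-variance across the SSP horizon'' refinement you flag as the delicate step is not what controls the $S$-dependence (the meta-state does that), and the paper explicitly remarks (end of App.~\ref{app:app_full_proof.ovi}) that such variance-aware arguments are blocked here precisely because the meta-state's uncertainty is a sum of per-state bonuses. The bound in Eq.~\ref{eq:algo.bound} is obtained with the cruder route: calibrate $\phi$ so that $\beta_k(s,a,x)\le\gamma=O(\epsilon/L^2)$, apply the $\ell_1$ simulation lemma with $\eta=4\gamma$ and $\|V\|_\infty=O(L)$, and pay $\eta\|V\|_\infty^2=O(\epsilon)$. Your remaining steps (optimism $\wt{u}^\dagger\le V^\star_{\mathcal{K}_k}$ for \StopTwo, the incremental-order induction for states never observed, and the rollout-length concentration) track Lemmas~\ref{lem:optimism}, \ref{lem:W_k}, \ref{lemma_ending_2} and \ref{lem:gen_model} faithfully.
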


Given the definition of \DCstar, Thm.\,\ref{theorem_bound_UCSSPGM} implies that \ALGO \textbf{1)} terminates after $\mathcal{C}_{\small\textsc{AX}^{\star}}(\ALGO, L, \epsilon, \delta)$ time steps, \textbf{2)} discovers a set of states $\mathcal{K} \supseteq \mathcal{S}_L^{\rightarrow}$ with $\abs{\mathcal{K}} \leq S_{\Lpluseps}$, \textbf{3)} and for each $s \in \mathcal{K}$ outputs a policy~$\pi_s$ which is $\epsilon$-optimal w.r.t.\,policies restricted on $\mathcal{S}_L^{\rightarrow}$, i.e., $v_{\pi_s}(s_0 \rightarrow s) \leq V^{\star}_{\mathcal{S}_{L}^{\rightarrow}}(s_0 \rightarrow s) + \epsilon$. \jt{Note that Eq.\,\ref{eq:algo.bound} displays only a \textit{logarithmic} dependency on $S$, the total number of states. This property on the sample complexity of \ALGO, along with its $S$-independent computational complexity, is significant when the state space $\mathcal{S}$ grows large w.r.t.\,the unknown set of interest $\mathcal{S}_L^{\rightarrow}$.}

\subsection{Proof Sketch of Theorem \ref{theorem_bound_UCSSPGM}}
\label{proof_sketch}

While the complete proof is reported in App.\,\ref{app_full_proof}, we now provide the main intuition behind the result. %behind the \ALGO algorithm and its analysis.

%\vspace{-0.05in}

%\subsubsection{State Transfer from $\mathcal{U}$ to $\mathcal{K}$ (step \ding{175})}
%\label{section_state_transfer}

%Let us focus on a round $k$ and state $s^{\dagger}$ which gets transferred from $\mathcal{U}_k$ to $\mathcal{K}_k$. We aim to prove that $s^{\dagger} \in \mathcal{S}_{L+\epsilon}^{\rightarrow}$ (with high probability). For clarity and ease of exposition, throughout Sect.\,\ref{section_state_transfer} we alleviate the notation by removing the mention of the round $k$, target state $s^{\dagger}$ and starting state $s_0$.

\vspace{-0.04in}

\paragraph{State Transfer from $\mathcal{U}$ to $\mathcal{K}$ (step \ding{175}).}

Let us focus on a round $k$ and a state $s^{\dagger} \in \mathcal{U}_k$ that gets added to $\mathcal{K}_k$. For clarity we remove in the notation the round $k$, goal state $s^{\dagger}$ and starting state $s_0$. We denote by $v$ and $\wt{v}$ the value functions of the candidate policy $\wt{\pi}$ in the true and optimistic model respectively, and by $\wt{u}$ the quantity w.r.t.\,which $\wt{\pi}$ is optimistically greedy. We aim to prove that $s^{\dagger} \in \mathcal{S}_{L+\epsilon}^{\rightarrow}$ (with high probability). The main chain of inequalities underpinning the argument is
\vspace{-0.02in}
\begin{align}
    v \leq \abs{v - \wt{v}} + \wt{v} \myineeqa \frac{\epsilon}{2} + \wt{v} \myineeqb \frac{\epsilon}{2} + \wt{u} + \frac{\epsilon}{2} \myineeqc L + \epsilon,
    \label{chain_inequalities}
\end{align}
where (c) is guaranteed by algorithmic construction and (b) stems from the chosen level of value iteration accuracy. Inequality (a) has the flavor of a simulation lemma for SSP, by relating the shortest-path value function of a same policy between two models (the true one and the optimistic one). Importantly, when restricted to $\mathcal{K}$ these two models are close in virtue of the algorithmic design which enforces the collection of a minimum amount of samples at each state-action pair of $\mathcal{K} \times \mathcal{A}$, denoted by $n$. Specifically, we obtain that
\vspace{-0.02in}
\begin{align*}
\abs{v - \wt{v}} = \wt{O}\Big( \sqrt{\frac{ L^4 \Gamma_{\mathcal{K}}}{n}} + \frac{L^2 \abs{\mathcal{K}}}{n} \Big),  \quad \quad \textrm{with} \quad \Gamma_{\mathcal{K}} := \max_{(s,a) \in \mathcal{K} \times \mathcal{A}} \norm{\{ p(s' \vert s, a)\}_{s' \in \mathcal{K}}}_0 \leq \abs{\mathcal{K}}.
\end{align*}
Note that $\Gamma_{\mathcal{K}}$ is the branching factor restricted to the set $\mathcal{K}$. Our choice of~$n$ (given in Eq.\,\ref{allocation_function}) is then dictated to upper bound the above quantity by~$\epsilon / 2$ in order to satisfy inequality~(a). \jt{Let us point out that, interestingly yet unfortunately, the structure of the problem does not appear to allow for technical variance-aware improvements seeking to lower the value of $n$ prescribed above (indeed the \DC framework requires to analytically encompass the uncontrollable states $\mathcal{U}$ into a single meta state with higher transitional uncertainty, see App.\,\ref{app_full_proof} for details).}

\vspace{-0.04in}

\paragraph{Termination of the Algorithm.}

Since $\mathcal{S}_L^{\rightarrow}$ is \textit{unknown}, we have to ensure that none of the states in $\mathcal{S}_L^{\rightarrow}$ are \say{missed}. As such, we prove that with overwhelming probability, we have $\mathcal{S}_L^{\rightarrow} \subseteq \mathcal{K}_K$ when the algorithm terminates at a round denoted by $K$. There remains to justify the final near-optimal guarantee w.r.t.\,the set of policies $\Pi(\mathcal{S}_L^{\rightarrow})$. Leveraging that step \ding{176} recomputes the policies $(\pi_s)_{s \in \mathcal{K}_K}$ on the final set $\mathcal{K}_K$, we establish the following chain of inequalities
\vspace{-0.02in}
\begin{align}
    v \leq \abs{v - \wt{v}} + \wt{v} \myineeqa \frac{\epsilon}{2} + \wt{v} \myineeqb \frac{\epsilon}{2} + \wt{u} + \frac{\epsilon}{2} \myineeqc V^{\star}_{\mathcal{K}_K} + \epsilon \myineeqd V^{\star}_{\mathcal{S}_L^{\rightarrow}} + \epsilon,
    \label{chain_inequalities_2}
\end{align}
where (a) and (b) are as in Eq.\,\ref{chain_inequalities}, (c) leverages optimism and (d) stems from the inclusion $\mathcal{S}_L^{\rightarrow} \subseteq \mathcal{K}_K$.

%Furthermore, we show that we can effectively restrict the search for states in $\mathcal{S}_L^{\rightarrow}$ to the subset $\mathcal{W}_k$, which is significantly smaller than the entire set of visited states $\mathcal{U}_k$, thus alleviating the computational complexity of the algorithm.
\vspace{-0.04in}

\paragraph{Sample Complexity Bound.} The choice of allocation function $\phi$ in Eq.\,\ref{allocation_function} bounds $n_K$ which is the total number of samples required at each state-action pair in $\mathcal{K}_K \times \mathcal{A}$. We then compute a high-probability bound $\psi$ on the time steps needed to collect a given sample, and show that it scales as $\wt{O}(L)$. Since the sample complexity is solely induced by the sample collection phase (step~\ding{172}), it can be bounded by the quantity $\psi \, n_K \abs{\mathcal{K}_K} A$. Putting everything together yields the bound of Thm.\,\ref{theorem_bound_UCSSPGM}.

\subsection{Comparison with \UcbExplore~\cite{lim2012autonomous}}
We start recalling the critical distinction that \ALGO succeeds in tackling problem \DCstar, while \UcbExplore~\cite{lim2012autonomous} fails to do so (see App.\,\ref{app_objectives} for details on the \DC objectives). Nonetheless, in the following we show that even if we restrict our attention to \DCL, for which \UcbExplore is designed, \ALGO yields a better sample complexity in most of the cases. From \cite{lim2012autonomous}, \UcbExplore verifies\footnote{Note that if we replace the error of $\epsilon$ for \DCL with an error of $L\epsilon$ as in \citep{lim2012autonomous}, we recover the sample complexity of $\wt{O}\left(L^3 S_{\Lpluseps} A/\epsilon^3 \right)$ stated in \cite[][Thm.\,8]{lim2012autonomous}.}
%\begin{proposition}[Thm.~8 of \citealp{lim2012autonomous}]
\vspace{-0.02in}
\begin{align}
\mathcal{C}_{\DCL}(\UcbExplore,L,\epsilon,\delta) = \wt{O}\left(\frac{L^6 S_{\Lpluseps} A}{\epsilon^3} \right)\cdot
\label{eq_corrected_ucbexplore}
\end{align}
%\label{theorem_corrected_ucbexplore}
%\end{proposition}
%The sample complexity of \ALGO is better than \UcbExplore in both $\epsilon$ and $L$, while \UcbExplore still displays a better dependency on $S_{L+\epsilon}$.
%
Eq.\,\ref{eq_corrected_ucbexplore} shows that the sample complexity of \UcbExplore is linear in $S_{\Lpluseps}$, while for \ALGO the dependency is somewhat worse. In the main-order term $\wt O(1/\epsilon^2)$ of Eq.\,\ref{eq:algo.bound}, the bound depends linearly on $S_{\Lpluseps}$ but also grows with the branching factor $\Gamma_{\Lpluseps}$, which is not the ``global'' branching factor but denotes the %largest number of $(L+\epsilon)$-controllable states that can be reached from states in $\mathcal{S}_{\Lpluseps}^{\rightarrow}$
number of possible next states in $\mathcal{S}_{\Lpluseps}^{\rightarrow}$ starting from $\mathcal{S}_{\Lpluseps}^{\rightarrow}$. While in general we only have $\Gamma_{\Lpluseps} \leq S_{\Lpluseps}$, in many practical domains (e.g., robotics, user modeling), each state can only transition to a small number of states, i.e., we often have $\Gamma_{\Lpluseps} = O(1)$ as long as the dynamics is not too \say{chaotic}. While \ALGO does suffer from a quadratic dependency on $S_{\Lpluseps}$ in the second term of order $\wt O(1/\epsilon)$, we notice that for any $S_{\Lpluseps} \leq L^3 \epsilon^{-2}$ the bound of \ALGO is still preferable. Furthermore, since for $\epsilon\rightarrow 0$, $S_{\Lpluseps}$ tends to $S_L$, the condition is always verified for small enough $\epsilon$.
% \todoaout{If we have experiments with large $\Gamma$ we should mention them here.} !!!!

Compared to \ALGO, the sample complexity of \UcbExplore is worse in both $\epsilon$ and $L$. %As stressed in Sect.\,\ref{ssec:dc.objectves}, the better dependency on $\epsilon$ is particularly critical, since $\epsilon$ defines the quality of the policies as well as the number of incrementally $(L+\epsilon)$-controllable states returned by the algorithm, which suggest that a small value of $\epsilon$ is preferable.
As stressed in Sect.\,\ref{ssec:dc.objectves}, the better dependency on $\epsilon$ both improves the quality of the output goal-reaching policies as well as reduces the number of incrementally $(L+\epsilon)$-controllable states returned by the algorithm. It is interesting to investigate why the bound of~\cite{lim2012autonomous} (Eq.\,\ref{eq_corrected_ucbexplore}) inherits a $\wt{O}(\epsilon^{-3})$ dependency. As reviewed in App.\,\ref{app_algo_ucbexplore}, \UcbExplore alternates between two phases of state discovery and policy evaluation. The optimistic policies computed by \UcbExplore solve a \textit{finite-horizon problem} (with horizon set to $H_{\Ucb}$). However, minimizing the expected time to reach a target state is intrinsically an SSP problem, which is exactly what \ALGO leverages. By computing policies that solve a finite-horizon problem (note that \UcbExplore resets every $H_{\Ucb}$ time steps), \cite{lim2012autonomous} sets the horizon to $H_{\Ucb} := \lceil L + L^2 \epsilon^{-1} \rceil$,
%\footnote{An intuition behind the choice of $H_{\Ucb}$ in \UcbExplore can be obtained from Markov's inequality which gives that, for any $s \in \mathcal{S}_L^{\rightarrow}$ and policy $\pi$ for which $v_{\pi}(s_0 \rightarrow s) \leq L$, we have $q_{\pi,H_{\Ucb}}(s_0 \rightarrow s) \leq \frac{v_{\pi}(s_0 \rightarrow s)}{H_{\Ucb}} \sim \epsilon$ whenever $H_{\Ucb} \sim \frac{L}{\epsilon}$.},
which leads to a policy-evaluation phase with sample complexity scaling as $\wt{O}(H_{\Ucb} \epsilon^{-2}) = \wt{O}(\epsilon^{-3})$. Since the rollout budget of $\wt{O}(\epsilon^{-3})$ is hard-coded into the algorithm, the dependency on~$\epsilon$ of \UcbExplore's sample complexity cannot be improved by a more refined analysis; instead a different algorithmic approach is required such as the one employed by \ALGO.

%Since the rollout budget of $\wt{O}(\epsilon^{-3})$ is hard-coded into the algorithm, improving the dependency on~$\epsilon$ of \UcbExplore's sample complexity cannot be achieved by a more refined analysis, instead it requires a different algorithmic approach such as the one employed by \ALGO.

%the dependency on $\epsilon$ cannot be improved by any better analysis of \UcbExplore, instead and it does require a different algorithmic approach such as the one deployed by \ALGO.%\todoaout{Add few more comments on why we get better dependency?}

%\subsection{Cost-Sensitive Stochastic Shortest-Path Guarantees}

\subsection{\jt{Goal-Free Cost-Free Exploration on $\mathcal{S}_L^{\rightarrow}$ with \ALGO}}

%A compelling advantage of \ALGO is that it carefully designs shortest-path policies that can effectively estimate \textit{uniformly accurately} the transition probabilities of the MDP $M$ restricted on the subset of interest $\mathcal{S}_L^{\rightarrow}$. As a result, the algorithm not only learns effective shortest-path policies from $s_0$ to any state in $\mathcal{S}_L^{\rightarrow}$, but also learns an $\epsilon$-accurate estimate of $p$ over all states in $\mathcal{S}_L^{\rightarrow}$. This implies that the agent is able to leverage its fine-grained knowledge of the environment to solve other tasks. For instance, it could learn to quickly navigate between states in $\mathcal{S}_L^{\rightarrow}$ without having to perform the \RESET action, or it could learn to tackle  any reinforcement-learning task faster when a reward signal is provided. Furthermore, the agent can be \textit{robust} to changes in the problem parameters such as $L$ or $\epsilon$, in contrast to \UcbExplore which needs to restart its sample collection from scratch whenever $L$ or $\epsilon$ change.  \textcolor{red}{To write better (cf. rebuttal)}

\jt{A compelling advantage of \ALGO is that it achieves an accurate estimation of the environment's dynamics restricted to the unknown subset of interest $\mathcal{S}_L^{\rightarrow}$.
%A compelling advantage of \ALGO is that it carefully designs shortest-path policies that can effectively estimate \textit{uniformly accurately} the transition probabilities of the MDP $M$ restricted on the unknown subset of interest $\mathcal{S}_L^{\rightarrow}$. %As a result, the algorithm not only learns effective shortest-path policies from $s_0$ to any state in $\mathcal{S}_L^{\rightarrow}$, but also learns an $\epsilon$-accurate estimate of $p$ over all states in $\mathcal{S}_L^{\rightarrow}$.
In contrast to \UcbExplore which needs to restart its sample collection from scratch whenever~$L$,~$\epsilon$ or some transition costs change, \ALGO can thus be \textit{robust} to changes in such problem parameters. At the end of its exploration phase in Alg.\,\ref{algorithm_SSP_generative_model}, \ALGO is able to perform zero-shot planning to solve other tasks restricted on $\mathcal{S}_L^{\rightarrow}$, such as cost-sensitive ones. %For instance, it could learn to quickly navigate between states in $\mathcal{S}_L^{\rightarrow}$ without having to perform the \RESET action, or it could learn to tackle  any reinforcement-learning task faster when a reward signal is provided. Furthermore,  \textcolor{red}{To write better (cf. rebuttal)}
Indeed in the following we show how the \ALGO agent is able to compute an~$\epsilon/c_{\min}$-optimal policy for \textit{any} stochastic shortest-path problem on $\mathcal{S}_L^{\rightarrow}$ with goal state $s \in \mathcal{S}_L^{\rightarrow}$ (i.e.,~$s$ is absorbing and zero-cost) and cost function lower bounded by $c_{\min} > 0$. }

\begin{corollary}\label{cor:cost.dependent} \jt{There exists an absolute constant $\beta > 0$ such that for any $L \geq 1$, $\epsilon \in (0, 1]$ and $c_{\min} \in (0, 1]$ verifying $\epsilon \leq \beta \cdot (L \, c_{\min})$,
%, such that $\epsilon = O(L c_{\min})$
%Consider any inputs $\epsilon \in (0, 1]$, $L \geq 1$ and $c_{\min} > 0$, such that $\epsilon = O(L c_{\min})$.
%Assume that $\epsilon \in (0, 1]$.
with probability at least $1-\delta$, for \emph{whatever} goal state $s \in \mathcal{S}_L^{\rightarrow}$ and \emph{whatever} cost function $c$ in $[c_{\min}, 1]$, \ALGO can %\todomvout{"can" compute?  sounds a bit imprecise, can compute with some complexity or anything}
compute (after its exploration phase, without additional environment interaction) a policy $\wh{\pi}_{s,c}$ whose SSP value function $V_{\wh{\pi}_{s,c}}$ verifies
\begin{align*}
V_{\wh{\pi}_{s,c}}(s_0 \rightarrow s) \leq V^{\star}_{\mathcal{S}_L^{\rightarrow}}(s_0 \rightarrow s) + \frac{\epsilon}{c_{\min}},
\end{align*}
where $V_{\pi}(s_0 \rightarrow s) := \mathbb{E}\left[ \sum_{t=1}^{\tau_{\pi}(s_0 \rightarrow s)} c(s_t, \pi(s_t)) ~\big\vert~ s_1 = s_0 \right]$ is the SSP value function of a policy $\pi$ and $V^{\star}_{\mathcal{S}_L^{\rightarrow}}(s_0 \rightarrow s) := \min_{\pi \in \Pi(\mathcal{S}_L^{\rightarrow})} V_{\pi}(s_0 \rightarrow s)$ is the optimal SSP value function restricted on $\mathcal{S}_L^{\rightarrow}$.}
%todoaout{Should we give the formal definition?}\todojout{added, I also added restricted on $\mathcal{S}_L^{\rightarrow}$}
\end{corollary}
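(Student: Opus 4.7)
}

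My plan is to reuse the estimation accuracy established in the proof of Thm.\,\ref{theorem_bound_UCSSPGM} and then rerun the SSP planner with the new cost function, exactly as \ALGO does in its final policy consolidation step \ding{176}, but for an arbitrary cost $c \in [c_{\min}, 1]$. Concretely, once \ALGO terminates at round $K$, we have with probability at least $1-\delta$ both $\mathcal{S}_L^{\rightarrow} \subseteq \mathcal{K}_K$ and $N_K(s,a) \geq n_K$ at every $(s,a) \in \mathcal{K}_K \times \mathcal{A}$ (these are the two events already analysed in the proof of Thm.\,\ref{theorem_bound_UCSSPGM}, so no additional sampling is required). Given a target $s \in \mathcal{S}_L^{\rightarrow}$ and a cost $c:\mathcal{K}_K\times\mathcal{A}\rightarrow [c_{\min},1]$, I build the $c$-cost analogue $M'_{K,c}$ of Def.\,\ref{definition_induced_mdp}, run \OVISSP on the empirical model $\wh p_K$ restricted to $\mathcal{K}_K$ with goal $s$ and accuracy $\epsilon/(6L)$, and output the greedy policy $\wh{\pi}_{s,c}$ with associated optimistic value $\wt u_{s,c}$ and true value $V_{\wh{\pi}_{s,c}}$.

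The core of the argument is the same chain of inequalities as in Eq.\,\ref{chain_inequalities_2}, but rescaled by $c_{\min}^{-1}$:
\begin{align*}
V_{\wh{\pi}_{s,c}} \;\leq\; \bigl|V_{\wh{\pi}_{s,c}} - \wt V_{\wh{\pi}_{s,c}}\bigr| + \wt V_{\wh{\pi}_{s,c}} \;\myineeqa\; \tfrac{\epsilon}{2c_{\min}} + \wt u_{s,c} + \tfrac{\epsilon}{2c_{\min}} \;\myineeqb\; V^{\star}_{\mathcal{K}_K,c}(s_0\rightarrow s) + \tfrac{\epsilon}{c_{\min}} \;\myineeqc\; V^{\star}_{\mathcal{S}_L^{\rightarrow}}(s_0\rightarrow s) + \tfrac{\epsilon}{c_{\min}},
\end{align*}
where (b) is optimism of \OVISSP combined with the $\epsilon/(6L)$ planning tolerance, and (c) uses the inclusion $\mathcal{S}_L^{\rightarrow}\subseteq\mathcal{K}_K$ which enlarges the admissible class of restricted policies and hence only decreases the optimal cost. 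The step I have to rebuild is the SSP simulation-type inequality (a). The proof of Thm.\,\ref{theorem_bound_UCSSPGM} already yields, for the unit-cost problem on $\mathcal{K}_K$, a bound of the form $\wt O\bigl(\sqrt{L^4 \Gamma_{\mathcal{K}_K}/n_K} + L^2|\mathcal{K}_K|/n_K\bigr) \leq \epsilon/2$. For a cost in $[c_{\min},1]$, the optimal value on $\mathcal{K}_K$ is still at most $L$ (since the unit-cost optimum on $\mathcal{S}_L^{\rightarrow}\subseteq\mathcal{K}_K$ traverses at most $L$ steps in expectation and $c\leq 1$), but the expected hitting time of any policy with finite $c$-value $V$ is bounded by $V/c_{\min} \leq L/c_{\min}$. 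Plugging this new hitting-time bound into the same SSP simulation lemma used to derive (a) in Eq.\,\ref{chain_inequalities} rescales the right-hand side by an extra factor $1/c_{\min}$, giving $|V_{\wh{\pi}_{s,c}} - \wt V_{\wh{\pi}_{s,c}}|=\wt O\bigl(\sqrt{L^4\Gamma_{\mathcal{K}_K}/n_K}/c_{\min}+L^2|\mathcal{K}_K|/(n_K c_{\min})\bigr)$, which together with the allocation $n_K = \phi(\mathcal{K}_K)$ of Eq.\,\ref{allocation_function} is bounded by $\epsilon/(2c_{\min})$.

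The main obstacle is establishing this cost-sensitive simulation lemma cleanly: the SSP perturbation inequalities in App.\,\ref{app_full_proof} are stated for unit cost, and I have to track how both the value bound $L$ and the hitting-time bound $L/c_{\min}$ propagate through the Bernstein-type deviation terms (in particular, ensuring that the sub-optimality-gap argument that absorbs the lower-order $L^2|\mathcal{K}_K|/n_K$ term still holds when the hitting time blows up by $1/c_{\min}$). This is precisely where the hypothesis $\epsilon \leq \beta L c_{\min}$ enters: it guarantees $\epsilon/c_{\min} \leq \beta L$, so the candidate $c$-value $V_{\wh{\pi}_{s,c}}\leq L + \epsilon/c_{\min} \lesssim L$ stays in the regime where the allocation $n_K$ calibrated in Eq.\,\ref{allocation_function} (designed for values of order $L$) remains sufficient, and so that the fixed-point iterates of \OVISSP on $M'_{K,c}$ are well-defined and converge. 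Once inequality (a) is re-derived in this scaled form, the rest of the proof is a routine adaptation of the policy-consolidation argument in step \ding{176} of \ALGO and requires no additional environment interaction, which establishes the claimed $\epsilon/c_{\min}$-optimality.
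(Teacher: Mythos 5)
Your plan is correct and follows essentially the same route as the paper: the paper also proves Cor.\,\ref{cor:cost.dependent} by retracing Lem.\,\ref{lemma_final_policies_shortest_path} and Lem.\,\ref{lemma_states_discovered} with costs in $[c_{\min},1]$, reusing the already-collected samples and replanning on $\mathcal{K}_K$, with the $1/c_{\min}$ degradation coming from the inverse $c_{\min}$-dependency of the SSP simulation lemma (Lem.\,\ref{lemma_simulation_ssp}). The one point to pin down more precisely is where the hypothesis $\epsilon \leq \beta L c_{\min}$ enters: the paper uses it exactly to verify condition \eqref{eq_key_sim_lemma_ssp} of the simulation lemma, namely $\eta \norm{V'_{\pi}}_{\infty} \leq 2 c_{\min}$ with $\eta = \Omega(\epsilon/L^2)$ and $\norm{V'_{\pi}}_{\infty} = O(L)$ --- this is the properness requirement you are gesturing at when you discuss the hitting-time blow-up, rather than a question of the allocation $n_K$ being recalibrated.
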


%\textcolor{red}{To discuss statement above + how to articulate the proof (below for now, will be pushed to appendix)}

%\paragraph{Comparison with the reward-free framework of \cite{jin2020reward}.} It is interesting to compare Cor.\,\ref{cor:cost.dependent} with the reward-free framework recently introduced by Jin et al.\,in finite-horizon.
\jt{It is interesting to compare Cor.\,\ref{cor:cost.dependent} with the reward-free exploration framework recently introduced by Jin et al.\,\cite{jin2020reward} in finite-horizon. At a high level, the result in Cor.\,\ref{cor:cost.dependent} can be seen as a counterpart of~\cite{jin2020reward} beyond finite-horizon problems, specifically in the goal-conditioned setting. While the parameter $L$ defines the horizon of interest for \ALGO, resetting after every $L$ steps (as in finite-horizon) would prevent the agent to identify $L$-controllable states and lead to poor performance. This explains the distinct technical tools used: while \cite{jin2020reward} executes finite-horizon no-regret algorithms, \ALGO deploys SSP policies restricted on the set of states that it \say{controls} so far. Algorithmically, both approaches seek to build accurate estimates of the transitions on a specific (unknown) state space of interest: the so-called \say{significant} states within $H$ steps for \cite{jin2020reward}, and the incrementally $L$-controllable states $S_L^{\rightarrow}$ for \ALGO. Bound-wise, the cost-sensitive \DCstar problem inherits the critical role of the minimum cost $c_{\min}$ in SSP problems (see App.\,\ref{app_sample_complexity_SSP_gen_model} and e.g.,~\cite{tarbouriech2019no,cohen2020near,bertsekas2013stochastic}), which is reflected in the accuracy of Cor.\,\ref{cor:cost.dependent} scaling inversely with~$c_{\min}$. Another interesting element of comparison is the dependency on the size of the state space. While the algorithm introduced in~\cite{jin2020reward} is robust w.r.t.\,states that can be reached with very low probability, it still displays a \emph{polynomial} dependency on the total number of states~$S$. On the other hand, \ALGO has only a \emph{logarithmic} dependency on~$S$, while it directly depends on the number of $(L+\epsilon)$-controllable states, which shows that \ALGO effectively adapts to the state space of interest and it ignores all other states. This result is significant since not only $S_{L+\epsilon}$ can be arbitrarily smaller than $S$, but also because the set $\mathcal{S}_{L+\epsilon}^{\rightarrow}$ itself is initially unknown to the algorithm.}

\section{Numerical Simulation}\label{sec:experiment}

In this section, we provide the first evaluation of algorithms in the incremental autonomous exploration setting. In the implementation of both \ALGO and \UcbExplore, we remove the logarithmic and constant terms for simplicity. We also boost the empirical performance of \UcbExplore in various ways, for example by considering confidence intervals derived from the empirical Bernstein inequality (see\,\cite{azar2017minimax}) as opposed to Hoeffding as done in \cite{lim2012autonomous}. We refer the reader to App.\,\ref{app:experiments} for details on the algorithmic configurations and on the environments considered.

We compare the sample complexity empirically achieved by \ALGO and \UcbExplore. Fig.\,\ref{fig:3plots} depicts the time needed to identify all the incrementally $L$-controllable states when $L=4.5$ for different values of~$\epsilon$, on a confusing chain domain. Note that the sample complexity is achieved soon after, when the algorithm can confidently discard all the remaining states as non-controllable (it is reported in Tab.\,\ref{tab:new_results} of App.\,\ref{app:experiments}). We observe that \ALGO outperforms \UcbExplore for any value of~$\epsilon$. In particular, the gap in performance increases as $\epsilon$ decreases, which matches the theoretical improvement in sample complexity from $\wt{O}(\epsilon^{-3})$ for \UcbExplore to $\wt{O}(\epsilon^{-2})$ for \ALGO. On a second environment --- the combination lock problem introduced in~\citep{azar2012dynamic} --- we notice that \ALGO again outperforms \UcbExplore, as shown in App.\,\ref{app:experiments}.

Another important feature of \ALGO is that it targets the tighter objective \DCstar, whereas \UcbExplore is only able to fulfill objective \DCL and may therefore elect suboptimal policies. In App.\,\ref{app:experiments} we show empirically that, as expected theoretically, this directly translates into higher-quality goal-reaching policies recovered by \ALGO.

\begin{figure}[t]
        \centering
        \includegraphics[width=.3\textwidth]{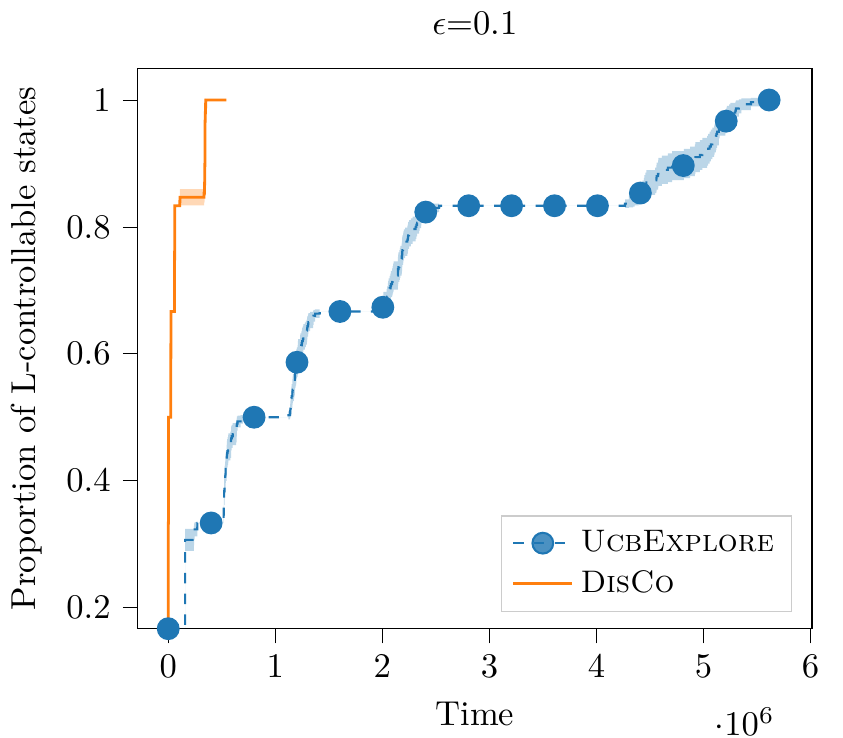}\hfill
        \includegraphics[width=.3\textwidth]{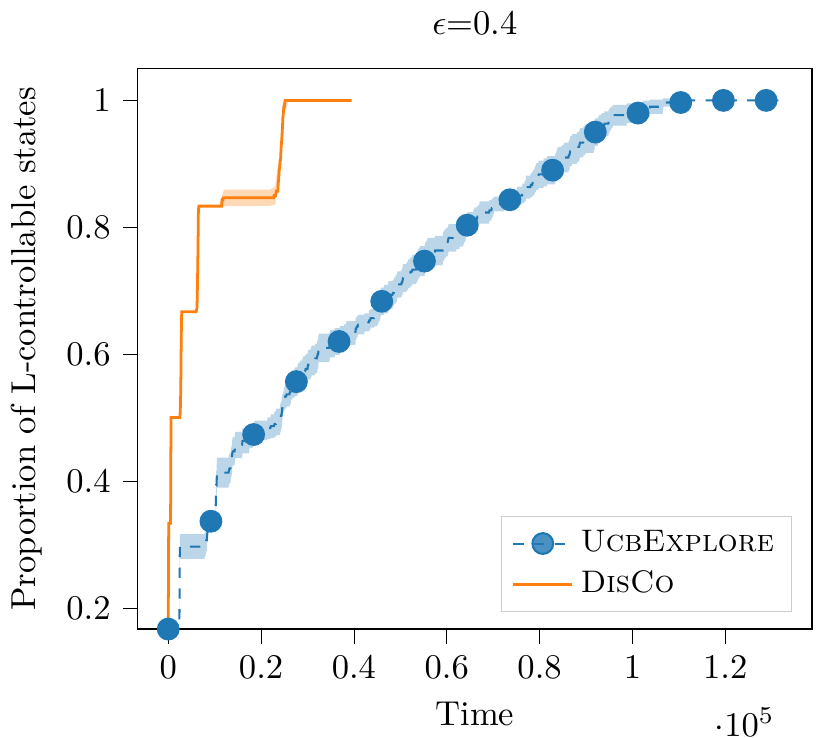}\hfill
        \includegraphics[width=.3\textwidth]{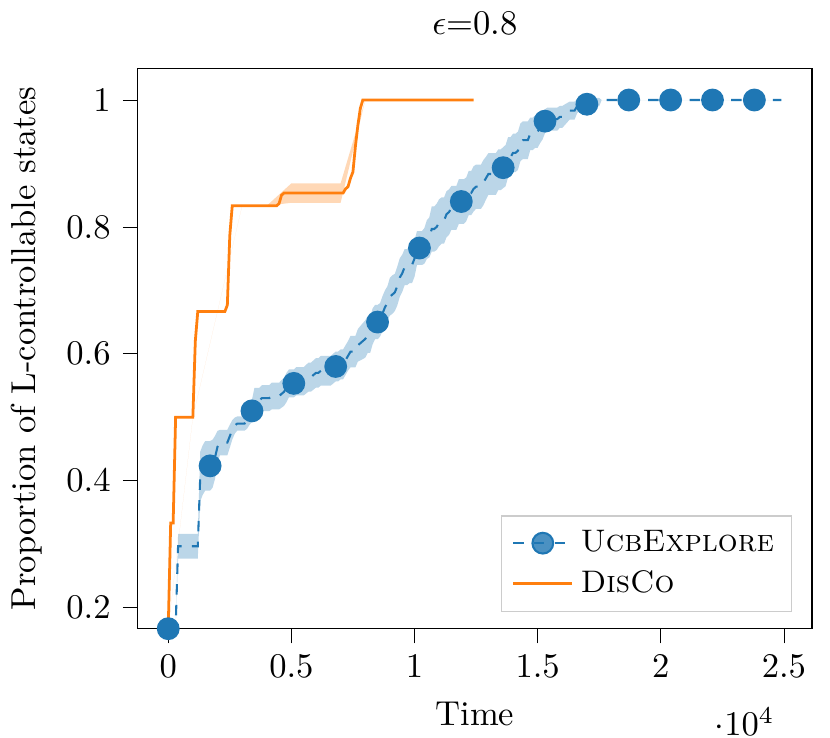}
        \caption{Proportion of the incrementally $L$-controllable states identified by \ALGO and \UcbExplore in a confusing chain domain for $L=4.5$ and $\epsilon \in \{0.1, 0.4, 0.8\}$. Values are averaged over $50$ runs.} %\UcbExplore uses Bernstein confidence intervals for planning, see App.~\ref{app:experiments}}
				\label{fig:3plots}
\end{figure}

% !TEX root = MAIN_AutExp.tex
\section{Conclusion and Extensions}

\textbf{Connections to existing deep-RL  methods.} While we primarily focus the analysis of \ALGO in the tabular case, we believe that the formal definition of \DC problems and the general structure of \ALGO may also serve as a theoretical grounding of many recent approaches to unsupervised exploration. For instance, it is interesting to draw a parallel between \ALGO and the ideas behind Go-Explore~\cite{ecoffet2019go}. Go-Explore similarly exploits the following principles: (1) remember states that have previously been visited, (2) first return to a promising state (without exploration), (3) then explore from it. Go-Explore assumes that the world is deterministic and resettable, meaning that one can reset the state of the simulator to a previous visit to that cell. Very recently \cite{ecoffet2020first}, the same authors proposed a way to relax this requirement by training goal-conditioned policies to reliably return to cells in the archive during the exploration phase. In this paper, we investigated the theoretical dimension of this direction, by provably learning such goal-conditioned policies for the set of incrementally controllable states.

%\todomvout{how about adding "NGU - never give up" to the above connections?}
%\todojout{yes, to put in intro}

\textbf{Future work.} Interesting directions for future investigation include: \textbf{1)} Deriving a lower bound for the \DC problems; %, possibly taking inspiration from the result in the reward-free finite-horizon setting in~\cite{jin2020reward}.
\textbf{2)} Integrating \ALGO into the meta-algorithm \MNM \citep{gajane2019autonomous} which deals with incremental exploration for \DCL in non-stationary environments; \textbf{3)} Extending the problem to continuous state space and function approximation; \textbf{4)} Relaxing the definition of incrementally controllable states and relaxing the performance definition towards allowing the agent to have a non-zero but limited sample complexity of learning a shortest-path policy for any state at test time.
\section*{Broader Impact}

This paper makes contributions to the fundamentals of online learning (RL) and due to its theoretical nature, we see no ethical or immediate societal consequence of our work.

%{\color{red} funding disclosure?    }

%\section*{Resubmission (to put on CMT)}

%\todoj{If your work is a resubmission, describe:
%- the venue and year where this work was submitted
%- a summary of the reviews and the reasons given for rejection (if any)
%- the changes that have been made to the paper since the last submission.}

%An earlier version of this work was submitted at ICML 2020. Reviewers were concerned about a) the motivation of the work, b) understanding our first proposed version of the algorithm, c) the absence of empirical comparison with UcbExplore. In this submission, we believe we addressed all previous concerns: a) We review more thoroughly autonomous exploration mechanisms and recall their effectiveness in sparse-reward and unsupervised RL problems. b) We significantly improved our algorithm, both in terms of presentation and in terms of sample complexity bound (indeed we remove a factor of $L$ compared to the last submission, by deriving a novel analysis of the sample complexity of SSP with a generative model which may be of independent interest). We believe the current algorithm is much simpler than the original version. c) We also added a numerical simulation comparing to UcbExplore.

%\newpage

\bibliographystyle{unsrt}
\bibliography{bibliography}

\newpage
\appendix
% Appendix table of contents
\part{Appendix}
%\parttoc
%

% !TEX root=MAIN_AutExp.tex

\begin{figure}[b]
	\begin{minipage}[t]{0.4\linewidth}
		\begin{tikzpicture}[thick,scale=0.9,rotate=90,transform shape]
		\node[circle,draw=black,fill=white,rotate=-90,transform shape] at (1,2.5) (1) {\scriptsize{$s_0$}}; %(0,0) (1) {};
		\node[circle,fill=black] at (0,1.5) (2) {}; %(1,1) (2) {};
		%\node[circle,fill=black] at (1,0.5) (3) {}; % (1,0) (3) {};
		\node[circle,fill=black] at (2,1.5) (4) {}; % (1,-1) (4) {};
		\node[circle,fill=black] at (0,-1) (5) {}; % (2,1) (5) {};
		%\node[circle,fill=black] at (1,-0.5) (6) {}; % (2,0) (6) {};
		\node[circle,fill=black] at (0,0.5) (2bis) {}; % (2,1) (5) {};
		\node[circle,fill=black] at (2,0.5) (4bis) {}; % (2,1) (5) {};
		\node[circle,fill=black] at (2,-1) (7) {}; % (2,-1) (7) {};
		\node[circle,fill=black] at (2,-2) (8) {}; % (2,-1) (7) {};
		\node[circle,draw=black,fill=white,rotate=-90,transform shape] at (0,-2) (9) {\scriptsize{$x$}}; % (2,-1) (7) {};
		\node[circle,draw=black,fill=white,rotate=-90,transform shape] at (2,-3) (10) {\scriptsize{$y$}}; % (2,-1) (7) {};
		%\node[circle,fill=red] at (1,-1.5) (8) {}; % (3,0) (8) {};
		\begin{scope}[>={Stealth[black]},
		every node/.style={fill=white,circle},
		every edge/.style={draw=gray, thick},
		every loop/.style={draw=gray, thick, min distance=5mm,looseness=5}]
		\path[]
		(1) [->,thick] edge[] node[scale=0.001, text width = 0mm] {} (2)
		%(1) [->,thick] edge[] node[scale=0.001, text width = 0mm] {} (3)
		(1) [->,thick] edge[] node[scale=0.001, text width = 0mm] {} (4)
		(2) [->,thick] edge[] node[scale=0.001, text width = 0mm] {} (2bis)
		(2bis) [->,thick] edge[dashed] node[scale=0.001, text width = 0mm] {} (5)
		%(3) [->,thick] edge[] node[scale=0.001, text width = 0mm] {} (6)
		(4) [->,thick] edge[] node[scale=0.001, text width = 0mm] {} (4bis)
		(4bis) [->,thick] edge[dashed] node[scale=0.001, text width = 0mm] {} (7)
		(5) [->,thick] edge[] node[scale=0.001, text width = 0mm] {} (9)
		%(6) [->,thick] edge[] node[scale=0.001, text width = 0mm] {} (8)
		(7) [->,thick] edge[] node[scale=0.001, text width = 0mm] {} (8)
		(8)[->,thick] edge[] node[scale=0.001, text width = 0mm] {} (10)
		(10)[->,thick] edge[bend left=30] node[scale=0.001, text width = 0mm] {} (9);
		\end{scope}
		\end{tikzpicture}
		%\caption{?}
		%\label{fig_toy_example}
	\end{minipage}%
	\begin{minipage}[t]{0.03\linewidth}
		\hfill
	\end{minipage}%
	\begin{minipage}[t]{0.57\linewidth}
		\vspace{-1in}
		\caption{\small{Let $\mathcal{X} := \{ s_0 \} \cup \{ x \}$ and $\mathcal{Y} := \mathcal{X} \cup \{ y \}$. For any $l \geq 1$, suppose that from $s_0$, the agent reaches $x$ in $l$ steps with probability $1/2$, or reaches $y$ in $l+1$ steps with probability $1/2$. If the goal state is $x$, constraining an agent to use policies restricted to $\mathcal{X}$ (i.e., that reset to $s_0$ outside of $\mathcal{X}$) is detrimental since $x$ can actually be reached in 1 step from $y$. Formally, we can easily prove that $V^{\star}_{\mathcal{X}}(s_0 \rightarrow x) - V^{\star}_{\mathcal{Y}}(s_0 \rightarrow x) = l + 1$, which grows arbitrarily as $l$ increases.}}
    \label{fig_toy}
	\end{minipage}
	%\vspace{-0.2in}
\end{figure}
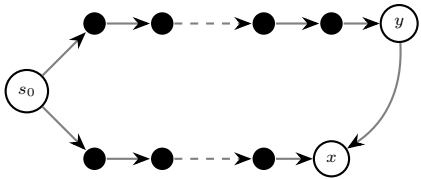

\section{Autonomous Exploration Objectives}
\label{app_objectives}

We recall the two \DC objectives stated in Def.\,\ref{definition_exploration_bound}: for any length $L \geq 1$, error threshold $\epsilon > 0$ and confidence level $\delta \in (0,1)$, the sample complexities $\mathcal{C}_{\DCL}(\mathfrak{A},L,\epsilon,\delta)$ and $\mathcal{C}_{\small\textsc{AX}^{\star}}(\mathfrak{A},L,\epsilon,\delta)$ are defined as the number of time steps required by a learning algorithm $\mathfrak{A}$ to identify a set $\mathcal{K} \supseteq \mathcal{S}_L^{\rightarrow}$ such that with probability at least $1 - \delta$, it has learned a set of policies $\{ \pi_s \}_{s \in \mathcal{K}}$ that respectively verifies %one of the \DC requirements
the following \DC requirement
\begin{itemize}[itemsep=0.1pt,topsep=0pt]
    \item[\textup{(\DCL)}] ~ $ \forall s \in \mathcal{K}, v_{\pi_s}(s_0 \rightarrow s) \leq L + \epsilon$,
    \item[\textup{(\DCstar)}] ~ $\forall s \in \mathcal{K}, v_{\pi_s}(s_0 \rightarrow s) \leq V^{\star}_{\mathcal{S}_L^{\rightarrow}}(s_0 \rightarrow s) + \epsilon.$
\end{itemize}

As we explain in Sect.\,\ref{sect_analysis}, \ALGO (Alg.\,\ref{algorithm_SSP_generative_model}) succeeds in tackling condition \DCstar, whereas \UcbExplore~\cite{lim2012autonomous}, which is designed to tackle condition \DCL, is unable to tackle \DCstar. Note that the algorithmic design of \UcbExplore entails that it computes policies whose value function implicitly targets $V^{\star}_{\mathcal{K}_t}$, with $\mathcal{K}_t$ the \textit{current} set of controllable states. While $V^{\star}_{\mathcal{K}_t}$ is always smaller than $L$, \UcbExplore cannot provide any tightness guarantees w.r.t.\,$V^{\star}_{\mathcal{K}_t}$ since it has no guarantee that the transition dynamics are estimated well enough on $\mathcal{K}_t$. An additional challenge with which \UcbExplore fails to cope is the fact that the set $\mathcal{K}_t$ increases over time and thus unlocks new states and paths, which may be useful to improve its shortest-path policies for previously discovered states.

To better understand this phenomenon, let us introduce an alternative condition \DCprime --- tighter than \DCL, but looser than \DCstar --- which stems from the challenge of not knowing $\cS^{\rightarrow}_L$ in advance. We define \DCprime as follows: for any state $s$ in $\mathcal{S}_L^{\rightarrow}$, the objective is to find a policy that can reach $s$ from $s_0$ in at most $L'+\epsilon$ steps on average, where $L' := \min\{ l \leq L: s \in \mathcal{S}_l^{\rightarrow} \}$, i.e.,
\begin{itemize}[itemsep=0.1pt,topsep=0pt]
    \item[\textup{(\DC')}] ~ $\forall s \in \mathcal{K}, v_{\pi_s}(s_0 \rightarrow s) \leq L'+\epsilon $, where $L' := \min\{ l \leq L: s \in \mathcal{S}_l^{\rightarrow} \}$.
\end{itemize}

As mentioned in \cite[][Corollary~9]{lim2012autonomous}, it is possible to run separate instances of \UcbExplore with increasing $L_n = 1 + n \epsilon$ from $n = 0$ to $\lceil \frac{L-1}{\epsilon} \rceil$ (i.e., until $n$ satisfies $L_{n-1} \leq L \leq L_n$). This verifies the condition \DCprime at the cost of a worsened dependency on both $\epsilon$ and $L$ as follows
%\todoa{Complete the statement.}
%\begin{proposition}[similar to Corollary~9 of \cite{lim2012autonomous}]\label{corollary_increasing_L}
\begin{align*}
\mathcal{C}_{\DC'}(\UcbExplore,L,\epsilon,\delta) = \wt{O}\left(\frac{L^7 S_{\Lpluseps} A}{\epsilon^4} \right).
%\label{eq_corollary_increasing_L}
\end{align*}
%\end{proposition}

While \DCprime is tighter than \DCL, it may be arbitrarily loose compared to \DCstar, which illustrates the intrinsic limitations in \UcbExplore design. \UcbExplore incrementally expands a set of \say{controllable} states $\mathcal{K}$: starting with $\mathcal{K}_0 = \{s_0\}$, at time $t$ a state $s$ is added to $\mathcal{K}_t$ whenever \UcbExplore can confidently assess that it managed to learn a policy reaching $s$ in less than $L$ steps. Since at time~$t$ \UcbExplore can only consider policies restricted to the controllable states $\mathcal{K}_t$, even the shortest-path policy computed to reach $s$ at time $t$ may not be $\epsilon$-optimal w.r.t.\ to the \textit{whole} set $\mathcal{S}_L^{\rightarrow}$. Indeed, every time a state is added to $\mathcal{K}$, this state may unlock new paths which may, for previously controllable states, allow for better shortest-path policies restricted on the updated  $\mathcal{K}$. Fig.\,\ref{fig_toy} illustrates this behavior, where the state $y$ unlocks a fast path from $y$ to $x$ which should be taken in $y$ instead of resetting to $s_0$. Consequently, if the agent seeks to tackle condition \DCstar, it must have the faculty to \textit{backtrack}, i.e., continuously update both its belief of the vicinity ($\mathcal{K}$) and its notion of optimality on the vicinity ($V^{\star}_{\mathcal{K}}$). Unfortunately, \UcbExplore can only compute policies targeting $V^{\star}_{\mathcal{K}}$ with $\mathcal{K}$ the \textit{current} set of controllable states, but it fails to be accurate enough to \textit{revise} such policies as the set of controllable states~$\mathcal{K}$ is expanded over time. In contrast, in virtue of its allocation function $\phi$ (Eq.\,\ref{allocation_function}) which enables to track the number of collected samples as $\mathcal{K}$ increases, \ALGO is able to improve its candidate shortest-path policies during the consolidation step \ding{176} when the \textit{final} set $\mathcal{K}$ is considered.

The following general and simple statement captures how the expansion of the state space of interest may alter and refine the optimal policy restricted on it.

%Unfortunately, \UcbExplore can only guarantee that the estimate of the environment's dynamics is accurate enough to compute policies that are nearly optimal w.r.t.\ $V^{\star}_{\mathcal{K}}$ with $\mathcal{K}$ the \textit{current} set of known states, but it fails to be accurate enough to \textit{revise} such policies as the set of known states $\mathcal{K}$ is expanded over time. In contrast, \ALGO is able to do so with its step \ding{176} in virtue of the allocation function $\phi$ (Eq.\,\ref{allocation_function}).

%The (possibly arbitrary) gap between \DCprime and \DCstar is justified in the following general lemma.

\begin{lemma}\label{lemma_1}
	For any two sets $\mathcal{X} \subseteq \mathcal{Y}$ % \subset \mathcal{S}$
	and any state $x \in \mathcal{X}$, we have $V^{\star}_{\mathcal{X}}(s_0 \rightarrow x) \geq V^{\star}_{\mathcal{Y}}(s_0 \rightarrow x)$. Moreover, the gap between the two quantities may be arbitrarily large.
\end{lemma}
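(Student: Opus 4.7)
The statement splits cleanly into an inequality and a tightness claim; both are essentially immediate given the definitions, so the main task is to articulate them clearly rather than to perform any delicate estimation.

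For the inequality $V^{\star}_{\mathcal{X}}(s_0 \rightarrow x) \geq V^{\star}_{\mathcal{Y}}(s_0 \rightarrow x)$, the plan is to invoke a simple set-inclusion argument on the policy classes. By the definition of a policy restricted on a subset, any $\pi \in \Pi(\mathcal{X})$ satisfies $\pi(s) = \RESET$ for every $s \notin \mathcal{X}$. Since $\mathcal{X} \subseteq \mathcal{Y}$, every state outside $\mathcal{Y}$ is also outside $\mathcal{X}$, so the restriction condition defining $\Pi(\mathcal{Y})$ is automatically satisfied by any $\pi \in \Pi(\mathcal{X})$. Hence $\Pi(\mathcal{X}) \subseteq \Pi(\mathcal{Y})$, and the claimed inequality follows because $V^{\star}_{\mathcal{S}'}(s_0 \rightarrow x)$ is defined as a minimum over $\Pi(\mathcal{S}')$ of the same functional $v_{\pi}(s_0 \rightarrow x)$, and minimising over a larger set can only decrease the value.

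For the tightness part, the plan is to exhibit the explicit two-parameter family depicted in Fig.\,\ref{fig_toy} and show that its gap scales linearly with the integer parameter $l \geq 1$. Take $\mathcal{X} := \{s_0, x\}$ and $\mathcal{Y} := \mathcal{X} \cup \{y\}$, with the dynamics described in the caption of Fig.\,\ref{fig_toy}: from $s_0$ the environment evolves to $x$ in $l$ steps with probability $1/2$ and to $y$ in $l+1$ steps with probability $1/2$, and from $y$ there is a deterministic one-step transition to $x$. Under $\Pi(\mathcal{Y})$ the optimal policy simply follows the dynamics all the way, giving $V^{\star}_{\mathcal{Y}}(s_0 \rightarrow x) = \tfrac{1}{2} l + \tfrac{1}{2}(l+2) = l+1$. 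Under $\Pi(\mathcal{X})$ the policy must take $\RESET$ at $y$, yielding a one-step recursion $V^{\star}_{\mathcal{X}}(s_0 \rightarrow x) = \tfrac{1}{2} l + \tfrac{1}{2}(l+1 + 1 + V^{\star}_{\mathcal{X}}(s_0 \rightarrow x))$, which solves to $2l+2$. Subtracting gives a gap of $l+1$, which can be made arbitrarily large by choosing $l$ large.

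I do not anticipate any real obstacle: the first part is a one-line observation about the monotonicity of $\min$ over nested policy classes, and the second part is a routine first-step analysis of the hitting-time recursion in a three-state example. The only potentially confusing point is double-counting the $\RESET$ step when writing the recursion for $V^{\star}_{\mathcal{X}}$, which I would address by explicitly listing the length contributions ($l+1$ steps to reach $y$, one $\RESET$ step, and then $V^{\star}_{\mathcal{X}}(s_0 \rightarrow x)$ from $s_0$ again).
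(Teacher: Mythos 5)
Your proof is correct and follows essentially the same route as the paper: the inequality is the one-line observation that $\Pi(\mathcal{X}) \subseteq \Pi(\mathcal{Y})$ (which the paper attributes to the presence of the \RESET action, Asm.\,\ref{assumption_reset}), and the unboundedness of the gap is witnessed by exactly the example of Fig.\,\ref{fig_toy}, whose claimed value $V^{\star}_{\mathcal{X}}(s_0 \rightarrow x) - V^{\star}_{\mathcal{Y}}(s_0 \rightarrow x) = l+1$ you verify explicitly via the hitting-time recursion. Your computation ($V^{\star}_{\mathcal{Y}} = l+1$, $V^{\star}_{\mathcal{X}} = 2l+2$) matches the figure caption, so the only difference is that you spell out details the paper leaves implicit.
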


%\vspace{-0.15in}
\begin{proof}
	The inequality is immediate from Asm.\,\ref{assumption_reset}. Fig.\,\ref{fig_toy} shows the gap may be arbitrarily large.
	%The inequality is immediate from Asm.\,\ref{assumption_reset}, while Fig.\,\ref{fig_toy_example} shows that the gap may be arbitrarily large.
\end{proof}

%While \UcbExplore is not designed to tackle \DCprime, it is possible to run separate instances of \UcbExplore with increasing $L_n = 1 + n \epsilon$ from $n = 0$ to $\lceil \frac{L-1}{\epsilon} \rceil$ (i.e., until $n$ satisfies $L_{n-1} \leq L \leq L_n$). This verifies condition \DCprime at the cost of a worsened dependency on both $\epsilon$ and $L$.
%

Finally, we summarize all the sample complexity results in Tab.\,\ref{tab:comparison}.

\begin{table*}
	\begin{minipage}[t]{0.7\linewidth}
		\centering % center the table
		{\renewcommand{\arraystretch}{2}
			\begin{tabular}{|c|c|c|} % alignment of each column data
				\hline%\toprule[\heavyrulewidth]%\toprule[\heavyrulewidth]
				\textbf{\DC} & \textbf{\UcbExplore} \cite{lim2012autonomous} & \textbf{\ALGO} (Alg.\,\ref{algorithm_SSP_generative_model}) \\
				\hline
				\DCL & \small{$\wt{O}\left( \displaystyle\frac{L^6 S_{\Lpluseps} A}{\epsilon^3} \right)$} & \multirow{3}{*}{$\wt{O}\left( \displaystyle\frac{L^5 \Gamma_{\Lpluseps} S_{\Lpluseps} A}{\epsilon^2} + \frac{L^3 S_{\Lpluseps}^2 A}{\epsilon}\right)$} \\
				\cline{1-2}
				\DCprime & \small{$\wt{O}\left( \displaystyle\frac{L^7 S_{\Lpluseps} A}{\epsilon^4} \right)$} &  \\
				\cline{1-2}
				\DCstar & \textit{Unable} &  \\
				\hline
			\end{tabular}
		}
	\end{minipage}%
	\begin{minipage}[t]{0.3\linewidth}
		%\small % text size of table content
		\vspace{-0.5in}
		\caption{Comparison between the sample complexity of \UcbExplore and \ALGO, depending on the condition \DCL, \DCprime or \DCstar.}
		\label{tab:comparison}
	\end{minipage}
\end{table*}

\iffalse
\begin{definition}[\DCprime sample complexity]\label{definition_exploration_bound.alternative}
	Fix any length $L \geq 1$, error threshold $\epsilon > 0$ and confidence level $\delta \in (0,1)$. % the $(L, \epsilon)$-exploration steps are the complement of the set which contains the time steps at which
	The sample complexity $\mathcal{C}_{\DC'}(\mathfrak{A},L,\epsilon,\delta)$ is defined as the number of time steps required by a learning algorithm $\mathfrak{A}$ to identify a set $\mathcal{K} \supseteq \mathcal{S}_L^{\rightarrow}$ such that with probability at least $1 - \delta$, it has learned a set of policies $\{ \pi_s \}_{s \in \mathcal{K}}$ that verifies the following requirement:
	\begin{itemize}[itemsep=0.1pt,topsep=0pt]
	    \item[\textup{(\DCprime)}] ~ $\forall s \in \mathcal{K}, v_{\pi_s}(s_0 \rightarrow s) \leq V^{\star}_{\mathcal{Z}^{\rightarrow}_s}(s_0 \rightarrow s) + \epsilon$,
	\end{itemize}
where we introduce the set $\mathcal{Z}^{\rightarrow}_s := \left\{ s' \in \mathcal{S}_L^{\rightarrow}: V^{\star}_{\mathcal{S}_L^{\rightarrow}}(s_0 \rightarrow s') \leq V^{\star}_{\mathcal{S}_L^{\rightarrow}}(s_0 \rightarrow s) \right\}$.\todojout{I changed the def of \DCprime, is it correct now? }
%\cup_{\prec} \{s' \in \mathcal{S}_L^{\prec}: s' \prec s
\end{definition}
\fi

% !TEX root=MAIN_AutExp.tex

\section{Efficient Computation of Optimistic SSP Policy}
\label{app_blackbox_computation_optimistic_SSP_policy}

In this section we recall from \cite{tarbouriech2019no, cohen2020near} how to efficiently compute an optimistic stochastic shortest-path (SSP) policy.
% The procedure, denoted by \OVISSP, can be written as
% \textcolor{red}{(explain better)}
% \begin{align*}
%     &({\small\texttt{optimistic vector}}, {\small\texttt{optimistic policy})} \\
%     &:= \OVI\left(\small{\texttt{goal}} = s^{\dagger}, ~\small{\texttt{non-goals}} = \cS^{\dagger}, ~\small{\texttt{samples}} = N,~\small{\texttt{costs}} = c, ~\small{\texttt{precision}} = \gamma \right).
% \end{align*}
%
% It performs two steps. First, it applies the procedure in App.\,\ref{app_opt_model} to obtain an optimistic model as follows
% \begin{align*}
%     {\small\texttt{optimistic model}} := \OptMod\left(\small{\texttt{goal}} = s^{\dagger}, ~ \small{\texttt{non-goals}} = \cS^{\dagger}, ~\small{\texttt{samples}} = N \right).
% \end{align*}
% Second, it applies the value iteration scheme of App.\,\ref{app_value_iteration_SSP} in such optimistic model as follows.
% \begin{align*}
%     &({\small\texttt{optimistic vector}}, {\small\texttt{optimistic policy}}) \\ &:= \VISSPmath\left(\small{\texttt{goal}} = s^{\dagger}, ~\small{\texttt{non-goals}} = \cS^{\dagger}, ~\small{\texttt{model}} = {\small\texttt{optimistic model}},~\small{\texttt{costs}} = c, ~\small{\texttt{precision}} = \gamma \right).
% \end{align*}

\subsection{Computation of Optimal Policy in Known SSP}
\label{app_value_iteration_SSP}

\iffalse
\begin{figure}[tb]
        \centering
        %\vspace{-0.2in}
        \begin{tikzpicture}[scale=1.2,every node/.style={transform shape},font=\sffamily,>=triangle 45]
        \node [circuit] (item) {\VISSP};
        \matrix[
            matrix of nodes,
            left= of item,
            row sep=\myheight/25,
            nodes={anchor=east}
            ] (rightmatr) {
            goal state $s^{\dagger}$\\
            non-goal states $S^{\dagger}$\\
            model $p$\\
            costs $c \geq c_{\min} > 0$\\
            \VI precision $\gamma > 0$\\
        };
        \matrix[
            matrix of nodes,
            right= of item,
            row sep=\myheight/6,
            nodes={anchor=west}
            ] (leftmatr) {
            value vector $u$ \\
            policy $\pi$ \\
        };
        \foreach \i in {1,2,3,4,5}
            \draw [->] (rightmatr-\i-1) -- (rightmatr-\i-1 -| item.west);
        \foreach \i in {1,2}
            \draw [<-] (leftmatr-\i-1) -- (leftmatr-\i-1 -| item.east);
        \end{tikzpicture}
        \caption{Visual diagram of the Value Iteration (\VI) scheme for a known SSP.}
        \label{fig_VI_SSP}
    \end{figure}
\fi

This section details the procedure to efficiently compute an (arbitrarily near-) optimal policy $\pi$ in a \textit{known} SSP instance with positive costs and which admits at least one proper policy. Recall that a \textit{proper policy} is a policy whose execution starting from any non-goal state eventually reaches the goal state with probability one \cite{bertsekas1995dynamic}.%\todompout{And also $c_{\min} >0$, right?}\todojout{yes}

%    \tikz{
%            \node[text width=\textwidth, fill=red!10]{
\begin{definition}[SSP-MDP]\label{def:ssp.mdp.app}
                An SSP-MDP is an MDP $M = (\mathcal{S}^\dagger, \mathcal{A}, s^\dagger, p, c)$ where $\mathcal{S}^\dagger$ is the set of non-goal states with $|\mathcal{S}^\dagger| = S^\dagger$, $\mathcal{A}$ is the set of actions, $p$ is the transition function and $c$ is the cost function. The goal state $s^\dagger \notin \mathcal{S}^\dagger$ is zero-cost and absorbing, i.e., $p(s^\dagger|s^\dagger,a) = 1$ and $c(s^\dagger,a) =0$ for any $a \in \mathcal{A}$.
\end{definition}

The (possibly unbounded) \textit{value function} (also called expected cost-to-go) of any policy $\pi \in \Pi$ starting from state $s_0$ is defined as
\begin{align*}
    V^{\pi}(s_{0}) := \mathbb{E}\bigg[ \sum_{t = 1}^{+\infty} c(s_{t}, \pi(s_t)) \,\Big\vert\,s_{0}\bigg] = \mathbb{E}\bigg[ \sum_{t = 1}^{\tau_{\pi}(s_0 \rightarrow s^{\dagger})} c(s_{t}, \pi(s_t)) \,\Big\vert\,s_{0}\bigg].
    % \label{eq:SSP_objective_function}
\end{align*}

\begin{assumption}\label{asm:ssp.mdp.app}
        We restrict the attention to SSP-MDP $M$ (see Def.~\ref{def:ssp.mdp.app}) such that, for any $(s,a) \in \mathcal{S}^\dagger \times \mathcal{A}$, $c(s,a) \in [c_{\min},1]$ with $c_{\min} > 0$. (Note that having positive costs ensures that for any non-proper policy $\pi$ there exists a state $s$ with $V^\pi(s) = +\infty$.) Moreover, we assume that there exists at least one proper policy (i.e., that reaches the goal state $s^{\dagger}$ with probability one starting from any state in $\mathcal{S}^\dagger$).
\end{assumption}
%    };
%}
%\todompout{{\color{red} TODO DEFINE WHAT DOES IT MEAN PROPER OR REFER TO BERTSEKAS}}
%\todojout{it's defined above (should we make it formal?)}

    The procedure \VISSP considers the following inputs: a goal $s^{\dagger}$, non-goal states $\cS^{\dagger}$, a known model $p$ and a known cost function $c$, with (non-goal) costs lower bounded by $c_{\min} > 0$. \VISSP outputs a vector $u$ (of size $\abs{\cS^{\dagger}}$) and a policy $\pi$ which is greedy w.r.t.\,the vector $u$.

The optimal Bellman operator is defined as follows for any vector $u$ and any non-goal state $s \in \cS^{\dagger}$
\begin{align*}
    \mathcal{L}u(s) := \min_{a \in \cA} \Big\{ c(s,a) + \sum_{s' \in  \cS^{\dagger}} p(s' \vert s,a) u(s') \Big\}.
\end{align*}
Note that by definition, $V^\pi(s^\dagger) = 0$ for any $\pi$.
We perform a value iteration (\VI) scheme over this operator as explained in~\cite[e.g.,][]{bertsekas2013stochastic,bonet2007speed,tarbouriech2019no}. Namely, we consider initial vector $u_0 := 0$ and set iteratively $u_{i+1} := \mathcal{L} u_{i}$ (see Alg.~\ref{alg:vi.ssp}). For a predefined \VI precision $\gamma > 0$, the stopping condition is reached for the first iteration $j$ such that $\norm{u_{j+1} - u_{j} }_{\infty} \leq \gamma$. The policy is then selected to be the greedy policy w.r.t.\,the vector $u := u_j$, i.e.,
\begin{align}
        \forall s \in \cS^\dagger \cup \{s^\dagger\}, \quad \pi(s) \in \argmin_{a \in \cA} \Big\{ c(s,a) + \sum_{s' \in \cS^\dagger} p(s' \vert s,a) u(s') \Big\}.
\label{eq_SSP_policy}
\end{align}
Importantly, while $u$ is \textit{not} the value function of $\pi$, both quantities can be related according to the following lemma.

%\footnote{Note that, if there are no-self loops (this assumption is not restrictive because a model can be \say{converted} into an equivalent model that satisfies the no-self-loop assumption as shown in \cite{bertsekas1995dynamic}), the value iteration algorithm has been proved in \cite{bonet2007speed} to converge in polynomial time, where the polynomial is a function of the quantity $\norm{V^{\star}}_{\infty} / c_{\min}$ where $V^{\star}$ is the optimal value function and $c_{\min} > 0$ is the minimal instantaneous cost.}

\begin{algorithm}[t]
            \DontPrintSemicolon
            \caption{\VISSP}
            \label{alg:vi.ssp}
            \KwIn{Non-goal states $\cS^\dagger$, action set $\cA$, transitions $p$, costs $c$ and accuracy $\gamma$}
            \KwOut{Value vector $u$ and greedy policy $\pi$}
            Define $\mathcal{L}u(s) := \min_{a \in \cA} \Big\{ c(s,a) + \sum_{s' \in  \cS^{\dagger}} p(s' \vert s,a) u(s') \Big\}$\;
            Set $u_0 = \boldsymbol{0}_{S^\dagger}$ and $j=0$\;
            $u_1 = \mathcal{L}u_0$\;
            \While{$\|u_{j+1} - u_j\|_{\infty} > \gamma$}{
                    $u_{j+1} = \mathcal{L} u_j$
            }
            Set $u := u_j$ and $\pi(s)  \in \argmin_{a \in \cA} \Big\{ c(s,a) + \sum_{s' \in \cS^\dagger} p(s' \vert s,a) u(s') \Big\}$ for any $s \in \cS^\dagger \cup \{s^\dagger\}$
    \end{algorithm}

\begin{lemma}\label{lemma_app_value_iteration_SSP}
        Consider an SSP-MDP $M = (\mathcal{S}^\dagger, \cA, s^\dagger, p, c)$ defined as in Def.~\ref{def:ssp.mdp.app} and satisfying Asm.~\ref{asm:ssp.mdp.app}. Let $(u,\pi) = \text{\VISSP}(\mathcal{S}^\dagger, \cA, p, c, \gamma)$ be the solution computed by \VISSP.
Denote by $V^\pi$ the true value function of $\pi$ and by $V^\star = V^{\pi^\star} = \mathcal{L}V^\star$ the optimal value function.
% Denote by $V^{\pi}$ the value function of $\pi$, the greedy policy w.r.t.\,the vector $v_j$ which is computed from the \VI procedure described above.
The following component-wise inequalities hold
\begin{itemize}
    \item $u \leq V^{\star} \leq V^{\pi}$.
    \item If the \VI precision level verifies $\gamma \leq \frac{c_{\min}}{2}$, then $V^{\pi} \leq \left(1 + \frac{2 \gamma}{c_{\min}}\right) u$.
\end{itemize}
\end{lemma}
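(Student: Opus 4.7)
The plan is to handle the two claims separately, leveraging monotonicity of the Bellman operators together with the crucial assumption $c_{\min} > 0$. For the first claim, $V^\star \leq V^\pi$ is immediate from the optimality of $V^\star$. For $u \leq V^\star$, I would argue by induction on the \VI iterates: since costs are nonnegative, $u_0 = \boldsymbol{0} \leq V^\star$; assuming $u_j \leq V^\star$, the monotonicity of $\mathcal{L}$ and the Bellman fixed-point equation $\mathcal{L} V^\star = V^\star$ (valid under Asm.~\ref{asm:ssp.mdp.app} via~\cite{bertsekas1995dynamic}) yield $u_{j+1} = \mathcal{L} u_j \leq \mathcal{L} V^\star = V^\star$. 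Taking $j$ to be the stopping iteration gives $u \leq V^\star$.

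For the second claim, introduce the Bellman operator of $\pi$ restricted to $\cS^\dagger$, namely $T^\pi v := c^\pi + P^\pi v$ with $c^\pi(s) := c(s,\pi(s))$ and $P^\pi$ the induced sub-stochastic transition matrix on $\cS^\dagger$. Since $\pi$ is chosen greedy w.r.t.\,$u = u_j$ (see Eq.~\ref{eq_SSP_policy}), one has $T^\pi u = \mathcal{L} u = u_{j+1}$, so the stopping criterion $\|u_{j+1} - u_j\|_\infty \leq \gamma$ rewrites componentwise as
\begin{align*}
c^\pi - \gamma \mathbf{1} \leq (I - P^\pi) u.
\end{align*}
The main obstacle in what follows is that $(I - P^\pi)^{-1}$ is a priori not defined: to exploit the above inequality we must first argue that $\pi$ is proper. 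To do this, I would left-multiply by the truncated Neumann series $\sum_{k=0}^{N-1} (P^\pi)^k \geq 0$ and telescope, getting
\begin{align*}
\sum_{k=0}^{N-1} (P^\pi)^k (c^\pi - \gamma \mathbf{1}) \leq u - (P^\pi)^N u \leq u \leq V^\star < \infty,
\end{align*}
where $(P^\pi)^N u \geq 0$ follows from $u \geq 0$ and the first claim bounds $u$ by the finite $V^\star$ (finite thanks to the existence of a proper policy). When $\gamma \leq c_{\min}/2$, the left-hand side dominates $(c_{\min}/2) \sum_{k=0}^{N-1} (P^\pi)^k \mathbf{1}$; letting $N \to \infty$ shows that the expected hitting-time vector $\tau^\pi := \sum_{k \geq 0} (P^\pi)^k \mathbf{1}$ is finite componentwise, i.e., $\pi$ is proper.

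Once properness is established, $(I - P^\pi)^{-1} = \sum_{k \geq 0} (P^\pi)^k$ exists and is componentwise nonnegative, so left-multiplying $c^\pi - \gamma \mathbf{1} \leq (I - P^\pi) u$ by it yields $V^\pi - \gamma \tau^\pi \leq u$. To eliminate $\tau^\pi$, I would use the fact that every non-goal step costs at least $c_{\min}$, so that $V^\pi = (I - P^\pi)^{-1} c^\pi \geq c_{\min} \tau^\pi$, hence $\tau^\pi \leq V^\pi / c_{\min}$. Combining these gives $(1 - \gamma/c_{\min}) V^\pi \leq u$, and for $\gamma \leq c_{\min}/2$ the elementary estimate $1/(1-x) \leq 1 + 2x$ on $x \in [0,1/2]$ concludes with $V^\pi \leq (1 + 2\gamma/c_{\min}) u$.
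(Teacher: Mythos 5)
Your proof is correct, and its overall skeleton matches the paper's: both arguments obtain $u \leq V^\star \leq V^\pi$ from $u_0 = 0$, monotonicity of $\mathcal{L}$, and the fixed-point equation, and both reduce the second claim to the inequality $\left(1 - \gamma/c_{\min}\right) V^\pi \leq u$ via the stopping condition $\mathcal{L}u \leq u + \gamma$ and the bound $c_{\min}\,\mathbb{E}[\tau_\pi] \leq V^\pi$, finishing with $\frac{1}{1-x} \leq 1+2x$. Where you genuinely diverge is in how the key intermediate step --- that the greedy policy $\pi$ is proper and that its value under the costs reduced by $\gamma$ lies below $u$ --- is justified. The paper introduces the perturbed operator $\mathcal{T}^\pi_\gamma$, iterates it using monotonicity, and appeals to the asymptotic convergence of policy evaluation in SSPs (citing Bertsekas) together with its quoted Bertsekas--Tsitsiklis result (Lem.\,\ref{lemma_technical_lemma_ssp}) to pass to the limit. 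You instead work with the sub-stochastic matrix $P^\pi$ directly: the truncated Neumann series applied to $c^\pi - \gamma\mathbf{1} \leq (I-P^\pi)u$ telescopes to a uniform bound on $\sum_{k=0}^{N-1}(P^\pi)^k\mathbf{1}$, which yields properness of $\pi$ from first principles, after which $(I-P^\pi)^{-1}$ can be applied legitimately. In effect you re-prove the first bullet of Lem.\,\ref{lemma_technical_lemma_ssp} inline rather than citing it; this buys a self-contained and arguably more transparent argument (it makes explicit why the a priori possibly improper greedy policy must in fact be proper), at the cost of being tied to the finite-state matrix formulation. The only cosmetic omissions are the one-line justifications that $u = u_j \geq 0$ (immediate since $u_0 = 0$ and costs are nonnegative, so the iterates are monotonically nondecreasing) and that $V^\star$ is finite because a proper policy in a finite SSP has geometrically decaying hitting-time tails; neither is a gap.
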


%\todojout{PLEASE CAREFULLY CHECK LEM.\,\ref{lemma_app_value_iteration_SSP}, AS IT'S LEVERAGED A LOT IN THE ANALYSIS}

\begin{proof}
The result can be obtained by adapting \citep[][Lem.~4 \& App.\,E]{tarbouriech2019no}. For the first inequality, given that we consider the initial vector $u_0 = 0$, we know that $0 \leq V^{\star}$ with $V^{\star} = \mathcal{L} V^{\star}$ by definition. By monotonicity of the operator $\mathcal{L}$ \citep{puterman2014markov,bertsekas1995dynamic}, we obtain $u_j \leq V^{\star} \leq V^{\pi}$. As for the second inequality, we introduce the following Bellman operators of a deterministic policy $\pi$ for any vector $u$ and state $s$,
\begin{align*}
    \mathcal{L}^{\pi} u(s) &:= c(s, \pi(s)) + \sum_{s' \in \cS} p(s' \vert s, \pi(s)) u(s'), \\
    \mathcal{T}^{\pi}_{\gamma} u(s) &:= \underbrace{c(s, \pi(s)) - \gamma}_{>0} + \sum_{s' \in \cS} p(s' \vert s, \pi(s)) u(s').
\end{align*}
Note that the SSP problem defined by the operator $\mathcal{T}^{\pi}_{\gamma}$ satisfies Asm.\,\ref{asm:ssp.mdp.app} since i) it has positive costs due to the condition $\gamma \leq \frac{c_{\min}}{2}$ and ii) the fact that $M$ satisfies Asm.\,\ref{asm:ssp.mdp.app} guarantees the existence of at least one proper policy in the model $p$. We can write component-wise
\begin{align*}
    \mathcal{T}^{\pi}_{\gamma} u_j = \mathcal{L}^{\pi} u_j - \gamma \myeqa \mathcal{L} u_j - \gamma \myineeqb u_j,
\end{align*}
where (a) uses that $\pi$ is the greedy policy w.r.t.\,$u_j$ and (b) stems from the chosen stopping condition which yields $\mathcal{L}u_j \leq u_j + \gamma$. By monotonicity of the operator $\mathcal{T}^{\pi}_{\gamma}$, we have for all $m > 0$, $(\mathcal{T}_{\gamma}^{\pi})^m u_j \leq u_j$. The asymptotic convergence of the operator in an SSP problem satisfying Asm.\,\ref{asm:ssp.mdp.app} (see e.g.,~\cite[][Prop.\,2.2.1]{bertsekas1995dynamic}) guarantees that taking the limit $m \rightarrow + \infty$ yields $W_{\gamma}^{\pi} \leq u_j$, where $W_{\gamma}^{\pi}$ is defined as the value function of policy $\pi$ in the model $p$ with $\gamma$ subtracted to all the costs, i.e.,
%\todompout{Here we should add a reference to a theorem stating the asymptotic convergence in SSP problem and carefully explain that we match the requirements/assumptions of the theorem}%\todojout{Prop.\,2.2.1 in \cite{bertsekas1995dynamic}}
\begin{align*}
    W_{\gamma}^{\pi}(s) := \mathbb{E}\left[ \sum_{t=1}^{\tau_{\pi}(s)} \left(c(s_t, \pi(s_t) - \gamma\right) \vert s_1 = s \right] = V^{\pi}(s) - \gamma \mathbb{E}\left[ \tau_{\pi}(s)\right],
\end{align*}
where $\tau_{\pi}(s)$ denotes the (random) hitting time of policy $\pi$ to reach the goal starting from state $s$. Moreover, we have $c_{\min} \mathbb{E}\left[ \tau_{\pi}(s)\right] \leq V^{\pi}(s) \leq c_{\max} \mathbb{E}\left[ \tau_{\pi}(s)\right]$. Putting everything together, we thus get $\left( 1 - \frac{\gamma}{c_{\min}} \right) V^{\pi} \leq u_j$. Since $\gamma \leq \frac{c_{\min}}{2}$, we ultimately obtain
\begin{align*}
    V^{\pi} \leq \frac{1}{1 - \frac{\gamma}{c_{\min}}} u_j \leq \left(1 + \frac{2 \gamma}{c_{\min}}\right) u_j,
\end{align*}
where the last inequality uses the fact that $\frac{1}{1-x} \leq 1+2x$ holds for any $0 \leq x \leq \frac{1}{2}$.
\end{proof}

%%%%%%%%%%%%%%%%%%%%%%%%%%%%%%%%%%%%%%%%%%%%%%%%%%%%%%%%%%%%%%%%%%%%%%%%%%%%%%%%%%%%%%%%%%%%%%%%%%%%%%%%%%%%%%%%%%%%%%%%%%%%%%%%%%%%%%%%%%%%%%%%%%%%%%%%%%%%%%%%%%%%%%%%%%%%%%%%%%%%%%%%%%%%%%%%%%%%%%%%%%%%%%%%%%%%%%%%%%%%%%%%%%%%%%%%%%%%%%%%%%%%%%%%%%%%%%%%%%%%%%%%%%%%%%%%%%%%%%%%%%%%%%%%%%%%%%%%%%%%%%%%%%%%%%%%%%%%%%%%%%%%%%%%%%%%%%%%%%%%%%%%%%%%%%%%%%%%%%%%%%%%%%%%%%%%%%%%%%%%%%%%%%%%%%%%%%%%%%%%%%%%%%%%%%%%%%%%%%%%%%%%%%%%%%%%%%%%%%%%%%%%%%%%%%%%%%%%%%%%%%%%%%%%%%%%%%%%%%%%%%%%%%%%%%%%%%%%%%%%%%%%%%%%%%%%%%%%%%%%%%%%%%%%%%%%%%%%%%%%%%%%%%%%%%%%%%%%%%%%%%%%%%%%%%%%%%%%%%%%%%%%%%%%%%%%%%%%%%%%%%%%%%%%%%%%%%%%%%%%%%%%%%%%%%%%%%%%%%%%%%%%%%%%%%%%%%%%%%%%%%%%%%%%%%%%%%%%%%%%%%%%%%%%%%%%%%%%%%%%%%%%%%%%%%%%%%%%%%%%%%%%%%%%%%%%%%%%%%%%%%%%%%%%%%%%%%%%%%%%%%%%%%%%%%%%%%%%%%%%%%%%%%%%%%%%%%%%%%%%%%%%%%%%%%%%%%%%%%%%%%%%%%%%%%%%%%%%%%%%%%%%%%%%%%%%%%%%%%%%%%%%%%%%%%%%%%%%%%%%%%%%%%%%%%%%%%%%%%%%%%%%%%%%%%%%%%%%%%%%%%%%%%%%%%%%%%%%%%%%%%%%%%%%%%%%%%%%%%%%%%%%%%%%%%%%%%%%%%%%%%%%%%%%%%%%%%%%%%%

\subsection{Computation of Optimistic Model in Unknown SSP}
\label{app_opt_model}

Consider an SSP problem $M$ defined as in Asm.~\ref{asm:ssp.mdp.app}.
%Let us consider an unknown SSP instance with goal $s^{\dagger}$ and $S^{\dagger}$ non-goal states $\cS^{\dagger}$.
Consider that, at any given stage of the learning process, the agent is equipped with $N(s,a)$ samples at each state-action pair. A method to compute an optimistic model $\wt{p}$ is provided in \cite{cohen2020near}, which we recall below.

Denote by $\wh{p}$ the current empirical average of transitions: $\wh{p}(s'|s,a) = N(s,a,s') / N(s,a)$, and set $\widehat{\sigma}^2(s'\vert s,a) := \wh{p}(s'|s,a) (1 - \wh{p}(s'|s,a))$ as well as $N^{+}(s,a) := \max \{1,N(s,a) \} $. For any $(s,a,s') \in \cS^{\dagger} \times \mathcal{A} \times \cS^{\dagger}$, the empirical Bernstein inequality \cite{audibert2007tuning, maurer2009empirical} is leveraged to select the following confidence intervals (with probability at least $1-\delta$) on the transition probabilities
\begin{align*}
    %\beta(s,a,s') := 4 \sqrt{ \frac{\widehat{p}(s'\vert s,a)}{N^{+}(s,a)} \log\left(\frac{S A N^{+}(s,a)}{\delta} \right)} + \frac{28 \log\left(\frac{S A N^{+}(s,a)}{\delta} \right)}{N^{+}(s,a)},
    \beta(s,a,s') := 2 \sqrt{ \frac{\widehat{\sigma}^2(s'\vert s,a)}{N^{+}(s,a)} \log\left(\frac{2 S A N^{+}(s,a)}{\delta} \right)} + \frac{6 \log\left(\frac{2 S A N^{+}(s,a)}{\delta} \right)}{N^{+}(s,a)},
\end{align*}
and $\beta(s,a,s^{\dagger}) := \sum_{s' \in \cS^{\dagger}} \beta(s,a,s')$.
%The confidence intervals are constructed using the empirical Bernstein inequality (\citealp{audibert2007tuning, maurer2009empirical}), thus guaranteeing that $M_k^{\dagger} \in \mathcal{M}_k^{\dagger}$ with high probability as shown in the following lemma.
The selection of the optimistic model $\wt{p}$ is as follows: the probability of reaching the goal $s^{\dagger}$ is maximized at every state-action pair, which implies minimizing the probability of reaching all other states and setting them at the lowest value of their confidence range. Formally, we set for all $(s,a,s') \in \cS^{\dagger} \times \mathcal{A} \times \cS^{\dagger}$,
\begin{align*}
    \wt{p}(s' \vert s,a) := \max\Big\{ \wh{p}(s' \vert s,a) - \beta(s,a,s') , ~ 0 \Big\},
%\label{eq_optimistic_model}
\end{align*}
and $\wt{p}(s^{\dagger} \vert s,a) := 1 - \sum_{s' \in \cS^{\dagger}} \wt{p}(s' \vert s,a)$.

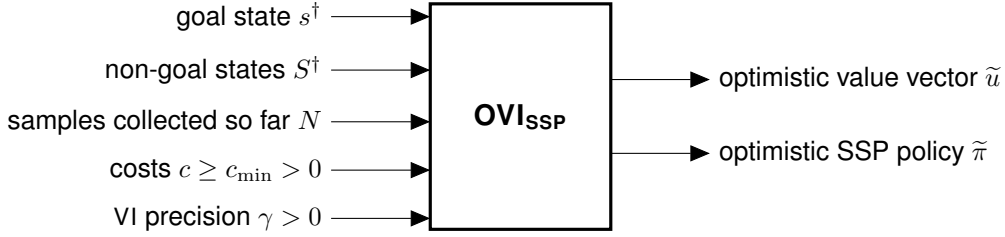
\begin{figure}[t]
        \centering
        %\vspace{-0.2in}
        \begin{tikzpicture}[scale=1.2,every node/.style={transform shape},font=\sffamily,>=triangle 45]
        \node [circuit] (item) {\OVISSP};
        \matrix[
            matrix of nodes,
            left= of item,
            row sep=\myheight/25,
            nodes={anchor=east}
            ] (rightmatr) {
            goal state $s^{\dagger}$\\
            non-goal states $S^{\dagger}$\\
            samples collected so far $N$\\
            costs $c \geq c_{\min} > 0$\\
            \VI precision $\gamma > 0$\\
        };
        \matrix[
            matrix of nodes,
            right= of item,
            row sep=\myheight/6,
            nodes={anchor=west}
            ] (leftmatr) {
            optimistic value vector $\wt u$ \\
            optimistic SSP policy $\wt \pi$ \\
        };
        \foreach \i in {1,2,3,4,5}
            \draw [->] (rightmatr-\i-1) -- (rightmatr-\i-1 -| item.west);
        \foreach \i in {1,2}
            \draw [<-] (leftmatr-\i-1) -- (leftmatr-\i-1 -| item.east);
        \end{tikzpicture}
        \caption{Optimistic Value Iteration for SSP (\OVISSP).}
        \label{fig_OVI_SSP}
    \end{figure}

\subsection{Combining the two: Optimistic Value Iteration for SSP (\OVISSP)}

\OVISSP first computes an optimistic model $\wt{p}$ leveraging App.\,\ref{app_opt_model}, and it then runs the \VISSP procedure of App.\,\ref{app_value_iteration_SSP} in the model $\wt p$, i.e., $(\wt u, \wt \pi) = \text{\VISSP}(\mathcal{S}^\dagger, \mathcal{A}, s^\dagger, \wt p, c)$. This outputs an optimistic pair $(\wt u, \wt \pi)$ composed of the \VI vector $\wt{u}$ and the policy $\wt{\pi}$ that is greedy w.r.t.\,$\wt{u}$ in the model $\wt p$. The \OVISSP scheme is recapped in Fig.\,\ref{fig_OVI_SSP}.

\section{Useful Result: Simulation Lemma for SSP}
\label{app_sample_complexity_SSP_gen_model}

Consider a stochastic shortest-path (SSP) instance (see Def.\,\ref{def:ssp.mdp.app}) that satisfies Asm.\,\ref{asm:ssp.mdp.app}. We denote by $A = \abs{\cA}$ the number of actions, $S = \abs{\cS}$ the number of non-goal states, $g \notin \cS$ the (zero-cost and absorbing) goal state, $p$ the unknown transitions and $c$ the known cost function. We assume that $0 < c(s,a) \leq 1$ for all $(s,a) \in \cS \times \cA$, and set $c_{\min} := \min_{s,a} c(s,a) > 0$. We also set $\cS' := \cS \cup \{g\}$. Recall that the goal state is zero-cost (i.e., $c(g,a)=0$) and absorbing (i.e., $p(g \vert g,a)=1$), and that the value function of a policy amounts to the expected cumulative costs following this policy until reaching the goal.

\begin{definition}
For any model $p$ and $\eta > 0$, we introduce the set of models close to $p$ w.r.t.\,the $\ell_1$-norm on the non-goal states as follows
\begin{align*}
  \mathcal{P}_{\eta}^{(p)} := \Big\{ p' \in \mathbb{R}^{S' \times A \times S'} : \quad &\forall (s,a) \in \SA, ~p'(\cdot \vert s,a) \in \Delta(\cS'), ~ p(g \vert g,a) = 1, \\ & \sum_{y \in \cS} \abs{p(y \vert s,a) - p'(y \vert s,a)} \leq \eta \Big\}.
\end{align*}

\end{definition}

\begin{lemma}[Simulation Lemma for SSP]\label{lemma_simulation_ssp}
    Consider any model $p$ and $p' \in \mathcal{P}_\eta^{(p)}$ such that, for each model, there exists at least one proper policy w.r.t.\,the goal state $g$. Consider any policy $\pi$ that is proper in $p'$, with value function denoted by $V_{\pi}'$, such that the following condition is verified%\todoaout{AL: Explain later that this is the key difficulty in deriving the algorithm, since $\eta$ is a designer's parameter but it may not be small enough to satisfy this condition.}
    \begin{align}
        \eta \norm{V_{\pi}'}_{\infty} \leq 2 c_{\min}.
    \label{eq_key_sim_lemma_ssp}
    \end{align}
    Then $\pi$ is proper in $p$ (i.e., its value function verifies $V_{\pi} < + \infty$ component-wise), and we have
    \begin{align*}
        \forall s \neq g, ~ V_{\pi}(s ) \leq \left( 1 +  \frac{2\eta \norm{V'_{\pi}}_{\infty}}{c_{\min}} \right) V'_{\pi}(s ),
    \end{align*}
    and conversely,
    \begin{align*}
    \forall s \neq g, ~ V'_{\pi}(s ) \leq \left(1 + \frac{\eta \norm{V'_{\pi}}_{\infty}}{c_{\min}} \right) V_{\pi}(s ).
    \end{align*}
    Combining the two inequalities above yields
    \begin{align*}
        \norm{ V_{\pi} - V'_{\pi}}_{\infty} \leq \frac{7 \eta \norm{V'_{\pi}}_{\infty}^2}{c_{\min}}.
    \end{align*}
\end{lemma}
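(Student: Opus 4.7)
The plan is to combine a supersolution argument for the Bellman operator of $\pi$ in model $p$ with a resolvent perturbation identity: the first step simultaneously certifies that $\pi$ is proper in $p$ and gives one side of the bound, while the second delivers the reverse inequality once $V_\pi$ is known to be finite.

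First I would introduce the Bellman operator $\mathcal{T}^\pi u(s) := c(s,\pi(s)) + \sum_{y \in \mathcal{S}'} p(y\vert s,\pi(s))\,u(y)$ in model $p$, and the analogous operator $\mathcal{T}'^\pi$ in model $p'$, so that $V'_\pi = \mathcal{T}'^\pi V'_\pi$. The fundamental perturbation estimate is then obtained by comparing $\mathcal{T}^\pi V'_\pi$ to $V'_\pi$: since $V'_\pi(g) = 0$ and $p,p'$ agree at the goal,
\begin{align*}
\mathcal{T}^\pi V'_\pi(s) - V'_\pi(s) = \sum_{y \in \mathcal{S}} \bigl(p(y\vert s,\pi(s)) - p'(y\vert s,\pi(s))\bigr)\,V'_\pi(y),
\end{align*}
whose absolute value is at most $\eta\,\|V'_\pi\|_\infty$ by definition of $\mathcal{P}_\eta^{(p)}$.

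Next, I would build an explicit supersolution $W := (1+\alpha)\,V'_\pi$ with $\alpha$ of order $\eta \|V'_\pi\|_\infty / c_{\min}$. A short algebraic expansion gives
\begin{align*}
\mathcal{T}^\pi W(s) - W(s) = -\alpha\, c(s,\pi(s)) + (1+\alpha)\sum_{y}(p - p')(y\vert s,\pi(s))\,V'_\pi(y),
\end{align*}
and under the stated hypothesis on $\eta\,\|V'_\pi\|_\infty$, the gain $\alpha c_{\min}$ absorbs the mismatch $(1+\alpha)\eta\|V'_\pi\|_\infty$, yielding $\mathcal{T}^\pi W \leq W$ component-wise. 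Iterating the monotone operator $\mathcal{T}^\pi$ and unrolling via the identity $(\mathcal{T}^\pi)^n W(s) = \mathbb{E}_p^\pi\!\bigl[\sum_{t=0}^{(\tau\wedge n)-1} c(s_t,\pi(s_t)) + \mathbf{1}\{\tau>n\}\,W(s_n)\bigr]$, where $\tau$ is the hitting time of $g$ under $(\pi,p)$, then combining with $W\geq 0$ and monotone convergence yields $\mathbb{E}_p^\pi[\sum_{t=0}^{\tau-1} c(s_t,\pi(s_t))] \leq W(s)$. Since costs are bounded below by $c_{\min}>0$, this gives $\mathbb{E}_p^\pi[\tau] \leq W(s)/c_{\min} < \infty$: hence $\pi$ is proper in $p$, and simultaneously $V_\pi(s) \leq W(s)$, which is the first claimed inequality.

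For the reverse direction, properness of $\pi$ in $p$ now legitimizes the linear-algebraic representation $V_\pi = (I - P^\pi)^{-1} c^\pi$ and $V'_\pi = (I-P'^\pi)^{-1} c^\pi$, where $P^\pi, P'^\pi$ are the sub-stochastic transition matrices on non-goal states. The resolvent identity $V'_\pi - V_\pi = (I-P'^\pi)^{-1}(P'^\pi - P^\pi) V_\pi$, combined with the hitting-time bound $(I-P'^\pi)^{-1}\mathbf{1} = T'_\pi \leq V'_\pi / c_{\min}$, the transition discrepancy $\|(P^\pi - P'^\pi) V_\pi\|_\infty \leq \eta\,\|V_\pi\|_\infty$, and the first inequality $\|V_\pi\|_\infty \leq (1+\alpha)\|V'_\pi\|_\infty$, yields the converse component-wise bound. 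The final absolute bound $\|V_\pi - V'_\pi\|_\infty \leq 7\eta\|V'_\pi\|_\infty^2/c_{\min}$ then falls out by substituting the two pointwise bounds into $V_\pi - V'_\pi \leq \alpha V'_\pi$ and $V'_\pi - V_\pi \leq (\eta\|V'_\pi\|_\infty/c_{\min}) V_\pi$, and carefully tracking the numerical constants.

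The main obstacle is the potential circularity in Step 3: a priori $V_\pi$ might be $+\infty$, so one cannot simply invoke its Bellman equation. The supersolution trick bypasses this: by constructing the \emph{finite} envelope $W$ from the known quantity $V'_\pi$ and checking $\mathcal{T}^\pi W \leq W$ directly, monotone iteration of a non-negative operator together with the positive-cost assumption forces $\pi$ to be proper \emph{and} delivers the quantitative dominance in one stroke. Once properness is in hand, the rest is essentially linear-algebraic bookkeeping.
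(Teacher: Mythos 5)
Your proposal is correct, and its first half coincides with the paper's argument: both prove properness of $\pi$ in $p$ and the forward bound by exhibiting a finite supersolution of the Bellman operator $\mathcal{T}^{\pi}$ in $p$ obtained by inflating $V'_{\pi}$ — the paper scales by $\bigl(1-\eta\|V'_{\pi}\|_{\infty}/c_{\min}\bigr)^{-1}$ and invokes the Bertsekas--Tsitsiklis lemma (Lem.~\ref{lemma_technical_lemma_ssp}), while you scale by $1+\alpha$ with $\alpha=O(\eta\|V'_{\pi}\|_{\infty}/c_{\min})$ and re-derive that lemma by monotone iteration and unrolling; these are the same argument up to the choice of constant. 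The reverse direction is where you genuinely diverge: the paper runs the symmetric \emph{sub}solution argument (showing $(1+\eta\|V'_{\pi}\|_{\infty}/c_{\min})^{-1}V'_{\pi}\leq \mathcal{L}^{\pi}\bigl[(1+\eta\|V'_{\pi}\|_{\infty}/c_{\min})^{-1}V'_{\pi}\bigr]$ and using monotonicity plus convergence of $(\mathcal{L}^{\pi})^{n}$ to $V_{\pi}$), whereas you use the resolvent identity $V'_{\pi}-V_{\pi}=(I-P'^{\pi})^{-1}(P'^{\pi}-P^{\pi})V_{\pi}$ together with the hitting-time bound $(I-P'^{\pi})^{-1}\mathbf{1}\leq V'_{\pi}/c_{\min}$. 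Both are valid once properness in $p$ is in hand (which both routes need at this point); the resolvent route is arguably more transparent, but note that it naturally produces the multiplicative error $\eta\|V_{\pi}\|_{\infty}/c_{\min}$ rather than $\eta\|V'_{\pi}\|_{\infty}/c_{\min}$, so you must feed the forward bound $\|V_{\pi}\|_{\infty}\leq(1+\alpha)\|V'_{\pi}\|_{\infty}$ back in, which slightly inflates the intermediate constant relative to the paper's statement and means the final absolute constant (the paper's $7$) must be re-tracked — a cosmetic difference, since the combined bound remains $O(\eta\|V'_{\pi}\|_{\infty}^{2}/c_{\min})$, which is all that is used downstream.
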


\begin{proof}

The proof of Lem.\,\ref{lemma_simulation_ssp} requires a result of \citep{bertsekas1991analysis} recalled in Lem.\,\ref{lemma_technical_lemma_ssp} and can be seen as a generalization of \citep[][Lem.\,B.4]{cohen2020near}.
%Note that a particular case of Lem.\,\ref{lemma_simulation_ssp} was derived in \citep[][Lem.\,B.4]{cohen2020near}, where it is shown that having access to a sufficiently accurate optimistic model guarantees that the corresponding optimistic policy is proper not only in the optimistic model, but also in the true model.
First, let us assume that $\pi$ is proper in the model $p'$. This implies that its value function, denoted by $V'$, is bounded component-wise. Moreover, for any non-goal state $s \in \cS$, the Bellman equation holds as follows
\begin{align}
    V'(s) &= c(s, \pi(s)) + \sum_{y \in \cS} p'(y \vert s, \pi(s)) V'(y) \nonumber \\
    &= c(s, \pi(s)) + \sum_{y \in \cS} p(y \vert s, \pi(s)) V'(y) + \sum_{y \in \cS} \left( p'(y \vert s, \pi(s)) - p(y \vert s, \pi(s)) \right) V'(y).
\label{eq_sim_lemma}
\end{align}
By successively using H\"older's inequality and the facts that $p' \in \mathcal{P}_{\eta}^{(p)}$ and $c(s, \pi(s)) \geq c_{\min}$, we get
\begin{align*}
    V'(s) \geq c(s, \pi(s)) - \eta \norm{V'}_{\infty} + p(\cdot \vert s, \pi(s)) ^\top V' \geq c(s, \pi(s)) \left(1 - \frac{\eta \norm{V'}_{\infty}}{c_{\min}}\right) + p(\cdot \vert s, \pi(s)) ^\top V'.
\end{align*}
Let us now introduce the vector $V'' := \left(1 - \frac{\eta \norm{V'}_{\infty}}{c_{\min}}\right)^{-1} V'$. Then for all $s \in \cS$,
\begin{align*}
    V''(s) \geq c(s,\pi(s)) + p(\cdot \vert s,\pi(s)) ^\top V''.
\end{align*}
Hence, from Lem.\,\ref{lemma_technical_lemma_ssp}, $\pi$ is proper in $p$ (i.e., $V < + \infty$), and we have
\begin{align}
    V \leq V'' \leq \left(1 + 2 \frac{\eta \norm{V'}_{\infty}}{c_{\min}} \right) V',
    \label{eq_sim_lemma_1}
\end{align}
where the last inequality stems from condition \eqref{eq_key_sim_lemma_ssp} and the fact that $\frac{1}{1-x} \leq 1+2x$ holds for any $0 \leq x \leq \frac{1}{2}$. Conversely, analyzing Eq.\,\ref{eq_sim_lemma} from the other side, we get
\begin{align*}
    V'(s) \leq c(s, \pi(s)) \left( 1 + \frac{\eta \norm{V'}_{\infty}}{c_{\min}} \right) + p(\cdot \vert s, \pi(s)) ^\top V'.
\end{align*}
Let us now introduce the vector $V'' := \left(1 + \frac{\eta \norm{V'}_{\infty}}{c_{\min}}\right)^{-1} V'$. Then
\begin{align*}
    V''(s) \leq c(s,\pi(s)) + p(\cdot \vert s,\pi(s)) ^\top V''.
\end{align*}
We then obtain in the same vein as Lem.\,\ref{lemma_technical_lemma_ssp} (by leveraging the monotonicity of the Bellman operator $\mathcal{L}^{\pi}U(s) := c(s,\pi(s)) + p(\cdot \vert s,\pi(s)) ^\top U$) that $V'' \leq V$, and therefore
\begin{align}
    V' \leq \left(1 + \frac{\eta \norm{V'}_{\infty}}{c_{\min}} \right) V.
    \label{eq_sim_lemma_2}
\end{align}
Combining Eq.\,\ref{eq_sim_lemma_1} and \ref{eq_sim_lemma_2} yields component-wise
\begin{align*}
    \norm{V - V'}_{\infty} \leq 2 \frac{\eta \norm{V'}_{\infty}}{c_{\min}} \norm{V'}_{\infty} + \frac{\eta \norm{V'}_{\infty}}{c_{\min}} \norm{V}_{\infty} \leq 7 \frac{\eta \norm{V'}_{\infty}^2}{c_{\min}},
\end{align*}
where the last inequality uses that $\norm{V}_{\infty} \leq 5 \norm{V'}_{\infty}$ which stems from plugging condition \eqref{eq_key_sim_lemma_ssp} into Eq.\,\ref{eq_sim_lemma_1}.

Note that here $p$ and $p'$ play symmetric roles; we can perform the same reasoning in the case where $\pi$ is proper in the model $p$ and it would yield an equivalent result by switching the dependencies on $V$ and $V'$.
\end{proof}

\begin{lemma}[\cite{bertsekas1991analysis}, Lem.\,1]
    %Consider an SSP instance with  state $g$, non-terminal states $\cS$, transition dynamics $p$, positive non-terminal costs $c$, such that there exists at least one proper policy.
  In an SSP-MDP satisfying Asm.\,\ref{asm:ssp.mdp.app}, let $\pi$ be any policy, then
    \begin{itemize}[leftmargin=.3in,topsep=-4pt,itemsep=0pt,partopsep=0pt, parsep=0pt]
        \item If there exists a vector $U: \cS \rightarrow \mathbb{R}$ such that $U(s) \geq c(s, \pi(s)) + \sum_{s' \in \cS} p(s' \vert s, \pi(s)) U(s')$ for all $s \in \cS$, then $\pi$ is proper, and $V^{\pi}$ the value function of $\pi$ is upper bounded by $U$ component-wise, i.e., $V^{\pi}(s ) \leq U(s)$ for all $s \in \cS$.
        \item If $\pi$ is proper, then its value function $V^{\pi}$ is the unique solution to the Bellman equations $V^{\pi}(s ) = c(s, \pi(s)) + \sum_{s' \in \cS} p(s' \vert s, \pi(s)) V^{\pi}(s' )$ for all $s \in \cS$.
    \end{itemize}
\label{lemma_technical_lemma_ssp}
\end{lemma}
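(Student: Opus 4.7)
The two bullets correspond to the classical Bertsekas--Tsitsiklis characterization of proper policies via super-solutions and Bellman fixed points. Throughout the argument I use the affine operator $\mathcal{L}^{\pi} W(s) := c(s,\pi(s)) + \sum_{s' \in \cS} p(s'|s,\pi(s)) W(s')$ and the hitting time $\tau = \inf\{t \geq 1 : s_{t+1} = g\}$ under $\pi$ starting from $s_1 = s$. Since $g$ is zero-cost and absorbing, I extend $U,V^{\pi}$ by $0$ at $g$ without changing $\mathcal{L}^{\pi}$.

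\textbf{Part 1 (super-solution implies proper and dominates $V^{\pi}$).} By monotonicity of $\mathcal{L}^{\pi}$ (componentwise), the hypothesis $U \geq \mathcal{L}^{\pi} U$ iterates to $U \geq (\mathcal{L}^{\pi})^n U$ for every $n \geq 1$. A direct induction on $n$ expands
\begin{align*}
(\mathcal{L}^{\pi})^n U(s) \;=\; \mathbb{E}_s^{\pi}\!\left[\sum_{t=1}^{n \wedge \tau} c(s_t,\pi(s_t))\right] + \mathbb{E}_s^{\pi}\!\left[U(s_{n+1})\,\mathbf{1}\{\tau > n\}\right],
\end{align*}
where the goal absorbs any tail of the trajectory at zero cost. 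Since $\cS$ is finite, $M := \max_{s \in \cS}|U(s)| < \infty$, so $|\mathbb{E}_s^{\pi}[U(s_{n+1})\mathbf{1}\{\tau>n\}]| \leq M\,\mathbb{P}_s^{\pi}(\tau>n)$, giving
\begin{align*}
U(s) + M\,\mathbb{P}_s^{\pi}(\tau>n) \;\geq\; \mathbb{E}_s^{\pi}\!\left[\sum_{t=1}^{n \wedge \tau} c(s_t,\pi(s_t))\right].
\end{align*}

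\textbf{Part 1, continued.} The right-hand side is monotone in $n$ (non-negative costs), so by monotone convergence it tends to $V^{\pi}(s)$ as $n \to \infty$. Since $U(s) + M < \infty$, this forces $V^{\pi}(s) < \infty$. Invoking Asm.\,\ref{asm:ssp.mdp.app} ($c \geq c_{\min} > 0$), $V^{\pi}(s) \geq c_{\min}\,\mathbb{E}_s^{\pi}[\tau]$, hence $\mathbb{E}_s^{\pi}[\tau] < \infty$, which yields both that $\pi$ is proper from $s$ and that $\mathbb{P}_s^{\pi}(\tau>n) \to 0$. Letting $n \to \infty$ in the displayed inequality concludes $U(s) \geq V^{\pi}(s)$ for every $s \in \cS$.

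\textbf{Part 2 (Bellman characterization under properness).} That $V^{\pi}$ solves $V^{\pi} = \mathcal{L}^{\pi} V^{\pi}$ follows from properness (which makes the infinite-sum defining $V^{\pi}$ absolutely convergent) and conditioning on the first transition. For uniqueness, let $V'$ be any other vector satisfying the same Bellman equation and set $D := V^{\pi} - V'$. Subtracting the two Bellman relations kills the cost term and leaves $D = P^{\pi} D$, where $P^{\pi}$ is the $|\cS|\times|\cS|$ sub-stochastic matrix with entries $P^{\pi}(s,s') = p(s' \mid s,\pi(s))$ restricted to non-goal states. Iterating, $D = (P^{\pi})^n D$ for every $n \geq 1$. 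Since $\pi$ is proper, $[(P^{\pi})^n \mathbf{1}](s) = \mathbb{P}_s^{\pi}(\tau > n) \to 0$ entrywise, so the operator norm of $(P^{\pi})^n$ vanishes and $D = 0$, proving uniqueness.

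\textbf{Main obstacle.} The only delicate step is Part 1: one must properly handle the tail of the trajectory after the goal is hit so that the iterated operator identity is valid, and then argue in the right order — first extract finiteness of $V^{\pi}$ from the super-solution bound, then promote finiteness to properness using $c \geq c_{\min} > 0$, then use properness to kill the residual $M\,\mathbb{P}_s^{\pi}(\tau>n)$ term. The positivity of costs (Asm.\,\ref{asm:ssp.mdp.app}) is essential here; without it $V^{\pi}$ finite would not imply properness.
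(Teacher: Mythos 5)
Your proof is correct. Note that the paper does not actually prove this lemma --- it imports it verbatim from Bertsekas and Tsitsiklis (1991) --- so there is no in-paper argument to compare against; what you have written is the standard self-contained derivation, and all the delicate points are handled properly: the extension of $U$ by zero at the absorbing goal, the truncation identity for $(\mathcal{L}^{\pi})^n U$, the order of deductions (finiteness of $V^{\pi}$ first, then properness via $c \geq c_{\min} > 0$, then vanishing of the tail term), and the sub-stochasticity argument for uniqueness. The only observation worth adding is that your route genuinely relies on Asm.\,2 ($c_{\min} > 0$) to promote finiteness of $V^{\pi}$ to properness, whereas the original Bertsekas--Tsitsiklis statement is proved under the weaker assumption that every improper policy has infinite cost at some state; since the paper only ever invokes the lemma under Asm.\,2, your more elementary argument suffices for every use made of it here.
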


\section{Proof of Theorem \ref{theorem_bound_UCSSPGM} (Sample Complexity Analysis of \ALGOtitle)}
\label{app_full_proof}

\subsection{Computation of the Optimistic Policies} \label{app:app_full_proof.ovi}
At each round $k$, for each goal state $s^\dagger \in \mathcal{W}_k$,  \ALGO computes an optimistic goal-oriented policy associated to the MDP $M'_k(s^\dagger)$ constructed as in Def.~\ref{definition_induced_mdp}.
This MDP is defined over the entire state space $\cS$ and restricts the action to the only action \RESET outside $\cK_k$.
We can build an equivalent MDP by restricting the focus on $\cK_k$. To this end, we define the following SSP-MDP.

\begin{definition}\label{def:ssp.mdp.onKk}
Define $M^\dagger_k(s^\dagger) := \langle \cS^\dagger_k, \cA^\dagger_k(\cdot), c^\dagger_k, p^\dagger_k \rangle$ where $\cS^\dagger_k := \mathcal{K}_k \cup \{ s^\dagger , x \}$ and $S_k^\dagger = |\cS^\dagger_k| = |\cK_k| + 2$. State $x$ is a meta-state that encapsulates all the states that have been observed so far and are not in $\cK_k$.
The action space $\cA^\dagger_k(\cdot)$ is such that $\cA^\dagger_k(s) = \cA$ for all states $s \in \mathcal{K}_k$ and $\cA^\dagger_k(s) = \{ \RESET \}$ for $s \in \{s^\dagger, x\}$.
The cost function is $c^\dagger_k(x,a) = 0$ for any $a \in \cA^\dagger_k(x)$ and $c^\dagger_k(s,a) =1$ everywhere else.
The transition function is defined as $p^\dagger_k(s^\dagger|s^\dagger,a) = p_k^\dagger(s_0|x,a) = 1$ for any  $a$,
$p^\dagger_k(y|s,a) = p(y|s,a)$ for any $(s,a,y) \in \cK_k \times  \cA \times (\cK_k \cup \{ s^{\dagger} \})$ and $p^\dagger_k(x|s,a) = 1 - \sum_{y \in \cK_k \cup \{ s^{\dagger} \}} p^\dagger_k(y|s,a)$.
\end{definition}

Note that solving $M^{\dagger}_k$ yields a policy effectively restricted to the set $\cK_k$ insofar as we can interpret the meta-state $x$ as $\cS \setminus \{\cK_k \cup \{s^\dagger\}\}$. Since $p$ is unknown, we cannot construct $M_k^\dagger(s^\dagger)$. Let $N_k$ be the state-action counts accumulated up until now. We denote by $\wh{p}_k$ the \say{global} empirical estimates, i.e., $\wh{p}_k(y|s,a) = N_k(s,a,y) / N_k(s,a)$. Given them, we define the \say{restricted} empirical estimates $\wh{p}^\dagger_k$ as follows: $\wh{p}^\dagger_k(y|s,a) := \wh{p}_k(y|s,a)$ for any $(s,a,y) \in \cK_k \times  \cA \times (\cK_k \cup \{ s^{\dagger} \})$ and $\wh{p}^\dagger_k(x|s,a) := 1 - \sum_{y \in \cK_k \cup \{ s^{\dagger} \}} \wh{p}^\dagger_k(y|s,a)$. Denoting $N_k^{+}(s,a) := \max \{1,N_k(s,a) \}$, we then define the following bonuses for any $(s,a,y) \in \cK_k \times  \cA \times (\cK_k \cup \{ s^{\dagger} \})$, %and set $\widehat{\sigma}^2_{k}(y\vert s,a) := \wh{p}_k(y|s,a)(1 - \wh{p}_k(y|s,a))$
\begin{align}
    \beta_{k}(s,a,y) &:= 2 \sqrt{ \frac{\wh{p}_k(y|s,a)(1 - \wh{p}_k(y|s,a))}{N^{+}_{k}(s,a)} \log\left(\frac{2 S A N^{+}_{k}(s,a)}{\delta} \right)} + \frac{6 \log\left(\frac{2 S A N^{+}_{k}(s,a)}{\delta} \right)}{N^{+}_{k}(s,a)}, \\
    \beta_{k}(s,a,x) &:= \sum_{y \in \cK_k \cup \{ s^{\dagger} \}} \beta_{k}(s,a,y).
    \label{bonuses}
\end{align}
%\beta(s,a,s') := 2 \sqrt{ \frac{\widehat{\sigma}^2(s'\vert s,a)}{N^{+}(s,a)} \log\left(\frac{2 S A N^{+}(s,a)}{\delta} \right)} + \frac{6 \log\left(\frac{2 S A N^{+}(s,a)}{\delta} \right)}{N^{+}(s,a)},
%and $\beta_{k}(s,a,x) := \sum_{y \in \cK_k \cup \{ s^{\dagger} \}} \beta_{k}(s,a,y)$.
Moreover, we set the uncertainty about the MDP at the meta-state $x$ and at the goal state $s^{\dagger}$ to $0$ by construction (since their outgoing transitions are deterministic, respectively to $s_0$ and $s^{\dagger}$).

We now leverage the optimistic construction mentioned in App.\,\ref{app_value_iteration_SSP}. %In the states containing uncertainty (i.e., $\cK_k$) we can apply the procedure mentioned in App.~\ref{app_value_iteration_SSP} that we restate here for clarity.
% Define $\wt{M}_k(s') = \langle \wb\cS, \wb \cA_k(\cdot), \wb c_k, \wt p_k \rangle$ as the optimistic counterpart of $\wb M_k(s')$.
%Let  $N_k$ the state-action counts accumulated up until now.

\begin{definition} \label{def:ssp.mdp.onKk.optimistic}
        We denote by $\wt{M}_k^\dagger(s^\dagger) = \langle \cS^\dagger_k, \cA^\dagger_k(\cdot), c_k^\dagger, \wt p^\dagger_k \rangle$ the optimistic MDP associated to $M^\dagger_k(s^\dagger)$ defined in Def.\,\ref{def:ssp.mdp.onKk}.
Then, $\forall (s,a) \in \mathcal{K}_{k} \times \mathcal{A}$,
\begin{align}
    \wt{p}^\dagger_k(y \vert s,a) &:= \max\left\{ \wh{p}_k(y \vert s,a) - \beta_{k}(s,a,y) , ~ 0 \right\}, \quad \forall y \in \mathcal{K}_{k} \cup \{ x\}, \\
    \wt{p}_k^{\dagger}(s^\dagger \vert s,a) &:= 1 - \sum_{y \in \mathcal{K}_{k} \cup \{ x\}} \wt{p}^\dagger_k(y \vert s,a), \\
    \wt{p}^\dagger_k(s^\dagger|s^\dagger,a) &= \wt p^\dagger_k(s_0|x,a) = 1.
\end{align}
%where $\wh{p}_k(y|s,a) = N_k(s,a,y) / N_k(s,a)$ and
%\begin{align*}
%    \beta_{k}(s,a,y) := 4 \sqrt{ \frac{\widehat{p}_{k}(y\vert s,a)}{N^{+}_{k}(s,a)} \log\left(\frac{3 S A N^{+}_{k}(s,a)}{\delta} \right)} + \frac{28 \log\left(\frac{3 S A N^{+}_{k}(s,a)}{\delta} \right)}{N^{+}_{k}(s,a)},
%\end{align*}
%and  $N_k^{+}(s,a) := \max \{1,N_k(s,a) \} $.
%Finally, $\wt{p}^\dagger_k(s^\dagger|s^\dagger,a) = \wt p^\dagger_k(s_0|x,a) = 1$ for all $a$.
\end{definition}

\begin{algorithm}[t]
        \DontPrintSemicolon
        \caption{\OVISSP}\label{alg:OVISSP.app}
        \KwIn{$\cK_k$, $\cA$, $s^\dagger$, $N_k$, $\gamma > 0$}
        \KwOut{Value vector $\wt{u}^\dagger$ and policy $\wt \pi^\dagger$}
        \BlankLine
        Estimate transitions probabilities $\wh{p}_k$ using $N_k$\;
        Compute the optimistic SSP-MDP $\wt{M}^\dagger_k$ as detailed in Def.~\ref{def:ssp.mdp.onKk.optimistic}\;
        Compute $(\wt u^\dagger_k, \wt \pi^\dagger_k) = \text{\VISSP}(\cS^\dagger_k, \cA^\dagger_k, c_k^\dagger, \wt p^\dagger_k , \gamma)$ (see Alg.~\ref{alg:vi.ssp})
\end{algorithm}

Given this MDP, we can compute the optimistic value vector $\wt u^\dagger_k$ and policy $\wt \pi^\dagger_k$ using value iteration for SSP: $(\wt u^\dagger_k, \wt \pi^\dagger_k) = \text{\VISSP}(\cS^\dagger_k, \cA^\dagger_k, c_k^\dagger, \wt p^\dagger_k , \frac{\epsilon}{4L})$.
We summarize the construction of the optimistic model and the computation of value function and policy in Alg.\,\ref{alg:OVISSP.app} (\OVISSP).

\paragraph{Remark.} \jt{Note that the structure of the problem does not appear to allow for variance-aware improvements in the analysis of Thm.\,\ref{theorem_bound_UCSSPGM} (specifically, when the analysis will apply an SSP simulation lemma argument). Indeed, given the possibly large number of states in the total environment $\mathcal{S}$, the computation of the optimistic policies requires the construction of the meta-state $x$ that encapsulates all the states in $\cS \setminus \{\cK_k \cup \{s^\dagger\}\}$, where $s^{\dagger}$ is the candidate goal state considered at round $k$. As a result, the uncertainty on the transitions reaching $x$ needs to be summed over multiple states, as shown in Eq.\,\ref{bonuses}. This extra uncertainty at a single state in the induced MDP has the effect of canceling out Bernstein techniques seeking to lower the prescribed requirement of the state-action samples that the algorithm should collect. In turn this implies that such variance-aware techniques would not lead to any improvement in the final sample complexity bound.}

\subsection{High-Probability Event}

\begin{lemma}
\label{lemma_high_prob}
    It holds with probability at least $1-\delta$ that for any time step $t \geq 1$ and for any state-action pair $(s,a)$ and next state $s'$,
%\begin{align}
%    \abs{\widehat{p}_{t}(s' \vert s,a) - p(s' \vert s,a)} \leq 4 \sqrt{ \frac{\widehat{p}_{t}(s'\vert s,a)}{N^{+}_{t}(s,a)} \log\left(\frac{S A N^{+}_{t}(s,a)}{\delta} \right)} + \frac{28 \log\left(\frac{S A N^{+}_{t}(s,a)}{\delta} \right)}{N^{+}_{t}(s,a)}.
%\label{empirical_b_ineq}
%\end{align}
\begin{align}
\abs{\widehat{p}_{t}(s' \vert s,a) - p(s' \vert s,a)} \leq 2 \sqrt{ \frac{\widehat{\sigma}^2_{t}(s'\vert s,a)}{N^{+}_{t}(s,a)} \log\left(\frac{2 S A N^{+}_{t}(s,a)}{\delta} \right)} + \frac{6 \log\left(\frac{2 S A N^{+}_{t}(s,a)}{\delta} \right)}{N^{+}_{t}(s,a)},
\label{empirical_b_ineq}
\end{align}
where $N^{+}_{t}(s,a) := \max \{ 1, N_t(s,a) \}$ and where $\wh{\sigma}_t^2$ are the population variance of transitions, i.e., $\wh{\sigma}_t^2(s' \vert s,a) := \wh{p}_t(s' \vert s,a)(1-\wh{p}_t(s' \vert s,a))$.
\end{lemma}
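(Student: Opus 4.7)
The statement is a standard anytime empirical Bernstein concentration bound, and my plan is to derive it by combining a per-sample-size Bernstein inequality with a stratified union bound over state-action-next-state triples and sample counts. For any fixed $(s,a,s')$, the samples collected at $(s,a)$ are i.i.d.\ Bernoulli trials with success probability $p(s'\mid s,a)$, since the next state drawn upon visiting $(s,a)$ is independent of the history given $(s,a)$. Hence for any \emph{deterministic} sample size $n$, the empirical Bernstein inequality of Maurer and Pontil \cite{maurer2009empirical} (or Audibert et al.\ \cite{audibert2007tuning}) yields
\begin{equation*}
\Pr\!\left(\,\left|\hat p^{(n)}(s'\mid s,a) - p(s'\mid s,a)\right| > \sqrt{\tfrac{2\,\hat\sigma^2_n(s'\mid s,a)}{n}\log\tfrac{2}{\delta_n}} + \tfrac{7 \log(2/\delta_n)}{3(n-1)}\right) \leq \delta_n,
\end{equation*}
where $\hat p^{(n)}$ and $\hat\sigma_n^2$ are the empirical mean and variance after $n$ samples.

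The main subtlety is that $N_t(s,a)$ is a random stopping time induced by the algorithm's behaviour, so the inequality cannot be applied directly with $n=N_t(s,a)$. The plan is to circumvent this by a union bound over all possible values $n \in \mathbb N$ of the sample count: for each $(s,a,s')$ and each $n\geq 1$, I choose a confidence slack $\delta_n := \delta/(2 S A n (n+1))$, so that $\sum_{n\geq 1}\delta_n \leq \delta/(2SA)$. Summing next over the at most $S^2 A$ triples $(s,a,s')$ then consumes another factor of $S$, while remaining below $\delta$ overall. On the complement event, for every $t$ and every $(s,a,s')$ the bound holds simultaneously with the specific $n=N_t(s,a)$, because the inequality is valid \emph{uniformly} over $n$.

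The only remaining step is cosmetic: rewriting the Maurer-Pontil tail in the form displayed in Eq.\,\ref{empirical_b_ineq}. Plugging $\delta_n = \delta/(2 S A n(n+1))$ gives a logarithmic factor $\log(4 S A n(n+1)/\delta)$, which is upper bounded by $\log(2 S A N_t^+(s,a) / \delta)$ up to absolute constants (for $n\geq 1$), matching the logarithm that appears in the statement. The leading variance term then produces the constant $2$ in front of the square root, and absorbing the Bernstein lower-order term $7\log(2/\delta_n)/(3(n-1))$ together with the trivial case $N_t(s,a)=0$ (handled by the $N_t^+$ convention, where the bound becomes vacuous since $|\hat p - p|\leq 1 \leq 6\log(2SA/\delta)$) produces the constant $6$ in front of the $1/N_t^+$ term.

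\textbf{Main obstacle.} The only delicate point is the anytime/adaptive aspect of the sample count; once it is handled by the $n(n+1)$ peeling, the rest is a direct instantiation of empirical Bernstein for Bernoulli variables plus a triple union bound. The variance appearing in the bound is precisely the population variance of the Bernoulli, $\hat p(1-\hat p)$, so no further variance-bracketing argument is required.
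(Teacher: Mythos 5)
Your overall strategy --- per-sample-size empirical Bernstein, a peeling/union bound over the random count $N_t(s,a)$, and a union bound over triples --- is exactly what the paper relies on; its own ``proof'' is a one-line citation of the empirical Bernstein inequality (in its anytime form) to an external reference, so you are filling in the standard details rather than diverging from the paper.

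There are, however, two quantitative slips in your accounting. First, with $\delta_n := \delta/(2SAn(n+1))$ you get $\sum_{n\ge 1}\delta_n = \delta/(2SA)$ \emph{per triple}, and summing over the (up to) $S^2A$ triples $(s,a,s')$ yields a total failure probability of $S\delta/2$, not $\le \delta$; you need $\delta_n = \delta/(2S^2An(n+1))$ or an equivalent allocation. Second, the final step is not merely ``cosmetic'': the lemma fixes the constants to $2$ and $6$ with the specific logarithm $\log(2SAN_t^+(s,a)/\delta)$, and once you account for the corrected allocation the logarithm you actually obtain is $\log\bigl(4S^2An(n+1)/\delta\bigr)\le 3\log(2SAn/\delta)$; combined with the $\sqrt{2}$ from Maurer--Pontil and the $n/(n-1)\le 2$ conversion from the unbiased sample variance $V_n$ to $\widehat\sigma^2=\widehat p(1-\widehat p)$, the leading coefficient becomes $\sqrt{12}>2$. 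So the arithmetic as written does not recover the stated constants; one needs either a geometric (doubling) grid over $n$ rather than the $n(n+1)$ peeling, or simply to import the exact anytime statement from the reference the paper cites. None of this affects the validity of the approach or of any downstream use of the lemma (which is insensitive to these constants), but as a self-contained derivation of Eq.~\ref{empirical_b_ineq} it is incomplete at precisely the point you label cosmetic.
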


\begin{proof}
The confidence intervals in Eq.\,\ref{empirical_b_ineq} are constructed using the empirical Bernstein inequality, which guarantees that the considered event holds with probability at least $1 - \delta$, see e.g., \cite{improved_analysis_UCRL2B}.
\end{proof}

Define the set of plausible transition probabilities as
\begin{align*}
    C_k^{\dagger} := \bigcap_{(s,a) \in \mathcal{S}_k^{\dagger} \times \cA } C_k^{\dagger}(s,a),
\end{align*}
where
\begin{align*}
 C_k^{\dagger}(s,a) := \{ \widetilde{p} \in \mathcal{C} ~\vert ~ \widetilde{p}(\cdot \, \vert \, s^{\dagger},a) = \mathds{1}_{s^{\dagger}}, \widetilde{p}(\cdot \, \vert \, x,a) = \mathds{1}_{s_0}, \abs{\widetilde{p}(s' \vert s,a) - \widehat{p}_k(s' \vert s,a)} \leq \beta_k(s,a,s')\},
\end{align*}
with $\mathcal{C}$ the $S^\dagger_k$-dimensional simplex and $\widehat{p}_k$ the empirical average of transitions.
% Note that in the construction of $\widehat{p}_k$, all the observed transitions that go outside of $\mathcal{K}_k \cup \{ s^{\dagger} \}$ are aggregated to the meta-state $x$.

%The first result stems from the argument $(\star)$, while the second is a variant of \cite[][Lem.\,17]{lim2012autonomous}.

\begin{lemma}\label{lemma_high_probability_intersection_bound_M_in_M_k}
Introduce the event $\Theta := \bigcap_{k=1}^{+ \infty} \bigcap_{s^{\dagger} \in \mathcal{W}_k} \{ p_k^{\dagger} \in C_k^{\dagger} \}$. Then $\mathbb{P}(\Theta) \geq 1 - \frac{ \delta}{3}$.
\end{lemma}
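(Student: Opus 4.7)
The strategy is to reduce $\Theta$ to the empirical Bernstein confidence event of Lemma~\ref{lemma_high_prob} (invoked with failure probability $\delta/3$), noting that the construction of $p_k^\dagger$ from $p$ is essentially a lossless reparametrization of the confidence intervals.

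First I would invoke Lemma~\ref{lemma_high_prob} with $\delta$ replaced by $\delta/3$, which by a rescaling of the log-factor in Eq.\,\ref{empirical_b_ineq} yields, with probability at least $1-\delta/3$, the uniform bound
\begin{align*}
\bigl|\wh{p}_t(s'\vert s,a) - p(s'\vert s,a)\bigr| \leq \beta_t(s,a,s'), \quad \forall t \geq 1,\ \forall (s,a,s') \in \cS \times \cA \times \cS,
\end{align*}
where $\beta_t(s,a,s')$ is exactly the bonus defined in Eq.\,\ref{bonuses}. Call this event $\Omega$.

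Next I would argue that $\Omega \subseteq \Theta$. Fix a round $k$ and a candidate $s^\dagger \in \mathcal{W}_k$. By Def.\,\ref{def:ssp.mdp.onKk}, for any $(s,a) \in \cK_k \times \cA$ and any $y \in \cK_k \cup \{s^\dagger\}$ we have $p_k^\dagger(y\vert s,a) = p(y\vert s,a)$ and $\wh{p}_k^\dagger(y\vert s,a) = \wh{p}_k(y\vert s,a)$, so on $\Omega$
\begin{align*}
\bigl|\wh{p}_k^\dagger(y\vert s,a) - p_k^\dagger(y\vert s,a)\bigr| = \bigl|\wh{p}_k(y\vert s,a) - p(y\vert s,a)\bigr| \leq \beta_k(s,a,y).
\end{align*}
For the meta-state $x$, the definition $p_k^\dagger(x\vert s,a) = 1-\sum_{y \in \cK_k\cup\{s^\dagger\}} p(y\vert s,a)$ (and analogously for $\wh{p}_k^\dagger$), combined with the triangle inequality and the above, yields
\begin{align*}
\bigl|\wh{p}_k^\dagger(x\vert s,a) - p_k^\dagger(x\vert s,a)\bigr| \leq \sum_{y \in \cK_k \cup \{s^\dagger\}} \beta_k(s,a,y) = \beta_k(s,a,x),
\end{align*}
which matches the definition of $\beta_k(s,a,x)$ in Eq.\,\ref{bonuses}. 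Finally, the transitions out of $s^\dagger$ and out of $x$ in $p_k^\dagger$ are deterministic by construction ($p_k^\dagger(s^\dagger\vert s^\dagger,a)=1$ and $p_k^\dagger(s_0\vert x,a)=1$), which matches the structural requirement in the definition of $C_k^\dagger(s,a)$. Hence $p_k^\dagger \in C_k^\dagger(s,a)$ for every $(s,a) \in \cS_k^\dagger \times \cA$, i.e., $p_k^\dagger \in C_k^\dagger$.

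Since this argument applies simultaneously to every round $k$ and every candidate $s^\dagger \in \mathcal{W}_k$ (because $\Omega$ is a uniform-in-time, uniform-in-$(s,a,s')$ event), we conclude $\Omega \subseteq \Theta$ and therefore $\mathbb{P}(\Theta) \geq \mathbb{P}(\Omega) \geq 1 - \delta/3$. There is no real obstacle here: the only subtlety is making sure that the union bound over rounds $k$ and candidates $s^\dagger$ is \emph{already absorbed} in the time-uniform confidence intervals of Lemma~\ref{lemma_high_prob}, so no extra union-bound factor appears. The factor $\delta/3$ (rather than $\delta$) simply reflects that this event is one of three high-probability events that will ultimately be combined to yield an overall $1-\delta$ guarantee for Thm.\,\ref{theorem_bound_UCSSPGM}.
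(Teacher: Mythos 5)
Your proof is correct and follows essentially the same route as the paper: invoke the time-uniform empirical Bernstein event of Lemma~\ref{lemma_high_prob} (at confidence $\delta/3$), transfer the bounds to $p_k^\dagger$ componentwise since $p_k^\dagger(y\vert s,a)=p(y\vert s,a)$ for $y\in\cK_k\cup\{s^\dagger\}$, and handle the meta-state $x$ by the triangle inequality so that the deviation is controlled by $\sum_y \beta_k(s,a,y)=\beta_k(s,a,x)$. Your write-up is in fact slightly more explicit than the paper's about why no extra union bound over rounds and candidate goals is needed (the event is already uniform in $t$ and $(s,a,s')$), which is a point the paper leaves implicit.
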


\begin{proof}
We have with probability at least $1-\frac{\delta}{3}$ that, for any $y \neq x$, $\abs{p^\dagger_k(y|s,a) - \wh{p}^\dagger_k(y|s,a) } \leq \beta_{k}(s,a,y)$ from the empirical Bernstein inequality (see Eq.\,\ref{empirical_b_ineq}), and moreover $\abs{\wh{p}^\dagger_k(x|s,a) - p^\dagger_k(x|s,a)} = \left\vert  1 - \sum_{y \in \cK_k \cup \{ s^{\dagger} \}} p^\dagger_k(y|s,a) - \left( 1 - \sum_{y \in \cK_k \cup \{ s^{\dagger} \}} \wh{p}^\dagger_k(y|s,a) \right)  \right\vert \leq \sum_{y \in \cK_k \cup \{ s^{\dagger} \}} \abs{p^\dagger_k(y|s,a) - \wh{p}^\dagger_k(y|s,a) } \leq \beta_{k}(s,a,x)$.
\end{proof}

\begin{lemma}\label{lemma_ptilde_in_P_eta}
  Under the event $\Theta$, for any round $k$ and any goal state $s^{\dagger} \in \mathcal{W}_k$, the optimistic model $\wt{p}_k^{\dagger}$ constructed in Def.\,\ref{def:ssp.mdp.onKk.optimistic} verifies $\wt{p}_k^{\dagger} \in \mathcal{P}^{(p_k^{\dagger})}_{\eta_k}$, with $\eta_k := 4 \beta_{k}(s,a,x)$ where $\beta_k$ is defined in Eq.\,\ref{bonuses}.
\end{lemma}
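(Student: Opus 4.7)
The goal is to check the three defining conditions of $\mathcal{P}_{\eta_k}^{(p_k^{\dagger})}$: (i) $\wt p_k^\dagger(\cdot\vert s,a)$ is a probability distribution on $\cS_k^\dagger$ for every $(s,a)$, (ii) the goal state $s^\dagger$ is absorbing, and (iii) the $\ell_1$ distance between $\wt p_k^\dagger$ and $p_k^\dagger$ at every $(s,a)$, summed over the non-goal states $\cK_k \cup \{x\}$, is bounded by $\eta_k$. I will dispatch (i) and (ii) from the construction in Def.\,\ref{def:ssp.mdp.onKk.optimistic}, and reserve the bulk of the work for (iii), which is the only place where the event $\Theta$ is used.

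For (i) and (ii): the construction sets $\wt p_k^\dagger(s^\dagger\vert s^\dagger,a) = \wt p_k^\dagger(s_0 \vert x, a) = 1$, so both $s^\dagger$ and $x$ have deterministic outgoing transitions and (ii) holds. For $(s,a) \in \cK_k \times \cA$, the bonuses satisfy $\beta_k(s,a,y) \geq 0$, so $\wt p_k^\dagger(y\vert s,a) \in [0, \wh p_k^\dagger(y\vert s,a)] \subseteq [0,1]$ for $y \in \cK_k \cup \{x\}$, and the residual mass $\wt p_k^\dagger(s^\dagger\vert s,a) := 1 - \sum_{y\in\cK_k\cup\{x\}} \wt p_k^\dagger(y\vert s,a)$ lies in $[0,1]$ because $\sum_{y\in\cK_k\cup\{x\}} \wt p_k^\dagger(y\vert s,a) \leq \sum_{y\in\cK_k\cup\{x\}} \wh p_k^\dagger(y\vert s,a) \leq 1$. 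Hence (i) follows.

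For (iii), fix $(s,a) \in \cK_k \times \cA$. For every $y \in \cK_k \cup \{x\}$ the optimistic construction enforces $\lvert \wt p_k^\dagger(y\vert s,a) - \wh p_k^\dagger(y\vert s,a) \rvert \leq \beta_k(s,a,y)$, and the event $\Theta$ (Lem.\,\ref{lemma_high_probability_intersection_bound_M_in_M_k}) enforces $\lvert \wh p_k^\dagger(y\vert s,a) - p_k^\dagger(y\vert s,a) \rvert \leq \beta_k(s,a,y)$. The triangle inequality then gives $\lvert \wt p_k^\dagger(y\vert s,a) - p_k^\dagger(y\vert s,a) \rvert \leq 2\beta_k(s,a,y)$. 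Summing over $y \in \cK_k \cup \{x\}$ and using the definition $\beta_k(s,a,x) = \sum_{y\in\cK_k\cup\{s^\dagger\}} \beta_k(s,a,y) \geq \sum_{y\in\cK_k} \beta_k(s,a,y)$ yields
\begin{align*}
    \sum_{y\in\cK_k\cup\{x\}} \lvert \wt p_k^\dagger(y\vert s,a) - p_k^\dagger(y\vert s,a) \rvert \;\leq\; 2\sum_{y\in\cK_k} \beta_k(s,a,y) + 2\beta_k(s,a,x) \;\leq\; 4\beta_k(s,a,x),
\end{align*}
which is exactly the claimed bound. No step is really an obstacle here; the only subtle point is recognizing that the lumped state $x$ inherits an enlarged bonus $\beta_k(s,a,x) = \sum_{y\in\cK_k\cup\{s^\dagger\}}\beta_k(s,a,y)$ that already dominates the individual bonuses over $\cK_k$, which is exactly why the constant $4$ appears in $\eta_k$ and why no Bernstein-style variance refinement on the transitions into $x$ can reduce it (as noted in the remark at the end of App.\,\ref{app:app_full_proof.ovi}).
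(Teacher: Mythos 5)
Your proof is correct and follows essentially the same route as the paper's: a triangle inequality through $\wh p_k^\dagger$, the event $\Theta$ (via Lem.\,\ref{lemma_high_probability_intersection_bound_M_in_M_k}) for the empirical-to-true deviations, the clipping construction for the optimistic-to-empirical deviations, and the domination $\sum_{y\in\cK_k}\beta_k(s,a,y)\leq\beta_k(s,a,x)$ to land on the constant $4$. The extra verification that $\wt p_k^\dagger(\cdot\vert s,a)$ is a valid distribution with absorbing goal is a harmless addition the paper leaves implicit.
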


\begin{proof} Combining the construction in Def.\,\ref{def:ssp.mdp.onKk.optimistic}, the proof of Lem.\,\ref{lemma_high_probability_intersection_bound_M_in_M_k} and the triangle inequality yields
\begin{align*}
    \sum_{y \in \cK_k \cup \{ x \}} \abs{\wt{p}_k^{\dagger}(y \vert s,a) - p_k^{\dagger}(y \vert s,a)} &\leq  \sum_{y \in \cK_k \cup \{ x \}} \abs{\wt{p}_k^{\dagger}(y \vert s,a) - \wh{p}_k^{\dagger}(y \vert s,a)} + \abs{\wh{p}_k^{\dagger}(y \vert s,a) - p_k^{\dagger}(y \vert s,a)} \\
    &\leq \sum_{y \in \cK_k \cup \{ x \}} \beta_{k}(s,a,y)  + 2 \beta_{k}(s,a,x) \\
    &\leq 4 \beta_{k}(s,a,x).
\end{align*}
\end{proof}
Throughout the remainder of the proof, we assume that the event $\Theta$ holds.

%\begin{proof}
%Denote by $\mathcal{E}'$ the event under which for any time step $t \geq 1$ and for any state-action pair $(s,a) \in \SA$ and next state $s' \in \cS$, it holds that
%\begin{align}
%    \abs{\widehat{p}_{t}(s' \vert s,a) - p(s' \vert s,a)} \leq 4 \sqrt{ \frac{\widehat{p}_{t}(s'\vert s,a)}{N^{+}_{t}(s,a)} \log\left(\frac{3S A N^{+}_{t}(s,a)}{\delta} \right)} + \frac{28 \log\left(\frac{3S A N^{+}_{t}(s,a)}{\delta} \right)}{N^{+}_{t}(s,a)}.
%    \abs{\widehat{p}_{t}(s' \vert s,a) - p(s' \vert s,a)} \leq \beta_t(s,a,s') := 2 \sqrt{ \frac{\widehat{\sigma}^2_{t}(s'\vert s,a)}{N^{+}_{t}(s,a)} \log\left(\frac{6 S A N^{+}_{t}(s,a)}{\delta} \right)} + \frac{6 \log\left(\frac{6 S A N^{+}_{t}(s,a)}{\delta} \right)}{N^{+}_{t}(s,a)},
%\label{eq_empirical_bernstein_ineq}
%\end{align}
%The confidence intervals are constructed using the empirical Bernstein inequality, thus guaranteeing that $\mathbb{P}(\mathcal{E}') \geq 1 - \frac{\delta}{3}$, see e.g., \cite{cohen2020near, improved_analysis_UCRL2B}. The result follows from the fact that $\mathcal{E}' \subset \mathcal{E}$.
%\end{proof}

\subsection{Properties of the Optimistic Policies and Value Vectors}

We recall notation. Let us fix any round $k$ and any goal state $s^{\dagger} \in \mathcal{W}_k$. We denote by $\wt{\pi}_k^{\dagger}$ the greedy policy w.r.t.~$\wt{u}^{\dagger}_k(\cdot \rightarrow s^{\dagger})$ in the optimistic model $\wt{p}^{\dagger}_k$. Let $\wt{v}^{\dagger}_k(s \rightarrow s^{\dagger})$ be the value function of policy $\wt{\pi}^{\dagger}_k$ starting from state $s$ in the model $\wt{p}^{\dagger}_k$.
%\todompout{
%       Things to check.
%       To apply Lem.~\ref{lemma_app_value_iteration_SSP} we need:
%\textbf{1)} $c_{\min} > 0$ OK!
%\textbf{2)} Exists a proper policy $\to$ this is easy but we need to mention it. Maybe in previous section.
%Other things to check:
%\textbf{a)} SSP Diameter to tune $\gamma$
%}
We can apply Lem.\,\ref{lemma_app_value_iteration_SSP} given that the conditions of Asm.\,\ref{asm:ssp.mdp.app} hold (indeed, we have $c_{\min} = 1 > 0$ and there exists at least one proper policy to reach the goal state $s^{\dagger}$ since it belongs to $\mathcal{W}_k$). Moreover, we have that $\wt{V}_{\mathcal{K}_k}^{\star}(s_0 \rightarrow s^{\dagger}) \leq V_{\mathcal{K}_k}^{\star}(s_0 \rightarrow s^{\dagger})$ given the way the optimistic model $\wt{p}^{\dagger}_k$ is computed (i.e., by maximizing the probability of transitioning to the goal at any state-action pair), see \citep[][Lem.\,B.12]{cohen2020near}. Hence we get the two following important properties.

\begin{lemma}\label{lem:optimism}
    For any round $k$, goal state $s^{\dagger} \in \mathcal{W}_k$ and state $s \in \mathcal{K}_k \cup \{ x \}$, we have under the event~$\Theta$,
    \begin{align*}
        \widetilde{u}^{\dagger}_k(s \rightarrow s^{\dagger}) \leq V_{\mathcal{K}_k}^{\star}(s \rightarrow s^{\dagger}).
    \end{align*}
\end{lemma}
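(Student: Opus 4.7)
The plan is to decompose the inequality into two standard steps, corresponding to the two sources of ``favorable approximation'' in the construction of $\wt u^\dagger_k$: the conservative output of the VI scheme, and the optimistic tilt of the model $\wt p^\dagger_k$. First I would apply Lemma~\ref{lemma_app_value_iteration_SSP} (first component-wise inequality) to the execution of \VISSP on the optimistic SSP-MDP $\wt M^\dagger_k$ from Def.~\ref{def:ssp.mdp.onKk.optimistic}. The hypotheses of Asm.~\ref{asm:ssp.mdp.app} are satisfied in $\wt M^\dagger_k$: costs lie in $\{0,1\}$ with $c_{\min} = 1$ on non-goal states, and the existence of at least one proper policy can be argued from the fact that $s^\dagger \in \mathcal{W}_k$ together with the optimistic construction (which only increases the transition probability to $s^\dagger$). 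This gives $\wt u^\dagger_k \leq \wt V^\star_k$ componentwise, where $\wt V^\star_k$ denotes the optimal value function of $\wt M^\dagger_k$.

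The second step is the optimism claim $\wt V^\star_k \leq V^\star_k$, where $V^\star_k$ is the optimal value function of the true SSP-MDP $M^\dagger_k$. Under the event $\Theta$, Lemma~\ref{lemma_high_probability_intersection_bound_M_in_M_k} guarantees $p^\dagger_k \in C^\dagger_k$, i.e.\ the true kernel lies in the confidence set over which the optimistic model is selected. The model $\wt p^\dagger_k$ is chosen to push as much mass as possible onto the zero-cost absorbing goal $s^\dagger$ at every state-action pair, so for any state $s$ and any vector $U \geq 0$ with $U(s^\dagger) = 0$ we have $\wt p^\dagger_k(\cdot|s,a)^\top U \leq p^\dagger_k(\cdot|s,a)^\top U$. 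Applied to $U = V^\star_k$ and iterating the Bellman optimality operator of $\wt M^\dagger_k$ starting from $V^\star_k$ yields a monotone non-increasing sequence bounded below by $\wt V^\star_k$, which by the convergence of VI in SSPs under Asm.~\ref{asm:ssp.mdp.app} gives $\wt V^\star_k \leq V^\star_k$. This is exactly the content of Lem.~B.12 of \cite{cohen2020near}, which one may invoke directly.

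Finally, it remains to identify $V^\star_k(s \rightarrow s^\dagger)$ with $V^\star_{\mathcal{K}_k}(s \rightarrow s^\dagger)$ for $s \in \mathcal{K}_k \cup \{x\}$. By Def.~\ref{def:ssp.mdp.onKk}, the action space of $M^\dagger_k$ coincides with $\mathcal{A}$ on $\mathcal{K}_k$ and is restricted to $\{\RESET\}$ on $\{s^\dagger, x\}$, while transitions from $\mathcal{K}_k$ to states outside $\mathcal{K}_k \cup \{s^\dagger\}$ are collapsed into the zero-cost meta-state $x$ that resets to $s_0$. Any policy proper in $M^\dagger_k$ thus corresponds bijectively to a policy restricted on $\mathcal{K}_k$ in the original MDP with identical expected hitting time to $s^\dagger$, giving the identification of optimal values. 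Chaining the three inequalities concludes the proof.

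The main obstacle is the optimism step $\wt V^\star_k \leq V^\star_k$: unlike the finite-horizon case it is not immediate, because the optimistic Bellman operator must be shown to be a well-defined contraction/monotone fixed-point operator on the relevant domain, and the asymptotic VI convergence invoked above requires care at the boundary where some policies may be improper. I would circumvent this by directly applying Lem.~B.12 of \cite{cohen2020near}, which has been designed precisely for this optimistic SSP setting.
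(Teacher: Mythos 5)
Your proposal matches the paper's own argument: the paper proves Lem.~\ref{lem:optimism} by combining the first component-wise inequality of Lem.~\ref{lemma_app_value_iteration_SSP} applied in the optimistic model $\wt{p}^\dagger_k$ (valid since $c_{\min}=1$ and $s^\dagger\in\mathcal{W}_k$ guarantees a proper policy) with the optimism of the model construction $\wt{V}^\star_{\mathcal{K}_k}\leq V^\star_{\mathcal{K}_k}$, citing Lem.~B.12 of \cite{cohen2020near}, exactly as you do. Your extra care in identifying the optimal value of $M^\dagger_k$ with $V^\star_{\mathcal{K}_k}$ and in justifying the optimism step only makes explicit what the paper leaves implicit.
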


\begin{lemma}\label{lemma_relation_u_v}

For any round $k$, goal state $s^{\dagger} \in \mathcal{W}_k$ and state $s \in \mathcal{K}_k \cup \{ x \}$, we have %\todojout{TO SET $\epsilon'$.}
\begin{align*}
    \wt{v}^{\dagger}_k(s \rightarrow s^{\dagger}) \leq (1+ 2 \gamma) \wt{u}^{\dagger}_k(s \rightarrow s^{\dagger}).
\end{align*}
\end{lemma}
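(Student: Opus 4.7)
The plan is to recognize that Lemma \ref{lemma_relation_u_v} is essentially a direct consequence of the second bullet of Lemma \ref{lemma_app_value_iteration_SSP}, applied to the optimistic SSP-MDP $\wt{M}^\dagger_k(s^\dagger)$ from Def.\,\ref{def:ssp.mdp.onKk.optimistic}. Recall that \OVISSP computes $(\wt{u}^\dagger_k, \wt{\pi}^\dagger_k)$ by invoking \VISSP on $\wt{M}^\dagger_k$ with precision parameter $\gamma$, and by definition $\wt{v}^\dagger_k$ is the true value function of $\wt{\pi}^\dagger_k$ inside this same (optimistic) model. Lemma \ref{lemma_app_value_iteration_SSP} bounds exactly this gap, stating $V^{\wt{\pi}^\dagger_k} \leq \bigl(1 + \tfrac{2\gamma}{c_{\min}}\bigr) \wt{u}^\dagger_k$ provided $\gamma \leq c_{\min}/2$, so all that remains is to verify the hypotheses needed to invoke it.

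First, I would check the cost structure: by Def.\,\ref{def:ssp.mdp.onKk}, the cost function $c^\dagger_k$ assigns cost $1$ to every non-goal state-action pair and $0$ to the goal. Hence $c_{\min} = 1$, so the condition $\gamma \leq c_{\min}/2 = 1/2$ reduces to $\gamma \leq 1/2$, which holds for the choice $\gamma = \epsilon/(6L)$ used throughout the algorithm since $\epsilon \leq 1$ (enforced in line \ref{line_epsilon_min}) and $L \geq 1$. Next, I would verify Asm.\,\ref{asm:ssp.mdp.app}, in particular the existence of at least one proper policy in the optimistic model $\wt p^\dagger_k$: the filter defining $\mathcal{W}_k$ in step \ding{173} guarantees that there is a pair $(s,a) \in \mathcal{K}_k \times \mathcal{A}$ with $\wh p_k(s^\dagger \mid s,a) \geq (1-\epsilon/2)/L > 0$, and the optimistic construction (Def.\,\ref{def:ssp.mdp.onKk.optimistic}) only enlarges the transition mass to $s^\dagger$; combined with the \RESET action available at every state in $\mathcal{K}_k \cup \{x\}$, one can construct a proper policy that cycles through $s_0$ until reaching $s^\dagger$.

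With these hypotheses verified, applying Lemma \ref{lemma_app_value_iteration_SSP} to $\wt{M}^\dagger_k$ yields $\wt v^\dagger_k(s \rightarrow s^\dagger) \leq (1 + 2\gamma/c_{\min})\, \wt u^\dagger_k(s \rightarrow s^\dagger) = (1 + 2\gamma)\, \wt u^\dagger_k(s \rightarrow s^\dagger)$ component-wise for all $s \in \mathcal{K}_k \cup \{x\}$, which is precisely the claim. The only mild subtlety, and hence the ``main obstacle'', is the verification that Asm.\,\ref{asm:ssp.mdp.app} holds in the optimistic model rather than the true one; once this is dispatched via the $\mathcal{W}_k$ filter and the optimistic tilt toward the goal, the remainder is a mechanical instantiation of the previously established VI-for-SSP guarantee.
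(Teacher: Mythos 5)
Your proposal is correct and follows essentially the same route as the paper: the paper likewise obtains Lemma~\ref{lemma_relation_u_v} by invoking the second bullet of Lemma~\ref{lemma_app_value_iteration_SSP} in the optimistic model $\wt{p}^\dagger_k$, noting that $c_{\min}=1$ and that a proper policy exists because $s^\dagger \in \mathcal{W}_k$. Your additional checks (that $\gamma = \epsilon/(6L) \leq 1/2$ and that the optimistic construction only increases the mass toward the goal) are consistent with, and slightly more explicit than, the paper's one-line justification.
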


%%%%%%%%%%%%%%%%%%%%%%%%%%%%%%%%%%%%%%%%%%%%%%%%%%%%%%%%%%%%

\subsection{State Transfer from $\mathcal{U}$ to $\mathcal{K}$ (step \ding{175})}
\label{app_state_transfer}

%\todoj{Steps: 1) optimism $\wt{v} \leq V^{\star} \leq L$, 2) \VI accuracy $\wt{V} \lesssim L$, 3) Simulation lemma $V \lesssim \wt{V} \lesssim L$, 4) variance-aware techniques to bound $\abs{V - \wt{V}} \lesssim \epsilon$ since enough samples have been collected from each state-action pair, 5) conclude that $V \lesssim L + \epsilon$, i.e., that the state transfer is valid}

We fix any round $k$ and any goal state $s^{\dagger} \in \mathcal{W}_k$ that is added to the set of \say{controllable} states $\mathcal{K}$, i.e., for which $\wt{u}_k^{\dagger}(s_0 \rightarrow s^{\dagger}) \leq L$.

\begin{lemma}
Under the event $\Theta$, we have both following inequalities
\begin{align*}
        \begin{cases}
        v_k^{\dagger}(s_0 \rightarrow s^{\dagger}) \leq
            L + \epsilon, \\
        v_k^{\dagger}(s_0 \rightarrow s^{\dagger}) \leq
            V_{\mathcal{K}_k}^{\star}(s_0 \rightarrow s^{\dagger}) + \epsilon.
    \end{cases}
\end{align*}
In particular, the first inequality entails that $s^{\dagger} \in \mathcal{S}_{L+\epsilon}^{\rightarrow}$, which justifies the validity of the state transfer from $\mathcal{U}$ to $\mathcal{K}$.
\label{lemma_states_discovered}
\end{lemma}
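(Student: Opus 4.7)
The plan is to follow the four-step chain of inequalities already previewed in Eq.~\ref{chain_inequalities} of the main text, namely
\begin{align*}
v_k^{\dagger}(s_0 \to s^{\dagger}) \;\leq\; \bigl|v_k^{\dagger}(s_0 \to s^{\dagger}) - \wt v_k^{\dagger}(s_0 \to s^{\dagger})\bigr| \;+\; \wt v_k^{\dagger}(s_0 \to s^{\dagger}) \;\leq\; \tfrac{\epsilon}{2} \;+\; \wt v_k^{\dagger}(s_0 \to s^{\dagger}),
\end{align*}
and then bound $\wt v_k^{\dagger}$ by $\wt u_k^{\dagger} + \tfrac{\epsilon}{2}$ via the VI precision, before closing with either the algorithmic stop rule $\wt u_k^{\dagger}(s_0 \to s^{\dagger}) \leq L$ (for the first bound) or the optimism inequality $\wt u_k^{\dagger}(s_0 \to s^{\dagger}) \leq V^{\star}_{\mathcal{K}_k}(s_0 \to s^{\dagger})$ of Lem.~\ref{lem:optimism} (for the second bound).

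For the first step, I would invoke the SSP simulation lemma (Lem.~\ref{lemma_simulation_ssp}) applied to the policy $\wt \pi_k^{\dagger}$ in the two SSP-MDPs $M_k^{\dagger}(s^{\dagger})$ and $\wt M_k^{\dagger}(s^{\dagger})$. By Lem.~\ref{lemma_ptilde_in_P_eta} we already have $\wt p_k^{\dagger} \in \mathcal{P}^{(p_k^{\dagger})}_{\eta_k}$ with $\eta_k = 4\beta_k(s,a,x)$. Plugging in the definition of $\beta_k$ from Eq.~\ref{bonuses}, using the algorithmic guarantee $N_k(s,a) \geq n_k = \phi(\mathcal{K}_k)$ for all $(s,a) \in \mathcal{K}_k \times \mathcal{A}$, and applying Cauchy--Schwarz across the sum over next states $y \in \mathcal{K}_k \cup \{s^{\dagger}\}$, yields an $\wt O\bigl(\sqrt{\wh\Theta(\mathcal{K}_k)/n_k} + |\mathcal{K}_k|/n_k\bigr)$ control on $\eta_k$. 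The quantitative simulation bound $\|v_k^{\dagger} - \wt v_k^{\dagger}\|_\infty \leq 7\eta_k \|\wt v_k^{\dagger}\|_\infty^2 / c_{\min}$ together with $c_{\min} = 1$ and $\|\wt v_k^{\dagger}\|_\infty \leq (1+2\gamma)L$ (Lem.~\ref{lemma_relation_u_v} combined with the algorithmic gate $\wt u_k^{\dagger} \leq L$) then translates the requirement $\|v_k^{\dagger} - \wt v_k^{\dagger}\|_\infty \leq \epsilon/2$ into a lower bound on $n_k$ that is exactly absorbed by the definition of the allocation function $\phi$ in Eq.~\ref{allocation_function}.

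The second step is to choose the VI precision $\gamma = \tfrac{\epsilon}{6L}$, so that Lem.~\ref{lemma_relation_u_v} gives $\wt v_k^{\dagger}(s_0 \to s^{\dagger}) \leq (1 + \tfrac{\epsilon}{3L})\wt u_k^{\dagger}(s_0 \to s^{\dagger}) \leq \wt u_k^{\dagger}(s_0 \to s^{\dagger}) + \tfrac{\epsilon}{2}$, where the last inequality uses $\wt u_k^{\dagger}(s_0 \to s^{\dagger}) \leq L$ (the condition for entering step \ding{175}) together with $\epsilon \in (0,1]$. Concatenating the two steps gives $v_k^{\dagger}(s_0 \to s^{\dagger}) \leq \wt u_k^{\dagger}(s_0 \to s^{\dagger}) + \epsilon$. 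The first inequality of the lemma then follows by plugging in $\wt u_k^{\dagger}(s_0 \to s^{\dagger}) \leq L$, and the second by plugging in $\wt u_k^{\dagger}(s_0 \to s^{\dagger}) \leq V^{\star}_{\mathcal{K}_k}(s_0 \to s^{\dagger})$ from Lem.~\ref{lem:optimism}. The last claim of the lemma ($s^{\dagger} \in \mathcal{S}_{L+\epsilon}^{\rightarrow}$) is immediate: since $\wt \pi_k^{\dagger}$ is restricted on $\mathcal{K}_k$ and, by induction on $k$, $\mathcal{K}_k \subseteq \mathcal{S}_{L+\epsilon}^{\rightarrow}$, any partial order placing $s^{\dagger}$ after all states in $\mathcal{K}_k$ certifies incremental $(L+\epsilon)$-controllability via $v_k^{\dagger}(s_0 \to s^{\dagger}) \leq L + \epsilon$.

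The main obstacle is the bookkeeping in the first step: one must simultaneously verify the precondition $\eta_k \|\wt v_k^{\dagger}\|_\infty \leq 2c_{\min} = 2$ required to \emph{apply} Lem.~\ref{lemma_simulation_ssp} (which is what ensures $\wt \pi_k^{\dagger}$ is proper in the true restricted model $p_k^{\dagger}$, and hence that the finite bound on $v_k^{\dagger}$ is even meaningful) and the tighter bound $\eta_k \|\wt v_k^{\dagger}\|_\infty^2/c_{\min} \leq \epsilon/14$ used to cap the simulation error by $\epsilon/2$. Both translate into lower bounds on $n_k$ of the same functional shape, with the dominant term scaling as $L^4 \wh\Theta(\mathcal{K}_k)\epsilon^{-2}$ and a secondary term $L^2|\mathcal{K}_k|\epsilon^{-1}$ coming from the $6\log(\cdot)/N^+$ piece of the Bernstein bonus; this is precisely why the allocation function in Eq.~\ref{allocation_function} takes the two-term form it does, and a careful tracking of the logarithmic factors and of the maximum over $(s,a) \in \mathcal{K}_k \times \mathcal{A}$ in the definition of $\wh\Theta$ is what makes the choice of $\phi$ tight.
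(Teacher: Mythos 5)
Your proposal follows essentially the same route as the paper's proof: control $\abs{v_k^{\dagger}-\wt{v}_k^{\dagger}}$ via the SSP simulation lemma (Lem.\,\ref{lemma_simulation_ssp}, entered through Lem.\,\ref{lemma_ptilde_in_p_eta_placeholder}\hspace{-\widthof{\ref{lemma_ptilde_in_p_eta_placeholder}}}\ref{lemma_ptilde_in_P_eta}), pass from $\wt{v}_k^{\dagger}$ to $\wt{u}_k^{\dagger}$ via the \VI precision (Lem.\,\ref{lemma_relation_u_v}), and close with either the algorithmic gate $\wt{u}_k^{\dagger}(s_0\rightarrow s^{\dagger})\leq L$ or optimism (Lem.\,\ref{lem:optimism}); you also correctly flag the need to verify condition \eqref{eq_key_sim_lemma_ssp} and correctly handle the final incremental-controllability claim. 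The only substantive differences are how the $\epsilon$ budget is split across the two error sources and your use of the two-sided $7\eta\norm{\wt{v}}_{\infty}^2$ form of the simulation bound where the paper uses the one-sided multiplicative form; both work.

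One step of your justification is incomplete, though only at the level of constants. You assert $\norm{\wt{v}_k^{\dagger}}_{\infty}\leq(1+2\gamma)L$ by combining Lem.\,\ref{lemma_relation_u_v} with the gate $\wt{u}_k^{\dagger}\leq L$, but the gate only controls the value at $s_0$, i.e., $\wt{u}_k^{\dagger}(s_0\rightarrow s^{\dagger})\leq L$, whereas the simulation lemma requires a bound on $\wt{v}_k^{\dagger}(s\rightarrow s^{\dagger})$ uniformly over $s\in\mathcal{K}_k$. The paper supplies the missing link via the \RESET action (Asm.\,\ref{assumption_reset}): for any $s\in\mathcal{K}_k$ one has $\wt{V}^{\star}_{\mathcal{K}_k}(s\rightarrow s^{\dagger})\leq 1+\wt{V}^{\star}_{\mathcal{K}_k}(s_0\rightarrow s^{\dagger})$, so the correct uniform bound is $L+1+O(\epsilon)$ rather than $(1+2\gamma)L$. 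This only perturbs the constants absorbed into the allocation function $\phi$ (the paper's auxiliary accuracy parameter explicitly carries the factor $L+1+\epsilon$ for exactly this reason), so your argument goes through once this step is patched.
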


\begin{proof}
We have
\begin{align}
    \wt{v}_k^{\dagger}(s_0 \rightarrow s^{\dagger}) \myineeqa (1+2 \gamma) \wt{u}_k^{\dagger}(s_0 \rightarrow s^{\dagger}) \leq \left\{
    \begin{array}{ll}
        \myineeqb L + \frac{\epsilon}{3} \\
      \myineeqc V_{\mathcal{K}_k}^{\star}(s_0 \rightarrow s^{\dagger}) + \frac{\epsilon}{3},
    \end{array}
\right.
\label{ineq_}
\end{align}
where inequality (a) comes from Lem.\,\ref{lemma_relation_u_v}, inequality (b) combines the algorithmic condition $\wt{u}_k^{\dagger}(s_0 \rightarrow s^{\dagger}) \leq L$ and the \VI precision level $\gamma := \frac{\epsilon}{6 L}$, and finally inequality (c) combines Lem.\,\ref{lem:optimism} and the \VI precision level. Moreover, for any state in $\mathcal{K}_k$,
\begin{align*}
    \wt{v}_k^{\dagger}(s \rightarrow s^{\dagger}) \myineeqa \wt{V}_{\mathcal{K}_k}^{\star}(s \rightarrow s^{\dagger}) + \frac{\epsilon}{3} \myineeqb \wt{V}_{\mathcal{K}_k}^{\star}(s_0 \rightarrow s^{\dagger}) + 1 + \frac{\epsilon}{3} \leq \wt{v}_k^{\dagger}(s_0 \rightarrow s^{\dagger}) + 1 + \frac{\epsilon}{3},
\end{align*}
where (a) comes from Lem.\,\ref{lem:optimism} and (b) stems from the presence of the \RESET action (Asm.\,\ref{assumption_reset}).

We now provide the exact choice of allocation function $\phi$ in Alg.\,\ref{algorithm_SSP_generative_model}. We introduce $$\gamma := \frac{2 \epsilon}{12 (L + 1 + \epsilon)(L + \frac{\epsilon}{3})}.$$ (Note that $\gamma = O(\epsilon/ L^2)$.) We set the following requirement of samples for each state-action pair $(s,a)$ at round $k$,
\begin{align}\label{budget_modest.detail}
    n_k = \phi(\mathcal{K}_k) =  \left\lceil \frac{57 X_k^2}{\gamma^2} \left[ \log\left(\frac{8 e X_k \sqrt{2 S A} }{\sqrt{\delta}\gamma}   \right) \right]^2 + \frac{24 \abs{\mathcal{S}_k^{\dagger}}}{\gamma} \log\left( \frac{24 \abs{\mathcal{S}_k^{\dagger}} S A}{\delta \gamma} \right) \right\rceil,
%    \Phi_t(s,a,s') := \frac{114 \widehat{p}_{t}(s'\vert s,a)}{\eta^2} \left[ \log\left(\frac{16 e \sqrt{S A \widehat{p}_{t}(s'\vert s,a)} }{\sqrt{\delta}\eta}   \right) \right]^2 + \frac{112}{\eta} \log\left( \frac{56 S A}{\delta \eta} \right).
\end{align}
%\todojout{we don't need to recall the convention $0\log(0) = 0$ right?}
where we define $$X_k := \max_{(s,a) \in \mathcal{S}_k^{\dagger} \times \cA } \sum_{s' \in \mathcal{S}_k^{\dagger} } \sqrt{\widehat{\sigma}^2_{k}(s'\vert s,a)},$$
with $\widehat{\sigma}^2_{k}(s'\vert s,a) := \wh p_k^{\dagger}(s'\vert s,a) (1-\wh p_k^{\dagger}(s'\vert s,a))$ the estimated variance of the transition from $(s,a)$ to $s'$. Leveraging the empirical Bernstein inequality (Lem.\,\ref{lemma_high_prob}) and perfoming simple algebraic manipulations (see e.g., \cite[][Lem.\,8 and 9]{kazerouni2017conservative}) yields that $\beta_k(s,a,x) \leq \gamma$. From Lem.\,\ref{lemma_ptilde_in_P_eta}, this implies that $\wt{p}_k^{\dagger} \in \mathcal{P}^{(p_k^{\dagger})}_{\eta}$ with $\eta := 4 \gamma$.
%There exists an accuracy $\eta = O(\frac{\epsilon}{L^2})$ such that $\wt{p}_k^{\dagger} \in \mathcal{P}^{(p_k^{\dagger})}_{\eta}$.
We can then apply Lem.\,\ref{lemma_simulation_ssp} (whose condition~\ref{eq_key_sim_lemma_ssp} is verified), which gives
\begin{align}
    v_k^{\dagger}(s_0 \rightarrow s^{\dagger}) &\leq \left(1 + \eta \norm{\wt{v}_k^{\dagger}(\cdot \rightarrow s^{\dagger})}_{\infty} \right) \wt{v}_k^{\dagger}(s_0 \rightarrow s^{\dagger}) \label{eq_application_sim_lemma} \\
    &\leq \left(1 + \eta (L + 1 + \epsilon) \right) \wt{v}_k^{\dagger}(s_0 \rightarrow s^{\dagger}) \nonumber \\
    &\leq  \wt{v}_k^{\dagger}(s_0 \rightarrow s^{\dagger}) + \frac{2\epsilon}{3}, \nonumber
\end{align}
where the last inequality uses that $\eta (L + 1 + \epsilon)(L + \frac{\epsilon}{3}) = \frac{2 \epsilon}{3}$ by definition of $\gamma$. Plugging in Eq.\,\ref{ineq_} yields the sought-after inequalities.

\end{proof}

%%%%%%%%%%%%%%%%%%%%%%%%%%%%%%%%%%%%%%%%%%%%%%%%%%%%%%%%%%%%

\subsection{Termination of the Algorithm}
\label{app_stopping_condition_algorithm}

\begin{lemma}[Variant of Lem.\,17 of \cite{lim2012autonomous}] Suppose that for every state $s \in \mathcal{S}$, each action $a \in \mathcal{A}$ is executed $b \geq \lceil L \log\left( \frac{3 A L S}{\delta}\right) \rceil$ times. Let $\mathcal{S}'_{s,a}$ be the set of all next states visited during the $b$ executions of $(s,a)$. Denote by $\Lambda$ the complementary of the event
\begin{align*}
\left\{ \exists (s',s,a) \in \mathcal{S}^2 \times \mathcal{A} : p(s' \vert s,a) \geq \frac{1}{L} \wedge s' \notin \mathcal{S}'_{s,a} \right\}.
\end{align*}
Then $\mathbb{P}( \Lambda) \geq 1 - \frac{\delta}{3}$.
%\begin{align*}
%    \mathbb{P}\bigg(  \bigg) \leq \frac{\delta}{4}.
%\end{align*}
\label{lemma_high_prob_visit}
\end{lemma}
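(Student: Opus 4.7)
}

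The plan is a direct missing-mass argument followed by a union bound. First, I fix an arbitrary triple $(s,a,s') \in \mathcal{S}^2 \times \mathcal{A}$ such that $p(s' \vert s,a) \geq 1/L$. Since the $b$ executions of $(s,a)$ generate i.i.d.\ draws from $p(\cdot \vert s,a)$, the probability that none of them lands on $s'$ is
\begin{align*}
\bigl(1 - p(s' \vert s,a)\bigr)^b \leq \left(1 - \frac{1}{L}\right)^b \leq \exp\left(-\frac{b}{L}\right).
\end{align*}

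Next I would bound the number of triples $(s,a,s')$ to which this argument must be applied. Since probabilities sum to one, for each fixed $(s,a) \in \mathcal{S} \times \mathcal{A}$ there are at most $L$ next states $s'$ with $p(s' \vert s,a) \geq 1/L$. Hence the collection of triples $(s,a,s')$ satisfying the precondition has cardinality at most $S A L$. A union bound then yields
\begin{align*}
\mathbb{P}(\Lambda^c) \leq S A L \cdot \exp\left(-\frac{b}{L}\right).
\end{align*}
Finally, plugging the assumed lower bound $b \geq \lceil L \log(3ALS/\delta)\rceil$ gives $\exp(-b/L) \leq \delta/(3ALS)$, so $\mathbb{P}(\Lambda^c) \leq \delta/3$, which is the claim.

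There is no real obstacle in this proof: the only slightly delicate point is remembering to exploit that at most $L$ next states per state-action pair can have probability $\geq 1/L$, so that the union-bound factor is $SAL$ (matching the $\log(3ALS/\delta)$ inside the assumed size of $b$) rather than $S^2 A$. Everything else is elementary.
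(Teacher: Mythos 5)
Your proof is correct. The paper itself states this lemma without proof, deferring to Lemma~17 of~\cite{lim2012autonomous}, and your argument is exactly the standard one that establishes it: a missing-mass bound $(1-1/L)^b \leq e^{-b/L}$ per triple, combined with the observation that each $(s,a)$ admits at most $L$ next states of probability at least $1/L$, so the union bound runs over at most $SAL$ triples --- which is precisely what is needed to match the $\log(3ALS/\delta)$ factor in the assumed size of $b$ and land on the $\delta/3$ failure probability.
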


\begin{lemma} Under the event $\Theta \cap \Lambda$, for any round $k$, either $\mathcal{S}_L^{\rightarrow} \subseteq \mathcal{K}_k$, or there exists a state $s^{\dagger} \in \mathcal{S}_L^{\rightarrow} \setminus \mathcal{K}_k$ such that $s^{\dagger} \in \mathcal{W}_k$ and is $L$-controllable with a policy restricted to $\mathcal{K}_k$. Moreover, $\abs{\mathcal{W}_k} \leq 2 L A \abs{\mathcal{K}_k}$.
\label{lem:W_k}
\end{lemma}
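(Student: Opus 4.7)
The plan is to handle the two assertions of the lemma in turn. For the dichotomy, I assume that $\mathcal{S}_L^{\rightarrow} \not\subseteq \mathcal{K}_k$ and exhibit a witness $s^{\dagger}$. By Def.\,\ref{definition_incremental_set}, $\mathcal{S}_L^{\rightarrow} = \bigcup_{\prec} \mathcal{S}_L^{\prec}$, so I can pick any partial order $\prec$ for which $\mathcal{S}_L^{\prec} \setminus \mathcal{K}_k \neq \emptyset$ and take $s^{\dagger}$ to be a $\prec$-minimal element of this difference (which exists because $\mathcal{S}$ is finite). By $\prec$-minimality, every element of $\mathcal{S}_L^{\prec}$ strictly below $s^{\dagger}$ already belongs to $\mathcal{K}_k$; the inductive definition of $\mathcal{S}_L^{\prec}$ then yields a policy $\pi$ restricted on $\{s' \in \mathcal{S}_L^{\prec} : s' \prec s^{\dagger}\} \subseteq \mathcal{K}_k$ with $v_{\pi}(s_0 \rightarrow s^{\dagger}) \leq L$. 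This immediately delivers the claimed $L$-controllability of $s^{\dagger}$ via a policy restricted on $\mathcal{K}_k$ (note that $s^{\dagger} \neq s_0$ since $s_0 \in \mathcal{K}_k$ by initialization).

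To verify that $s^{\dagger} \in \mathcal{W}_k$, I would use a flow/occupancy argument on $\pi$. Let $\mu_{\pi}(s)$ denote the expected number of visits to $s$ by $\pi$ starting from $s_0$ before hitting $s^{\dagger}$. Since $\pi$ reaches $s^{\dagger}$ with probability one (because $v_{\pi}(s_0 \rightarrow s^{\dagger}) < +\infty$) and $\pi(s) = \RESET$ for $s \notin \mathcal{K}_k$ (which entails $p(s^{\dagger} \vert s, \pi(s)) = 0$ for such states, as $s^{\dagger} \neq s_0$), we obtain the two identities
\begin{align*}
\sum_{s \in \mathcal{K}_k} \mu_{\pi}(s) = v_{\pi}(s_0 \rightarrow s^{\dagger}) \leq L, \qquad \sum_{s \in \mathcal{K}_k} \mu_{\pi}(s) \, p(s^{\dagger} \vert s, \pi(s)) = 1,
\end{align*}
which together imply by averaging that some $(s,a) \in \mathcal{K}_k \times \mathcal{A}$ satisfies $p(s^{\dagger} \vert s,a) \geq 1/L$. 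Now step\,\ding{172} ensures $n_k$ samples at every such pair, so under $\Lambda$ (Lem.\,\ref{lemma_high_prob_visit}) the state $s^{\dagger}$ is observed as a successor and recorded in $\mathcal{U}_k$, while under $\Theta$ the Bernstein bonus (Eq.\,\ref{bonuses}) gives $\abs{\wh{p}_k(s^{\dagger} \vert s,a) - p(s^{\dagger} \vert s,a)} \leq \beta_k(s,a,s^{\dagger}) \leq \gamma$. The allocation function in Eq.\,\ref{budget_modest.detail} is calibrated so that $\gamma \leq \epsilon/(2L)$, from which $\wh{p}_k(s^{\dagger} \vert s,a) \geq 1/L - \epsilon/(2L) = (1-\epsilon/2)/L$ and thus $s^{\dagger} \in \mathcal{W}_k$.

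For the cardinality bound, I would rely on a direct counting argument. By definition of $\mathcal{W}_k$, every $s' \in \mathcal{W}_k$ has a witness $(s,a) \in \mathcal{K}_k \times \mathcal{A}$ with $\wh{p}_k(s' \vert s,a) \geq (1-\epsilon/2)/L \geq 1/(2L)$, using $\epsilon \leq 1$. Since $\wh{p}_k(\cdot \vert s,a)$ is a probability distribution, for each fixed $(s,a)$ at most $2L$ successors can satisfy this lower bound. A union bound over the $\abs{\mathcal{K}_k} \cdot A$ pairs in $\mathcal{K}_k \times \mathcal{A}$ then yields $\abs{\mathcal{W}_k} \leq 2 L A \abs{\mathcal{K}_k}$. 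The only subtle step is the averaging argument in the dichotomy, which tacitly uses that $\pi$ is proper; this is guaranteed by $v_{\pi}(s_0 \rightarrow s^{\dagger}) \leq L < +\infty$, so no additional obstacle arises. The remainder amounts to standard concentration bookkeeping already developed in App.\,\ref{app_full_proof}.
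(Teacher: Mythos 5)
Your proposal is correct and takes essentially the same route as the paper's proof: pick a witness $s^{\dagger}\in\mathcal{S}_L^{\rightarrow}\setminus\mathcal{K}_k$ reachable in at most $L$ steps by a policy restricted on $\mathcal{K}_k$, deduce a pair $(s,a)\in\mathcal{K}_k\times\mathcal{A}$ with $p(s^{\dagger}\vert s,a)\geq 1/L$, invoke $\Lambda$ for discovery and $\Theta$ for the estimate $\wh{p}_k(s^{\dagger}\vert s,a)\geq (1-\epsilon/2)/L$, and bound $\abs{\mathcal{W}_k}$ by the same per-pair counting argument. You additionally spell out two steps the paper leaves implicit (the $\prec$-minimality selection of $s^{\dagger}$ and the occupancy-measure justification of $p(s^{\dagger}\vert s,a)\geq 1/L$), both correctly, modulo the harmless imprecision that $\sum_{s\in\mathcal{K}_k}\mu_{\pi}(s)\leq v_{\pi}(s_0\rightarrow s^{\dagger})$ holds with equality only when $\pi$ never visits states outside $\mathcal{K}_k$ --- the inequality is all your averaging argument needs.
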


\begin{proof}[Proof of Lem.\,\ref{lem:W_k}]
Consider a round $k$ such that $\mathcal{S}_L^{\rightarrow} \setminus \mathcal{K}_k$ is non-empty.
%, there exists a policy restricted to $\mathcal{K}_k$ that can reach $s^{\dagger}$ in $L$ steps. Hence there is a state $s \in \mathcal{K}_k$ and an action $a \in \mathcal{A}$ such that $p(s^{\dagger} \vert s,a) \geq \frac{1}{L}$.
Due to the incremental construction of the set $\mathcal{S}_L^{\rightarrow}$ (Def.~\ref{definition_incremental_set}), there exists a state $s^{\dagger} \in \mathcal{S}^{\rightarrow}_L$ and a policy restricted to $\mathcal{K}_k$ that can reach $s^{\dagger}$ in at most $L$ steps (in expectation). Hence there exists a state-action pair $(s,a) \in \mathcal{K}_k \times \mathcal{A}$ such that $p(s^{\dagger} \vert s,a) \geq \frac{1}{L}$. Since $\phi(\mathcal{K}_k) \geq \lceil L \log\left( \frac{3 A L S}{\delta}\right) \rceil$ samples are available at each state-action pair, according to Lem.\,\ref{lemma_high_prob_visit}, we get that, under the event $\Lambda$, $s^{\dagger}$ is found during the sample collection procedure for the state-action pair $(s,a)$ (step \ding{172}), which implies that $s^{\dagger} \in \mathcal{U}_k$.

Moreover, the choice of allocation function $\phi$ guarantees in particular that there are more than $\Omega(\frac{4 L^2}{\epsilon^2} \log( \frac{2 L S A}{\delta \epsilon} ))$ samples available at each state-action pair $(s,a) \in \mathcal{K}_k \times \cA$. From the empirical Bernstein inequality of Eq.\,\ref{empirical_b_ineq}, we thus have that $\abs{p(s^{\dagger} \vert s,a) - \wh{p}_k(s^{\dagger} \vert s,a)} \leq \frac{\epsilon}{2 L}$ under the event $\Theta$. %\todojout{staying voluntarily loose in terms of logarithmic and constant factors for ease of exposition... is it okay?}
Consequently we have
\begin{align*}
    \wh{p}_k(s^{\dagger} \vert s,a) \geq \frac{1}{L} - \abs{p(s^{\dagger} \vert s,a) - \wh{p}_k(s^{\dagger} \vert s,a)} \geq \frac{1 - \frac{\epsilon}{2 }}{L},
    % \wh{p}_k(s^{\dagger} \vert s,a) \geq \frac{1}{L} - \frac{\epsilon' L}{H^2} \geq \left( 1 - \frac{\epsilon'}{(1+\epsilon')^2}\right) \frac{1}{L} \geq \frac{0.9}{L},
\end{align*}
% where we used that  and that the inequality $1 - \frac{x}{(1+x)^2} \geq 0.9$ is true for any $x \in (0, \frac{1}{9}]$.
which implies that $s^{\dagger} \in \mathcal{W}_k$. Furthermore, we can decompose $\mathcal{W}_k$ the following way
\begin{align*}
    \mathcal{W}_k = \bigcup_{(s,a) \in \mathcal{K}_k \times \mathcal{A}} \mathcal{Y}_k(s,a),
\end{align*}
where we introduce the subset
\begin{align*}
    \mathcal{Y}_k(s,a) := \left\{ s' \in \mathcal{U}_k: \wh{p}_k(s' \vert s,a) \geq \frac{1 - \frac{\epsilon}{2 }}{L} \right\}.
\end{align*}
We then have
\begin{align*}
    1 = \sum_{s' \in \mathcal{S}} \wh{p}_k(s' \vert s,a) \geq \sum_{s' \in \mathcal{Y}_k(s,a)} \wh{p}_k(s' \vert s,a) \geq\frac{1 - \frac{\epsilon}{2 }}{L} \abs{\mathcal{Y}_k(s,a)}.
\end{align*}
We conclude the proof by writing that
\begin{align*}
    \abs{\mathcal{W}_k} \leq \sum_{(s,a) \in \mathcal{K}_k \times \mathcal{A}} \abs{\mathcal{Y}_k(s,a)} \leq \frac{L}{1 - \frac{\epsilon}{2 }} A \abs{\mathcal{K}_k} \leq 2 L A \abs{\mathcal{K}_k},
\end{align*}
where the last inequality uses that $\epsilon \leq 1$ (from line \ref{line_epsilon_min} of Alg.\,\ref{algorithm_SSP_generative_model}).
\end{proof}

%\begin{proof}[Proof of Lem.\,\ref{lem:stopping_condition}]
%The proof follows the analysis of \cite[][Sect.\,4.2]{lim2012autonomous}.
%Take a round $k$ such that there exists a state $s^{\dagger} \in \mathcal{S}_L^{\rightarrow} \setminus \mathcal{K}_k$. Due to the incremental construction of the set $\mathcal{S}_L^{\rightarrow}$ (Def.~\ref{definition_incremental_set}), there exists a policy restricted to $\mathcal{K}_k$ that can reach $s^{\dagger}$ in $L$ steps. Hence there is a state $s \in \mathcal{K}_k$ and an action $a \in \mathcal{A}$ such that $p(s^{\dagger} \vert s,a) \geq \frac{1}{L}$. According to Lem.\,\ref{lemma_high_prob_visit} (which is very similar to \citealp[][Lem.\,17]{lim2012autonomous}) and since $B_k \geq \lceil L \log\left( \frac{8 A L S}{\delta}\right) \rceil$, we get that with high probability, $s^{\dagger}$ is found during the sample collection procedure for the state-action pair $(s,a)$, which implies that $s^{\dagger} \in \mathcal{U}_k$. %Furthermore, it is indeed $L$-controllable with a policy restricted to $\mathcal{K}_k$, which corresponds to the policy $\mu_s$ that takes action $a$ instead of \RESET at state $s$.
%\end{proof}

\begin{lemma}
Under the event $\Theta \cap \Lambda$, when either condition \StopOne or \StopTwo is triggered (at a round indexed by $K$), we have $\mathcal{S}_L^{\rightarrow} \subseteq \mathcal{K}_K$.
\label{lemma_ending_2}
\end{lemma}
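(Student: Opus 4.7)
The plan is to argue by contrapositive: I will show that if $\mathcal{S}_L^{\rightarrow} \not\subseteq \mathcal{K}_K$ at the round $K$ when the algorithm terminates, then neither stopping condition \StopOne nor \StopTwo can be triggered, yielding a contradiction with the fact that the algorithm exited the main loop.

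So suppose, under $\Theta \cap \Lambda$, that $\mathcal{S}_L^{\rightarrow} \setminus \mathcal{K}_K$ is non-empty. Then Lem.\,\ref{lem:W_k} directly applies and produces a state $s^{\dagger} \in \mathcal{S}_L^{\rightarrow} \setminus \mathcal{K}_K$ such that $s^{\dagger} \in \mathcal{W}_K$, and such that $s^{\dagger}$ admits a policy restricted to $\mathcal{K}_K$ that reaches it from $s_0$ in at most $L$ steps in expectation. This immediately rules out condition \StopOne, since by construction this condition requires $\mathcal{W}_K = \emptyset$, whereas we have exhibited $s^{\dagger} \in \mathcal{W}_K$.

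It remains to rule out \StopTwo. The existence of a policy in $\Pi(\mathcal{K}_K)$ reaching $s^{\dagger}$ in at most $L$ expected steps translates, in the SSP-MDP $M^\dagger_K(s^\dagger)$ of Def.\,\ref{def:ssp.mdp.onKk}, into $V^{\star}_{\mathcal{K}_K}(s_0 \rightarrow s^{\dagger}) \leq L$. Invoking the optimism guarantee of Lem.\,\ref{lem:optimism} at round $K$ for the goal $s^{\dagger} \in \mathcal{W}_K$ then yields
\begin{align*}
\wt{u}_{s^{\dagger}}(s_0) \;\leq\; V^{\star}_{\mathcal{K}_K}(s_0 \rightarrow s^{\dagger}) \;\leq\; L.
\end{align*}
Therefore $\wt{u}^{\dagger} = \min_{s \in \mathcal{W}_K} \wt{u}_s(s_0) \leq \wt{u}_{s^{\dagger}}(s_0) \leq L$, which contradicts the triggering of \StopTwo (which requires $\wt{u}^{\dagger} > L$).

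Since the algorithm terminated at round $K$, at least one of the two stopping conditions must have fired; hence the assumption $\mathcal{S}_L^{\rightarrow} \not\subseteq \mathcal{K}_K$ cannot hold, giving $\mathcal{S}_L^{\rightarrow} \subseteq \mathcal{K}_K$ as desired. The only subtle point is verifying the applicability of Lem.\,\ref{lem:W_k} at the termination round $K$ (in particular the fact that enough samples have been collected at each $(s,a) \in \mathcal{K}_K \times \mathcal{A}$ to apply both the empirical Bernstein step and Lem.\,\ref{lemma_high_prob_visit}), but this is guaranteed by the sample-collection step \ding{172} executed at the beginning of round $K$ before either stopping condition is evaluated, together with the allocation function $\phi$ being monotone in its argument.
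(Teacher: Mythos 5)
Your proof is correct and uses exactly the same ingredients as the paper's own argument --- the dichotomy of Lem.\,\ref{lem:W_k} combined with the optimism inequality $\wt{u}_{s^{\dagger}}(s_0) \leq V^{\star}_{\mathcal{K}_K}(s_0 \rightarrow s^{\dagger})$ from Lem.\,\ref{lem:optimism} --- merely reorganized as a contrapositive rather than the paper's case split on which stopping condition fired. This is the same proof in substance, and your closing remark about step \ding{172} preceding the evaluation of the stopping conditions is a valid (if implicit in the paper) point.
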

\begin{proof}
If condition \StopOne is triggered, Lem.\,\ref{lem:W_k} immediately guarantees that $\mathcal{S}_L^{\rightarrow} \subseteq \mathcal{K}_K$ under the event $\Lambda$. If condition \StopTwo is triggered, we have for all $s \in \mathcal{W}_K$, $\wt{u}_{s}(s_0 \rightarrow s) > L$. From Lem.\,\ref{lem:optimism} this means that, under the event $\Theta$, for all $s \in \mathcal{W}_K$, $V^{\star}_{\mathcal{K}_K}(s_0 \rightarrow s) > L$. Hence none of the states in $\mathcal{W}_K$ can be reached in at most $L$ steps (in expectation) with a policy restricted to $\mathcal{K}_K$. We conclude the proof using Lem.\,\ref{lem:W_k}.
\end{proof}

%%%%%%%%%%%%%%%%%%%%%%%%%%

\begin{lemma}
Under the event $\Theta \cap \Lambda$, when \ALGO terminates at round $K$, for any state $s \in \mathcal{K}_K$, the policy $\pi_s$ computed during step \ding{176} verifies
\begin{align*}
    v_{\pi_{s}}(s_0 \rightarrow s) \leq \min_{\pi \in \Pi(\mathcal{S}_L^{\rightarrow})} v_{\pi}(s_0 \rightarrow s) + \epsilon.
\end{align*}
Moreover, we have that $\mathcal{S}_{L}^{\rightarrow} \subseteq \mathcal{K}_K \subseteq \mathcal{S}_{L+\epsilon}^{\rightarrow}$.
\label{lemma_final_policies_shortest_path}
\end{lemma}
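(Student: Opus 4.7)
I would combine the two set-inclusion results with the chain of inequalities~(\ref{chain_inequalities_2}) sketched in Section~\ref{proof_sketch}. First, Lemma~\ref{lemma_ending_2} directly gives $\mathcal{S}_L^{\rightarrow}\subseteq\mathcal{K}_K$ under $\Theta\cap\Lambda$. For the reverse inclusion $\mathcal{K}_K\subseteq \mathcal{S}_{L+\epsilon}^{\rightarrow}$, I would proceed by induction on the round index: $\mathcal{K}_0=\{s_0\}\subseteq\mathcal{S}_{L+\epsilon}^{\rightarrow}$, and whenever a state $s^\dagger$ is transferred at round $k$ (step~\ding{175}), its candidate policy $\wt\pi_{s^\dagger}$ is restricted on $\mathcal{K}_k\subseteq\mathcal{S}_{L+\epsilon}^{\rightarrow}$ by the induction hypothesis, while Lemma~\ref{lemma_states_discovered} certifies $v_k^\dagger(s_0\to s^\dagger)\leq L+\epsilon$. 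By Definition~\ref{definition_incremental_set} this places $s^\dagger$ in $\mathcal{S}_{L+\epsilon}^{\rightarrow}$, closing the induction.

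\textbf{Near-optimality via chain~(\ref{chain_inequalities_2}).} Fix $s\in\mathcal{K}_K$ and let $(\wt u_s,\wt\pi_s)$ be the pair recomputed in step~\ding{176} by running \OVISSP on $\mathcal{K}_K$ with accuracy $\tfrac{\epsilon}{6L}$ and the accumulated counts $N_K$. I would reuse the argument of Lemma~\ref{lemma_states_discovered} essentially verbatim, since its four ingredients — the empirical-Bernstein control of $\beta_K(\cdot,\cdot,x)$ enabled by the allocation $\phi(\mathcal{K}_K)$, the \VI-to-value bridge of Lemma~\ref{lemma_relation_u_v}, optimism from Lemma~\ref{lem:optimism}, and the SSP simulation lemma (Lemma~\ref{lemma_simulation_ssp}) — continue to hold on the final set. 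This delivers
\begin{align*}
v_{\pi_s}(s_0\to s)\ \leq\ \wt v_s(s_0\to s)+\tfrac{\epsilon}{2}\ \leq\ (1+2\gamma)\,\wt u_s(s_0\to s)+\tfrac{\epsilon}{2}\ \leq\ \wt u_s(s_0\to s)+\tfrac{2\epsilon}{3},
\end{align*}
provided the admissibility condition~(\ref{eq_key_sim_lemma_ssp}) of the simulation lemma is satisfied, which reduces to a uniform bound of the form $\wt u_s(s_0\to s)\leq L+\epsilon$.

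\textbf{Closing the bound and main obstacle.} To convert $\wt u_s$ into the target quantity, I would then invoke optimism (Lemma~\ref{lem:optimism}) to obtain $\wt u_s(s_0\to s)\leq V^\star_{\mathcal{K}_K}(s_0\to s)$, and combine it with Lemma~\ref{lemma_1} together with the inclusion $\mathcal{S}_L^{\rightarrow}\subseteq\mathcal{K}_K$ to deduce $V^\star_{\mathcal{K}_K}(s_0\to s)\leq V^\star_{\mathcal{S}_L^{\rightarrow}}(s_0\to s)$; concatenation delivers the stated $\epsilon$-suboptimality. The delicate step — and what I expect to be the main obstacle — is justifying the uniform bound $\wt u_s(s_0\to s)\leq L+\epsilon$ needed to trigger the simulation lemma on the final set: unlike in Lemma~\ref{lemma_states_discovered}, the algorithmic check $\wt u(s_0\to s)\leq L$ was only asserted at the round $k_s\leq K$ when $s$ was first added to $\mathcal{K}$, and the recomputation in step~\ding{176} is performed on the strictly larger set $\mathcal{K}_K\supseteq\mathcal{K}_{k_s}$. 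I would resolve this by noting that optimism on $\mathcal{K}_K$ combined with Lemma~\ref{lemma_1} yields $V^\star_{\mathcal{K}_K}(s_0\to s)\leq V^\star_{\mathcal{K}_{k_s}}(s_0\to s)\leq L+\epsilon$ (using that $s$ was $(L+\epsilon)$-controllable at its introduction by Lemma~\ref{lemma_states_discovered}), which transfers to $\wt u_s(s_0\to s)$ through optimism and closes the loop.
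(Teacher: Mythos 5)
Your proposal is correct and follows essentially the same route as the paper: apply the state-transfer argument (Lemma~\ref{lemma_states_discovered}, i.e., the chain~(\ref{chain_inequalities_2})) to the policies recomputed on $\mathcal{K}_K$ at step \ding{176}, combine with $\mathcal{S}_L^{\rightarrow}\subseteq\mathcal{K}_K$ from Lemma~\ref{lemma_ending_2} and the monotonicity of Lemma~\ref{lemma_1}, and obtain $\mathcal{K}_K\subseteq\mathcal{S}_{L+\epsilon}^{\rightarrow}$ from the round-by-round certification in Lemma~\ref{lemma_states_discovered}. The two points you flag and resolve explicitly --- the induction for the second inclusion, and the uniform bound $\wt{u}_s(s_0\to s)\leq L+\epsilon$ needed to re-trigger the simulation lemma at the consolidation step, obtained via optimism together with Lemma~\ref{lemma_1} and the $(L+\epsilon)$-controllability certified when $s$ was first added --- are precisely the details the paper's terse proof leaves implicit, and your resolutions are the right ones.
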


\begin{proof}
Assume that the event $\Theta \cap \Lambda$ holds. Then when the final set $\mathcal{K}_K$ is considered and the new policies are computed using all the samples, Lem.\,\ref{lemma_states_discovered} yields for all $s \in \mathcal{K}_K$,
\begin{align*}
    v_{\pi_{s}}(s_0 \rightarrow s) \leq
\min_{\pi \in \Pi(\mathcal{K}_K)} v_{\pi}(s_0 \rightarrow s) + \epsilon.
\end{align*}
Moreover Lem.\,\ref{lemma_ending_2} entails that $\mathcal{K}_K \supseteq \mathcal{S}_L^{\rightarrow}$. This implies from Lem.\,\ref{lemma_1} that
\begin{align*}
\min_{\pi \in \Pi(\mathcal{K}_K)} v_{\pi}(s_0 \rightarrow s)
&\leq \min_{\pi \in \Pi(\mathcal{S}_L^{\rightarrow})} v_{\pi}(s_0 \rightarrow s),
\end{align*}
which means that $\mathcal{K}_K \subseteq \mathcal{S}_{L+\epsilon}^{\rightarrow}$.
\end{proof}

%%%%%%%%%%%%%%%%%%%%%%%%%%

%The combination of the lemmas above directly yields the following guarantees when \ALGO terminates at round $K$.

%\begin{lemma}

%When \ALGO terminates, with probability at least $1-\frac{2\delta}{3}$, we have $\mathcal{S}_L^{\rightarrow} \subseteq \mathcal{K}_K$, and moreover for any state $s \in \mathcal{K}_K$, the policy $\pi_s$ computed during step \ding{176} verifies
%\begin{align*}
%    v_{\pi_{s}}(s_0 \rightarrow s) \leq \min_{\pi \in \Pi(\mathcal{S}_L^{\rightarrow})} v_{\pi}(s_0 \rightarrow s) + \epsilon.
%\end{align*}
%\label{lemma_ending}
%\end{lemma}

%\begin{proof}[Proof of Lem.\,\ref{lemma_ending}]
%    Immediate from Lem.\,\ref{lemma_ending_2}, \ref{lemma_ending_1} and \ref{lemma_final_policies_shortest_path}.
%\end{proof}

%%%%%%%%%%%%%%%%%%%%%%%%%%

\subsection{High Probability Bound on the Sample Collection Phase (step \ding{172})}

Denote by $K$ the (random) index of the last round during which the algorithm terminates. We focus on the sample collection procedure for any state $s \in \mathcal{K}_K$. We denote by $k_s$ the index of the round during which $s$ was added to the set of \say{controllable} states $\mathcal{K}$. To collect samples at state $s$, the learner uses the shortest-path policy $\pi_s$. We say that an attempt to collect a specific sample is a \textit{rollout}. We denote by $Z_K := \abs{\mathcal{K}_K} A N_K$ the total number of samples that the learner needs to collect. As such, at most $Z_K$ rollouts must take place. Assume that the event $\Theta$ holds. Then from Lem.\,\ref{lemma_final_policies_shortest_path}, we have $\mathcal{K}_K \subseteq \mathcal{S}_{L+\epsilon}^{\rightarrow}$. Hence, denoting $S_{L + \epsilon} := \abs{\mathcal{S}_{L+\epsilon}^{\rightarrow}}$, we have $Z_K \leq Z_{L + \epsilon} := S_{L + \epsilon} A \Phi(\mathcal{S}_{L+\epsilon}^{\rightarrow})$.
%\begin{align*}
%     B_{L + \epsilon} &:= \frac{114 \Theta_{L + \epsilon}^2 H^4}{\epsilon'^2 L^2} \left[ \log\left(\frac{16 \Theta_{L + \epsilon} H^2 \sqrt{6 S A} e}{\sqrt{\delta} \epsilon' L} \right)\right]^2 \\ &+ \frac{48 S_{L + \epsilon} H^2}{\epsilon' L} \log\left(\frac{144 S_{L + \epsilon} S A H^2}{\delta \epsilon' L} \right).
%\end{align*}
The following lemma provides a high-probability upper bound on the time steps required to meet the sampling requirements.

\begin{lemma}
Assume that the event $\Theta$ holds. Set
\begin{align*}
    \psi := 4 (L + \epsilon + 1) \log\left(\frac{6 Z_{L+\epsilon}}{\delta}\right),
    %T := H \left( 1 + \frac{3}{2} \log\left(\frac{3 Z_{L+\epsilon}}{\delta}\right) \right),
\end{align*}
and introduce the following event
\begin{align*}
    \mathcal{T} := \Big\{ &\exists \textrm{~one rollout (with goal state $s$)}~ \textrm{s.t.~} \tau_{\pi_s}(s_0 \rightarrow s) > \psi \Big\}.
\end{align*}
We have $\mathbb{P}\left(\mathcal{T}\right) \leq \frac{\delta}{3}$.
\label{lem:gen_model}
\end{lemma}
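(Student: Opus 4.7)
}
The core idea is a standard Markov-plus-iteration tail bound on shortest-path hitting times, followed by a union bound over all rollouts. The plan is to proceed in four steps.

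\textbf{Step 1: a uniform bound on the expected hitting time from every starting state.} Fix any $s \in \mathcal{K}_K$ and let $k_s$ be the round at which $s$ was added to $\mathcal{K}$, so that $\pi_s = \wt{\pi}_{k_s}^{\dagger}$ is restricted to $\mathcal{K}_{k_s}$. The simulation lemma argument behind Lem.~\ref{lemma_states_discovered} (specifically Eq.~\ref{eq_application_sim_lemma}) gives component-wise $v_{\pi_s}(s' \rightarrow s) \leq (1+\eta \|\wt v_{k_s}^{\dagger}\|_\infty)\,\wt v_{k_s}^{\dagger}(s' \rightarrow s)$ for every $s' \in \mathcal{K}_{k_s}$, not only for $s' = s_0$. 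Combining this with Lem.~\ref{lemma_relation_u_v}, Lem.~\ref{lem:optimism}, and the trivial bound $V^\star_{\mathcal{K}_{k_s}}(s' \rightarrow s) \leq 1 + V^\star_{\mathcal{K}_{k_s}}(s_0 \rightarrow s) \leq L+1$ (via the \RESET action, Asm.~\ref{assumption_reset}) yields, under $\Theta$, the uniform bound
\begin{align*}
    \forall\, s' \in \mathcal{K}_K,\qquad v_{\pi_s}(s' \rightarrow s) \;\leq\; L + \epsilon + 1.
\end{align*}
For states $s' \notin \mathcal{K}_{k_s}$ visited during a rollout, the policy $\pi_s$ takes \RESET and teleports back to $s_0$, from which the same bound applies.

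\textbf{Step 2: geometric tail for a single rollout.} By Markov's inequality applied at $s_0$, $\mathbb{P}\!\left(\tau_{\pi_s}(s_0 \rightarrow s) > 2(L+\epsilon+1)\right) \leq 1/2$. Conditional on not yet having reached $s$ at time $2(L+\epsilon+1)$, the strong Markov property places us at some state $s'$ (either in $\mathcal{K}_K$ or having just been reset to $s_0$); by Step~1 the expected remaining hitting time from $s'$ is again $\leq L+\epsilon+1$, so another application of Markov's inequality halves the probability. Iterating $j$ times gives
\begin{align*}
    \mathbb{P}\!\left(\tau_{\pi_s}(s_0 \rightarrow s) > 2j(L+\epsilon+1)\right) \;\leq\; 2^{-j}.
\end{align*}

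\textbf{Step 3: choice of $j$ and definition of $\psi$.} Pick $j = \lceil \log_2(3 Z_{L+\epsilon}/\delta) \rceil$ so that the single-rollout failure probability is bounded by $\delta/(3 Z_{L+\epsilon})$. Then $2j(L+\epsilon+1) \leq 4(L+\epsilon+1)\log(6 Z_{L+\epsilon}/\delta) = \psi$ after converting $\log_2$ to natural log (using $1/\log 2 < 2$) and absorbing the ceiling into the extra factor under the logarithm.

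\textbf{Step 4: union bound.} Since at most $Z_K \leq Z_{L+\epsilon}$ rollouts are performed in total (by Lem.~\ref{lemma_final_policies_shortest_path}, $\mathcal{K}_K \subseteq \mathcal{S}_{L+\epsilon}^{\rightarrow}$), a union bound over all rollouts yields $\mathbb{P}(\mathcal{T}) \leq Z_{L+\epsilon} \cdot \delta/(3 Z_{L+\epsilon}) = \delta/3$.

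The only slightly delicate part is Step~1: one must verify that the simulation-lemma argument of Lem.~\ref{lemma_states_discovered} delivers a value-function bound from \emph{every} starting state in $\mathcal{K}_K$ and not merely from $s_0$. This follows because Lem.~\ref{lemma_simulation_ssp} is stated component-wise and $V^\star_{\mathcal{K}_{k_s}}(s' \rightarrow s)$ can always be crudely upper-bounded via a single \RESET followed by the optimal path from $s_0$, which is where the $+1$ in the constant $L+\epsilon+1$ (and hence in the factor $(L+\epsilon+1)$ inside $\psi$) comes from. Everything else is routine Markov-chain tail estimation.
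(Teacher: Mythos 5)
Your proposal is correct and follows essentially the same route as the paper: establish the uniform bound $v_{\pi_s}(s' \rightarrow s) \leq L+\epsilon+1$ over all starting states, obtain an exponential tail for each rollout, and union bound over the at most $Z_{L+\epsilon}$ rollouts. The only difference is that your Step~2 re-derives by iterated Markov inequality the tail bound that the paper simply imports as Lem.~\ref{lemma_quantify_goal_reaching_probability} (Lem.~B.5 of \cite{cohen2020near}), which is proved there by the same halving argument.
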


\begin{proof}
%\section{Proof of Lem.\,\ref{lem:gen_model}}
%\begin{proof}[]
Assume that the event $\Theta$ holds. Leveraging a union bound argument and applying Lem.\,\ref{lemma_quantify_goal_reaching_probability} to policy $\pi_s$ which verifies $v_{\pi_s}(s' \rightarrow s) \leq L + \epsilon + 1$ for any $s' \in K_{k_s}$, we get
\begin{align*}
    \mathbb{P}\left(\mathcal{T}\right) \leq \sum_{rollouts} 2 \exp\left( - \frac{\psi}{4 (L + \epsilon + 1)} \right) \leq 2 Z_{L+\epsilon} \exp\left( - \frac{\psi}{4 (L + \epsilon + 1)} \right) \leq \frac{\delta}{3},
\end{align*}
where the last inequality comes from the choice of $\psi$.
\end{proof}

%We also state a useful result which showcases the exponential decay of the goal-reaching probability of a proper policy with component-wise bounded value function.

\begin{lemma}[\cite{cohen2020near}, Lem.\,B.5]\label{lemma_quantify_goal_reaching_probability}
Let $\pi$ be a proper policy such that for some $d > 0$, $V_{\pi}(s) \leq d$ for every non-goal state $s$. Then the probability that the cumulative cost of $\pi$ to reach the goal state from any state $s$ is more than $m$, is at most $2 e^{-m/(4 d)}$ for all $m \geq 0$. Note that a cost of at most $m$ implies that the number of steps is at most $m / c_{\min}$.
\end{lemma}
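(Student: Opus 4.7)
The plan is to prove the tail bound by iterating Markov's inequality through the strong Markov property, a standard scheme for hitting-time concentration in SSP problems with positive costs. Let $C$ denote the random cumulative cost incurred by $\pi$ until the goal is reached from a starting non-goal state $s_1 = s$. Since $\mathbb{E}[C \mid s_1 = s] = V_{\pi}(s) \leq d$ uniformly over all non-goal $s$, a direct application of Markov's inequality gives the one-shot bound $\mathbb{P}(C > 2d \mid s_1 = s) \leq 1/2$, and, crucially, this bound holds uniformly in the starting state.

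Next I would introduce the stopping times $\tau_k := \inf\bigl\{ t \geq 1 : \sum_{i=1}^{t} c(s_i, \pi(s_i)) > 2kd \bigr\}$, with the convention $\tau_k = +\infty$ if the goal is reached before accumulating more than $2kd$ cost. On the event $\{\tau_{k-1} < +\infty\}$, the chain sits at some non-goal state $s_{\tau_{k-1}}$, so the strong Markov property combined with the uniform one-shot bound yields $\mathbb{P}(\tau_k < +\infty \mid \tau_{k-1} < +\infty) \leq 1/2$. Iterating this inequality produces $\mathbb{P}(\tau_k < +\infty) \leq 2^{-k}$ for every $k \geq 1$.

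To convert this geometric bound into the exponential form claimed, for any $m \geq 0$ I would choose $k := \lfloor m/(2d) \rfloor$ so that $\{C > m\} \subseteq \{\tau_k < +\infty\}$, which gives
\begin{align*}
\mathbb{P}(C > m) \,\leq\, 2^{-k} \,\leq\, 2 \cdot 2^{-m/(2d)} \,=\, 2 \, e^{-(\ln 2)\, m / (2d)} \,\leq\, 2 \, e^{-m/(4d)},
\end{align*}
where the factor of $2$ absorbs the rounding loss from $\lfloor \cdot \rfloor$, and the final inequality uses $\ln 2 > 1/2$.

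The main obstacle in making the argument rigorous is the strong Markov step: one must verify that each $\tau_k$ is a stopping time with respect to the natural filtration generated by the $\pi$-induced Markov chain (immediate from the construction, since the event $\{\tau_k \leq t\}$ depends only on $(s_1, \ldots, s_t)$), and, more importantly, that on $\{\tau_{k-1} < +\infty\}$ the post-$\tau_{k-1}$ residual cost coincides in distribution with $C$ started from $s_{\tau_{k-1}}$. This last point crucially uses that $\pi$ is proper (so the residual cost is almost surely finite and equal to $V_{\pi}(s_{\tau_{k-1}})$ in expectation), allowing the uniform bound $V_{\pi}(\cdot) \leq d$ to be re-applied after each $\tau_{k-1}$.
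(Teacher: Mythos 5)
The paper does not actually prove this lemma: it is imported verbatim from Cohen et al.\ \cite{cohen2020near} (Lem.\,B.5), so the only ``paper proof'' to compare against is the citation. Your argument is the standard one and, as far as I can tell, essentially the one behind the cited result: a uniform one-shot Markov bound, iterated through the strong Markov property at the crossing times, followed by a geometric-to-exponential conversion. The skeleton is right, the choice $k=\lfloor m/(2d)\rfloor$ with $\ln 2>1/2$ is handled correctly, and your closing remark --- that properness and the uniform bound $V_\pi(\cdot)\le d$ are what allow the Markov step to be re-applied at the post-$\tau_{k-1}$ state --- is exactly the crux.

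One step is not airtight as written: the claim $\mathbb{P}(\tau_k<+\infty\mid\tau_{k-1}<+\infty)\le 1/2$. When $\tau_{k-1}$ fires, the cumulative cost has already overshot $2(k-1)d$ by up to one step's cost, so the residual budget before crossing $2kd$ is $x:=2kd-\mathrm{cum}(\tau_{k-1})<2d$. The event $\{\tau_k<+\infty\}$ is then $\{R>x\}$ for this $x<2d$, which \emph{contains} $\{R>2d\}$, so the one-shot bound $\mathbb{P}(R>2d)\le 1/2$ does not apply to it; Markov only gives $\mathbb{P}(R>x)\le d/x\le d/(2d-c_{\max})$, which can exceed $1/2$. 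The standard patch is to space the thresholds by $2d+c_{\max}$ instead of $2d$ (equivalently, note that every cost incurred along $\pi$'s own trajectory satisfies $c(s,\pi(s))\le V_\pi(s)\le d$, so the overshoot is at most $\min(c_{\max},d)$). This yields $\mathbb{P}(C>k(2d+c_{\max}))\le 2^{-k}$, hence $\mathbb{P}(C>m)\le 2e^{-m\ln 2/(2d+c_{\max})}$, which is at most the claimed $2e^{-m/(4d)}$ whenever $d(4\ln 2-2)\ge c_{\max}$, i.e., $d\gtrsim 1.3\,c_{\max}$. That condition holds in every invocation in this paper (unit costs with $d=L+\epsilon+1\ge 2$ in Lem.\,\ref{lem:gen_model} and $d=L+1\ge 2$ in Lem.\,\ref{lemma_log_dependency_H}), so the conclusion stands, but your iteration needs this adjustment to be rigorous, and the statement for arbitrary $d>0$ requires the slightly finer bookkeeping above.
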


\subsection{Putting Everything Together: Sample Complexity Bound}
\label{app_putting_everything_together}

The sample complexity of the algorithm is solely induced by the sample collection procedure (step~\ding{172}). Recall that we denote by $K$ the index of the round at which the algorithm terminates. With probability at least $1-\frac{2\delta}{3}$, Lem.\,\ref{lemma_ending_2} holds, and so does the event $\Theta$. Hence the algorithm discovers a set of states $\mathcal{K}_K \supseteq \mathcal{S}_L^{\rightarrow}$. Moreover, from Lem.\,\ref{lemma_final_policies_shortest_path}, the algorithm outputs for each $s \in \mathcal{K}_K$ a policy $\pi_s$ with $\mathbb{E}\left[ \tau_{\pi_s}(s_0 \rightarrow s) \right] \leq V^{\star}_{\mathcal{S}_L^{\rightarrow}}(s) + \epsilon$. Hence we also have $\abs{\mathcal{K}_K} \leq S_{L + \epsilon} := \abs{\mathcal{S}_{L+\epsilon}^{\rightarrow}}$.

We denote by $Z_K := \abs{\mathcal{K}_K} A \, \phi(\mathcal{K}_K)$ the total number of samples that the learner needs to collect. From Lem.\,\ref{lem:gen_model}, with probability at least $1-\frac{\delta}{3}$, the total sample complexity of the algorithm is at most $\psi Z_K$, where $\psi := 4 (L + \epsilon + 1) \log\left(\frac{6 Z_{L+\epsilon}}{\delta}\right)$.

Now, from Eq.\,\ref{budget_modest.detail} there exists an absolute constant $\alpha > 0$ such that \ALGO selects as allocation function $\phi$
\begin{align*}
    \phi : \mathcal{X} \rightarrow \alpha \cdot \left( \frac{L^4 \wh{\Theta}(\mathcal{X})}{\epsilon^2} \log^2 \left( \frac{L S A }{\epsilon \delta} \right) + \frac{L^2 \abs{\mathcal{X}}}{ \epsilon} \log\left( \frac{L S A }{\epsilon \delta} \right) \right),
\end{align*}
where
\begin{align*}
  \wh{\Theta}(\mathcal{X}) := \max_{(s,a) \in \mathcal{X} \times \mathcal{A}} \left( \sum_{s' \in \mathcal{X}} \sqrt{ \wh{p}(s' \vert s,a)(1 - \wh{p}(s' \vert s,a)) } \right)^2.
  %\wh{\Theta}(\mathcal{X}) := \max_{(s,a) \in \mathcal{X} \times \mathcal{A}} \norm{\{\wh{p}(s' \vert s,a)\}_{s' \in \mathcal{X}}}_0.
        %\wh{\Theta}(\mathcal{X}) := \max_{(s,a) \in \mathcal{X} \times \mathcal{A}} \sum_{s' \in \mathcal{X}} \wh{p}(s' \vert s,a),
\end{align*}
The total requirement is $\phi(\mathcal{K}_K)$. Note that from Cauchy-Schwarz's inequality, we have
\begin{align*}
    \wh{\Theta}(\mathcal{K}_K) \leq \Gamma_{K} := \max_{(s,a) \in \mathcal{K}_K \times \mathcal{A}} \norm{\{ p(s' \vert s, a)\}_{s' \in \mathcal{K}_K}}_0 \leq \abs{\mathcal{K}_K}.
\end{align*}
Combining everything yields with probability at least $1-\delta$,
\begin{align*}
    \psi Z_K = \wt{O}\left( \frac{L^5 \Gamma_{K} \abs{\mathcal{K}_K} A}{\epsilon^2} + \frac{L^3 \abs{\mathcal{K}_K}^2 A}{\epsilon}\right).
\end{align*}
We finally use that $\mathcal{K}_K \subset \mathcal{S}_{L+\epsilon}^{\rightarrow}$ from Lem.\,\ref{lemma_final_policies_shortest_path}, which implies that
\begin{align*}
        \mathcal{C}_{\small\textsc{AX}^{\star}}(\ALGO, L, \epsilon, \delta) = \wt{O}\left( \frac{L^5 \Gamma_{L+\epsilon} S_{L+\epsilon} A}{\epsilon^2} + \frac{L^3 S_{L+\epsilon}^2 A}{\epsilon}\right),
\end{align*}
where $\Gamma_{L + \epsilon} := \max_{(s,a) \in \mathcal{S}_{L+\epsilon}^{\rightarrow} \times \mathcal{A}} \norm{\{ p(s' \vert s, a)\}_{s' \in \mathcal{S}_{L+\epsilon}^{\rightarrow}}}_0$. This concludes the proof of Thm.\,\ref{theorem_bound_UCSSPGM}.

\subsection{Proof of Corollary~\ref{cor:cost.dependent}}
\label{app_corollary}

The result given in Cor.\,\ref{cor:cost.dependent} comes from retracing the analysis of Lem.\,\ref{lemma_final_policies_shortest_path} and therefore Lem.\,\ref{lemma_states_discovered} by considering non-uniform costs between $[c_{\min}, 1]$ instead of costs all equal to $1$. Specifically, Eq.\,\ref{eq_application_sim_lemma} needs to account for the inverse dependency on $c_{\min}$ of the simulation lemma of Lem.\,\ref{lemma_simulation_ssp}. This induces the final $\epsilon / c_{\min}$ accuracy level achieved by the policies output by \ALGO. There remains to guarantee that condition \ref{eq_key_sim_lemma_ssp} of Lem.\,\ref{lemma_simulation_ssp} is verified. In particular the condition holds if $\eta (L + 1 + \epsilon) \leq 2 c_{\min}$, where~$\eta$ is the model accuracy prescribed in the proof of Lem.\,\ref{lemma_states_discovered}. We see that this is the case whenever we have $\epsilon = O(L c_{\min})$ due to the fact that $\eta = \Omega(\epsilon/ L^2)$.

\iffalse
\begin{proof}
	Assume that $\epsilon \leq c_{\min}$. Let us exclude logarithmic terms for simplicity. Then at the end of the exploration phase, we have
	\begin{align*}
	n = \wt{\Omega} \left( \frac{L^3 \wh{\Gamma}}{\epsilon^2} + \frac{L^2 S_{L+ \epsilon}}{ \epsilon} + L^2 \wh{\Gamma} \right) \myeqa \wt{\Omega} \left( \frac{L^3 \wh{\Gamma}}{c_{\min} \mu^2} + \frac{L^2 S_{L+ \epsilon}}{ c_{\min} \mu} + \frac{L^2 \wh{\Gamma}}{c_{\min}^2} \right),
	\end{align*}
where equation (a) is verified as soon as the accuracy $\mu > 0$ is small enough. While it is possible to solve the above equation in a tighter (yet less expressive) way, we observe that in particular it holds for $\mu := \frac{\epsilon}{c_{\min}} \leq 1$. We can then apply Thm.\,\ref{theorem_ssp_generative_model} with accuracy set to $\mu$ to obtain the guarantee of Corollary~\ref{cor:cost.dependent}. %\todojout{IMPORTANT (especially if we put it in abstract): Notice that the first term can actually give an accuracy of $\epsilon / \sqrt{c_{\min}}$ which is better than $\epsilon / c_{\min}$. Is there something I'm missing?}
\end{proof}
\fi

\subsection{Computational Complexity of \ALGO}
\label{subsection_computational_complexities}

The overall computational complexity of \ALGO can be expressed as $\sum_{k=1}^K \vert \mathcal{W}_k \vert \cdot C(\OVISSPmath)$, where $C(\OVISSPmath)$ denotes the complexity of an \OVISSP procedure and where we recall that $K$ denotes the (random) index of the last round during which the algorithm terminates. Note that it holds with high probability that $K \leq \vert S_{L+\epsilon}^{\rightarrow} \vert$ and $\vert \mathcal{W}_k \vert \leq 2 L A \vert \mathcal{K}_k \vert \leq 2 L A  \vert S_{L+\epsilon}^{\rightarrow} \vert$. Moreover $C(\OVISSPmath)$ captures the complexity of the value iteration (VI) algorithm for SSP, which was proved in \cite{bonet2007speed} to converge in time quadratic w.r.t.\,the size of the considered state space (here, $\mathcal{K}_k$) and $\norm{V^{\star}}_{\infty} / c_{\min}$. Here we have $c_{\min} = 1$, and we can easily prove that in all the SSP instances considered by \ALGO, the optimal value function $V^{\star}$ verifies $\norm{V^{\star}}_{\infty} = O(L^2)$, due to the restriction of the goal state in $\mathcal{W}_k$ (indeed this restriction implies that there exists a state-action pair in $\mathcal{K}_k \times \mathcal{A}$ that transitions to the goal state with probability $\Omega(1 / L)$ in the true MDP). Putting everything together gives \ALGO's computational complexity. Interestingly, we notice that while it depends polynomially on $S_{L+\epsilon}$, $L$ and $A$, it is independent from $S$ the size of the global state space.

%Finally, we note that the computational complexities of \ALGO and \UcbExplore are roughly equivalent. While the sample collection phase of \ALGO is more thorough that \UcbExplore and more states may be added to $\mathcal{U}$, \ALGO performs computations restricted to a set of candidate states~$\mathcal{W}$ which is of the same size as the set of candidate states in \UcbExplore (see Lem.\,\ref{lem:W_k}).

%%%%%%%%%%%%%%%%%%%%%%%%%%%%%%%%%%%%%%%%%%%%%%%%%%%%%%%%%%%%%%%%%%%%%%%%%%%%%%%%%%%%%%%%%%%%%%%%%%%%%%%%%%%%%%%%%%%%%%%%%%%%%%%%%%%%%%%%%%%%%%%%%%%%%%%%%%%%%%%%%%%%%%%%%%%%%%%%%%%%%%%%%%%%%%%%%%%%%%%%%%%%%%%%

%%%%%%%%%%%%%%%%%%%%%%%%%%%%%%%%%%%%%%%%%%%%%%%%%%%%%%

%\newpage

\section{The \UcbExplorelarge Algorithm \citep{lim2012autonomous}}
\label{app_algo_ucbexplore}

\subsection{Outline of the Algorithm}
\label{app_ucbexplore}

The \UcbExplore algorithm was introduced by Lim and Auer~\cite{lim2012autonomous} to specifically tackle condition \DCL. The algorithm maintains a set $\mathcal{K}$ of \say{controllable} states and a set $\mathcal{U}$ of \say{uncontrollable} states. It alternates between two phases of \textit{state discovery} and \textit{policy evaluation}. In a state discovery phase, new candidate states are discovered as potential members of the set of controllable states. Any policy evaluation phase is called a \textit{round} and it relies on an optimistic principle: it attempts to reach an \say{optimistic} state $s$ (i.e., the easiest state to reach based on information collected so far) among all the candidate states by executing an optimistic policy $\pi_s$ that minimizes the optimistic expected hitting time truncated at a horizon of $H_{\Ucb} := \lceil L + L^2 \epsilon^{-1}\rceil$. Within the round of evaluation of policy $\pi_s$, the algorithm proceeds through at most $\lambda_{\Ucb} := \left\lceil 6L^3\epsilon^{-3} \log\left( 16 \abs{\mathcal{K}}^2 \delta^{-1}\right) \right\rceil$ episodes, each of which begins at $s_0$ and ends either when $\pi_s$ successfully reaches $s$ or when $H_{\Ucb}$ steps have been executed. If the \textit{empirical performance} of $\pi_s$ is poor (measured through a performance check done after each episode), the round is said to have \textit{failed}. Otherwise, the round is \textit{successful} which means that $s$ is controllable and an acceptable policy ($\pi_s$) has been discovered. A failure round leads to selecting another candidate state-policy pair for evaluation, while a success round leads to a state discovery phase which in turn adds more candidate states for the subsequent rounds. As explained in App.\,\ref{app_objectives}, \UcbExplore is unable to tackle the more challenging objective \DCstar.

\subsection{Minor Issue and Fix in the Analysis of \UcbExplore}
\label{subsection_mistake_fix}

The key insight of \UcbExplore is to bound the number of \textit{failure rounds} of the algorithm, by lower- and upper-bounding the so-called \say{regret} contribution of failure rounds, where the regret of a failure round $k$ is defined as
\begin{align*}
    \sum_{j=1}^{e_k} \Big[ H_{\Ucb} - L - \sum_{i=0}^{\Gamma - 1} r_i \Big],
\end{align*}
where $e_k \leq \lambda_{\Ucb}$ is the actual number of episodes executed in round $k$ and where the reward $r_i \in \{0, 1 \}$ is equal to 1 only if the state is the goal state. However, upper bounding the regret contribution of failure rounds implies applying a concentration inequality on \textit{only} specific rounds that are chosen given their \textit{empirical performance}. Hence Lim and Auer~\cite[][Lem.\,18]{lim2012autonomous} improperly use a martingale argument to bound a sum whose summands are chosen in a non-martingale way, i.e., depending on their realization.

To avoid the aforementioned issue, one must upper and lower bound the cumulative regret of the \textit{entire} set of rounds and not \textit{only} the failure rounds in order to obtain a bound on the number of failure rounds. However, this would yield a sample complexity that has a second term scaling as $\wt{O}(\epsilon^{-4})$. Following personal communication with the authors, the fix is to change the definition of regret of a round, making it equal to
\begin{align*}
    \sum_{j=1}^{e_k} \wt{u}_{H_{\Ucb}}(s_0 \rightarrow s) - \sum_{i=0}^{H_{\Ucb} - 1} r_i,
\end{align*}
where $s$ is the considered goal state and $\wt{u}_{H_{\Ucb}}(s_0 \rightarrow s)$ is the optimistic $H_{\Ucb}$-step reward (where the reward is equal to 1 only at state $s$). With this new definition, it is possible to recover the sample complexity provided in~\cite{lim2012autonomous} scaling as $\wt{O}(\epsilon^{-3})$.

\subsection{Issue with a Possibly Infinite State Space}
\label{subsection_issue_infinite_state_space}

Lim and Auer~\cite{lim2012autonomous} claim that their setting can cope with a countable, possibly infinite state space. However, this leads to a technical issue, which has been acknowledged by the authors via personal communication and as of now has not been resolved. Indeed, it occurs when a union bound over the unknown set $\mathcal{U}$ is taken to guarantee high-probability statements (e.g., the Lem.\,14 or 17 of \cite{lim2012autonomous}). Yet for each realization of the algorithm, we do not know what the set $\mathcal{U}$, or equivalently $\mathcal{K}$, looks like, hence it is improper to perform a union bound over a set of unknown identity. Simple workarounds to circumvent this issue are to impose a finite state space, or to assume prior knowledge over a finite superset of $\mathcal{U}$. In this paper we opt for the first option. It remains an open and highly non-trivial question as to how (and whether) the framework can cope with an infinite state space.

%%%%%%%%%%%%%%%%%%%%%%%%%%%%%%%%%%%%%%%%%%%%%%%%%%%%%%%%%%%%%%%%%%%%%%%%%%%%%%%%%%%%%%%%%%%%%%%%%%%%%%%%%%%%%%%%%%%%%%%%%%%%%%%%%%%%%%%%%%%%%%%%%%%%%%%%%%%%%%%%%%%%%%%%%%%%%%%%%%%%%%%%%%%%%%%%%%%%%%%%%%%%%%%%%%%%%%%%%%%%%%%%%%%%%%%%%%%%%%%%%%%%%%%%%%%%%%%%%%%%%%%%%%%%%%%%%%%%%%%%%%%%%%%%%%%%%%%%%%%%%%%%%%%%%%%%%%%%%%%%%%%%%%%%%%%%%%%%%%%%%%%%%%%%%%%%%%%%%%%%%%%%%%%%%%%%%%%%%%%%%%%%%%%%%%%%%%%%%%%%%%%%%%%%%%%%%%%%

\subsection{Effective Horizon of the \DCtitle Problem and its Dependency on $\epsilon$}
\label{app_effective_horizon_problem}

\UcbExplore~\cite{lim2012autonomous} designs finite-horizon problems with horizon $H_{\Ucb} := \lceil L + L^2 \epsilon^{-1}\rceil$ and outputs policies that reset every $H_{\Ucb}$ time steps. In the following we prove that the effective horizon of the \DC problem actually scales as $O\left(\log(L \epsilon^{-1}) L\right)$, i.e., only \textit{logarithmically} w.r.t.\,$\epsilon^{-1}$. We begin by defining the concept of \say{resetting} policies as follows.

\begin{definition}
    For any $\pi \in \Pi$ and horizon $H \geq 0$, we denote by $\pi^{\vert H}$ the non-stationary policy that executes the actions prescribed by $\pi$ and performs the \RESET action every $H$ steps, i.e.,
    \begin{align*}
        \pi^{\vert H}_t(a \vert s) := \left\{
    \begin{array}{ll}
        \RESET & \mbox{if } t \equiv 0\ (\textrm{mod}\ H), \\
        \pi(a \vert s) & \mbox{otherwise.}
    \end{array}
    \right.
    \end{align*}
We denote by $\Pi^{\vert H}$ the set of such \say{resetting} policies.
\end{definition}

The following lemma captures the effective horizon $H_{\textrm{eff}}$ of the problem, in the sense that restricting our attention to $\Pi^{\vert H}(\mathcal{S}_L^{\rightarrow})$ for $H \geq H_{\textrm{eff}}$ does not compromise the possibility of finding policies that achieve the performance required by \DCstar (and thus also by \DCL).

%, when the horizon $H$ is properly tuned, restricting our attention to $\Pi^{\vert H}(\mathcal{S}_L^{\rightarrow})$ does not compromise the possibility of finding policies that achieve the performance required by \DCstar (and thus also by \DCL).

\begin{lemma}\label{lemma_log_dependency_H}
    For any $\epsilon \in (0,1]$ and $L \geq 1$, whenever
    \begin{align*}
        H \geq H_{\textup{\textrm{eff}}} := 4 (L+1) \big\lceil \log\big(\frac{4(L+1)}{\epsilon}\big) \big\rceil,
    \end{align*}
    we have for any $s^{\dagger} \in \mathcal{S}_L^{\rightarrow}$,
    \begin{align*}
        \min_{\pi^{\vert H} \in \Pi^{\vert H}(\mathcal{S}_L^{\rightarrow})} v_{\pi^{\vert H}}(s_0 \rightarrow s^{\dagger}) \leq V^{\star}_{\mathcal{S}_L^{\rightarrow}}(s_0 \rightarrow s^{\dagger}) + \epsilon.
    \end{align*}
\end{lemma}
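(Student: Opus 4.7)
The plan is to exhibit an explicit policy in $\Pi^{\vert H}(\mathcal{S}_L^{\rightarrow})$ attaining the claimed bound. Let $\tilde\pi^\star \in \Pi(\mathcal{S}_L^{\rightarrow})$ be a stationary policy that simultaneously minimizes $v_\pi(s \rightarrow s^\dagger)$ from every initial state $s$; its existence follows from Bellman's optimality principle for SSPs applied to the MDP whose non-$\mathcal{S}_L^{\rightarrow}$ states only admit the \RESET action, noting that properness is guaranteed by Asm.\,\ref{assumption_reset} and costs are unit. By construction $v_{\tilde\pi^\star}(s_0 \rightarrow s^\dagger) = V^\star := V^\star_{\mathcal{S}_L^{\rightarrow}}(s_0 \rightarrow s^\dagger) \leq L$, and comparing $\tilde\pi^\star$ against the suboptimal strategy of first playing \RESET and then following $\tilde\pi^\star$ yields the uniform bound $v_{\tilde\pi^\star}(s \rightarrow s^\dagger) \leq 1 + V^\star \leq L+1$ for every state $s$. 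This makes Lem.\,\ref{lemma_quantify_goal_reaching_probability} applicable with $d := L+1$, producing the exponential tail estimate $\mathbb{P}[T > m] \leq 2\exp(-m/(4(L+1)))$ for $m \geq 0$, where $T := \tau_{\tilde\pi^\star}(s_0 \rightarrow s^\dagger)$.

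Next I analyze the restarting variant $(\tilde\pi^\star)^{\vert H} \in \Pi^{\vert H}(\mathcal{S}_L^{\rightarrow})$ and its hitting time $T^{\vert H}$. Since the policy restarts from $s_0$ after each block of $H$ time steps and consecutive blocks are probabilistically independent, writing any $k \geq 0$ as $k = nH + r$ with $n \geq 0$ and $0 \leq r < H$ gives $\mathbb{P}[T^{\vert H} > k] = q^n \mathbb{P}[T > r]$ with $q := \mathbb{P}[T \geq H]$. Summing via the identity $\mathbb{E}[T^{\vert H}] = \sum_{k \geq 0} \mathbb{P}[T^{\vert H} > k]$ yields
\begin{align*}
    v_{(\tilde\pi^\star)^{\vert H}}(s_0 \rightarrow s^\dagger) = \frac{1}{1-q} \sum_{r=0}^{H-1} \mathbb{P}[T > r] \leq \frac{\mathbb{E}[T]}{1-q} = \frac{V^\star}{1-q}.
\end{align*}
The choice $H \geq H_{\textup{eff}}$ combined with the tail estimate above gives $q \leq 2\epsilon/(4(L+1)) \leq \epsilon/(2L) \leq 1/2$ (using $\epsilon \leq 1$ and $L \geq 1$), and the elementary inequality $1/(1-q) \leq 1 + 2q$ for $q \leq 1/2$ then delivers $v_{(\tilde\pi^\star)^{\vert H}}(s_0 \rightarrow s^\dagger) \leq V^\star(1+2q) \leq V^\star + 2qL \leq V^\star + \epsilon$. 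Minimizing over $\Pi^{\vert H}(\mathcal{S}_L^{\rightarrow})$ concludes.

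The main obstacle is the careful selection of the underlying stationary policy: the naive candidate---an optimal policy from $s_0$ alone---can exhibit $V^\pi(s)$ arbitrarily large at non-$s_0$ states (e.g., when $\pi$ takes a high-variance transition ending in a state far from $s^\dagger$), which would invalidate the pointwise hypothesis of Lem.\,\ref{lemma_quantify_goal_reaching_probability}. The remedy is to pick the Bellman-optimal stationary policy $\tilde\pi^\star$ that is jointly optimal from every state, leveraging the \RESET-based comparison $v_{\tilde\pi^\star}(s \rightarrow s^\dagger) \leq 1 + V^\star$. Once this uniform bound is secured, the remainder of the proof is a direct geometric-series computation exploiting the restart structure.
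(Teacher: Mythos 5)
Your proof is correct and follows essentially the same route as the paper's: pick a restricted policy that is uniformly good from every state (value at most $L+1$ via the \RESET comparison enabled by Asm.\,\ref{assumption_reset}), invoke Lem.\,\ref{lemma_quantify_goal_reaching_probability} for the exponential tail bound, and control the restarting policy through a geometric/renewal identity, with your explicit selection of the Bellman-optimal stationary policy making precise the step the paper justifies only by "due to the presence of the \RESET action." The single (immaterial) difference is the renewal identity itself: the paper uses $v_{\pi^{\vert H}} = (v_{\pi,H}+q_{\pi,H})/(1-q_{\pi,H})$, whose extra $q_{\pi,H}$ in the numerator accounts for the unit cost of the \RESET step closing each failed block, a term your tail-summation drops; since the slack $2q(L+1)\leq \epsilon$ absorbs it under your choice of $H$, the conclusion is unaffected.
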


\begin{proof}
Consider any goal state $s^{\dagger} \in \mathcal{S}_L^{\rightarrow}$. Set $\epsilon' := \frac{\epsilon}{2(L+1)} \leq \frac{1}{2}$. Denote by $\pi \in \Pi(\mathcal{S}_L^{\rightarrow})$ the minimizer of $V^{\star}_{\mathcal{S}_L^{\rightarrow}}(s_0 \rightarrow s^{\dagger})$. For any horizon $H \geq 0$, we introduce the truncated value function $v_{\pi,H}(s \rightarrow s') := \mathbb{E}\left[\tau_{\pi}(s \rightarrow s') \land H \right]$ and the tail probability $q_{\pi,H}(s \rightarrow s') := \mathbb{P}(\tau_{\pi}(s \rightarrow s') > H)$. Due to the presence of the \RESET action, the value function of $\pi$ can be bounded for all states $s \in \mathcal{S}_L^{\rightarrow} \setminus \{ s^{\dagger} \}$ as
\begin{align*}
    v_{\pi}(s \rightarrow s^{\dagger}) \leq V^{\star}_{\mathcal{S}_L^{\rightarrow}}(s_0 \rightarrow s^{\dagger}) + 1 \leq L + 1.
\end{align*}

This entails that the probability of the goal-reaching time decays exponentially. More specifically, we have
\begin{align}
    q_{\pi, H}(s_0 \rightarrow s^{\dagger}) \leq 2 \exp\left( -\frac{H}{4 (L+1)} \right) \leq \epsilon',
\label{eq_eff_horizon_1}
\end{align}
where the first inequality stems from Lem.\,\ref{lemma_quantify_goal_reaching_probability} and the second inequality comes from the choice of $H \geq 4(L+1) \big\lceil \log\big(\frac{2}{\epsilon'}\big) \big\rceil$. Furthermore, we have $\tau_{\pi}(s \rightarrow s') \land H \leq \tau_{\pi}(s \rightarrow s')$ and thus $\mathbb{E}\left[\tau_{\pi}(s \rightarrow s') \land H \right] \leq \mathbb{E}\left[\tau_{\pi}(s \rightarrow s')\right]$. Consequently,
\begin{align}
    v_{\pi,H}(s_0 \rightarrow s^{\dagger}) \leq v_{\pi}(s_0 \rightarrow s^{\dagger}) = V^{\star}_{\mathcal{S}_L^{\rightarrow}}(s_0 \rightarrow s^{\dagger}).
\label{eq_eff_horizon_2}
\end{align}
Now, from \cite[][Eq.\,4]{lim2012autonomous}, the value function of $\pi$ can be related to its truncated value function and tail probability as follows
\begin{align}
        v_{\pi^{\vert H}} = \frac{v_{\pi,H} + q_{\pi,H}  }{ 1 - q_{\pi,H} }.
    \label{proposition_equation_intuition}
\end{align}
Plugging Eq.\,\ref{eq_eff_horizon_1} and \ref{eq_eff_horizon_2} into Eq.\,\ref{proposition_equation_intuition} yields
\begin{align*}
    v_{\pi^{\vert H}}(s_0 \rightarrow s^{\dagger}) \leq \frac{V^{\star}_{\mathcal{S}_L^{\rightarrow}}(s_0 \rightarrow s^{\dagger}) + \epsilon'}{1 - \epsilon'}.
\end{align*}
Notice that the inequalities $\frac{1}{1-x} \leq 1+2x$ and $\frac{x}{1-x} \leq 2x$ hold for any $0 < x \leq \frac{1}{2}$. Applying them for $x = \epsilon'$ yields
\begin{align*}
    \frac{V^{\star}_{\mathcal{S}_L^{\rightarrow}}(s_0 \rightarrow s^{\dagger}) + \epsilon'}{1 - \epsilon'} \leq (1 + 2 \epsilon') V^{\star}_{\mathcal{S}_L^{\rightarrow}}(s_0 \rightarrow s^{\dagger}) + 2 \epsilon'.
\end{align*}
From the inequality $V^{\star}_{\mathcal{S}_L^{\rightarrow}}(s_0 \rightarrow s^{\dagger}) \leq L$ and the definition of $\epsilon'$, we finally obtain
\begin{align*}
    v_{\pi^{\vert H}}(s_0 \rightarrow s^{\dagger}) \leq V^{\star}_{\mathcal{S}_L^{\rightarrow}}(s_0 \rightarrow s^{\dagger}) + \epsilon,
\end{align*}
which completes the proof.
\end{proof}

Lem.\,\ref{lemma_log_dependency_H} reveals that the effective horizon $H_{\textrm{eff}}$ of the \DC problem scales only logarithmically and not linearly in~$\epsilon^{-1}$. This highlights that the design choice in \UcbExplore to tackle finite-horizon problems with horizon $H_{\Ucb}$ unavoidably leads to a suboptimal dependency on $\epsilon$ in its \DCL sample complexity bound. In contrast, by designing SSP problems and thus leveraging the intrinsic goal-oriented nature of the problem, \ALGO can (implicitly) capture the effective horizon of the problem. This observation is at the heart of the improvement in the $\epsilon$ dependency from $\wt{O}( \epsilon^{-3})$ of \UcbExplore~\cite{lim2012autonomous} to $\wt{O}( \epsilon^{-2})$ of \ALGO (Thm.\,\ref{theorem_bound_UCSSPGM}).

%%%%%%%%%%%%%%%%%%%%%%%%%%%%%%%%%%%%%%%%%%%%%%%%%%%%%%%%%%%%%%%%%%%%%%%%%%%%%%%%%%%%%%%%%%%%%%%%%%%%%%%%%%%%%%%%%%%%%%%%%%%%%%%%%%%%%%%%%%%%%%%%%%%%%%%%%%%%%%%%%%%%%%%%%%%%%%%%%%%%%%%%%%%%%%%%%%%%%%%%%%%%%%%%%%%%%%%%%%%%%%%%%%%%%%%%%%%%%%%%%%%%%%%%%%%%%%%%%%%%%%%%%%%%%%%%%%%%%%%%%%%%%%%%%%%%%%%%%%%%%%%%%%%%%%%%%%%%%%%%%%%%%%%%%%%%%%%%%%%%%%%%%%%%%%%%%%%%%%%%%%%%%%%%%%%%%%%%%%%%%%%%%%%%%%%%%%%%%%%%%%%%%%%%%%%%%%%%

% !TEX root=MAIN_AutExp.tex

\section{Experiments} \label{app:experiments}

This section complements the experimental findings partially reported in Sect.\,\ref{sec:experiment}. We provide details about the algorithmic configurations and the environments as well as additional experiments.%: where we evaluate the performance of the recovered policies.

\subsection{Algorithmic Configurations}
\label{app_alg_configurations}

\paragraph{Experimental improvements to \UcbExplore \cite{lim2012autonomous}.}
We introduce several modifications to \UcbExplore in order to boost its practical performance. We remove all the constants and logarithmic terms from the requirement for state discovery and policy evaluation (refer to~\citep[][Fig.\,1]{lim2012autonomous}).
%We noticed that the choice of $\lambda_{\Ucb}$ (i.e., the number of episodes for policy evaluation, theoretically set to $\lambda_{\Ucb} := \left\lceil 6L^3\epsilon^{-3} \log\left( 16 \abs{\mathcal{K}}^2 \delta^{-1}\right) \right\rceil$, see App.\,\ref{app_algo_ucbexplore}) is particularly sensitive and crucial to the correct functioning of the algorithm. In particular, removing the logarithmic term sometimes leads \UcbExplore to miss some states in $\mathcal{S}_L^{\rightarrow}$. We found that simply maintaining the logarithmic term in $\lambda_{\Ucb}$ avoids this issue, we thus set experimentally $\lambda_{\Ucb}^{\textrm{(exp)}} := \left\lceil L^3\epsilon^{-3} \log(1 + \abs{\mathcal{K}}^2) \right\rceil$ (here the $+1$ simply avoids the logarithmic term from being equal to $0$ when $\abs{\mathcal{K}} = 1$).
Furthermore, we remove the constants in the definition of the accuracy $\epsilon' = \epsilon / L$ used by \UcbExplore (while their original algorithm requires $\epsilon'$ to be divided by $8$, we remove this constant). We also significantly improve the planning phase of \UcbExplore~\cite[][Fig.\,2]{lim2012autonomous}. Their procedure requires to divide the samples into $H := (1+ 1/\epsilon')L$ disjoint sets to estimate the transition probability of each stage $h$ of the finite-horizon MDP. This substantially reduces the accuracy of the estimated transition probability since for each stage $h$ only $N_k(s,a) / H$ are used. In our experiments, we use all the samples to estimate a stationary MDP (i.e., $\wh{p}_k(s'|s,a) = N_k(s,a,s')/N_k(s,a)$) rather than a stage-dependent model. %Estimating a stationary model instead of bucketing the data is simpler and more efficient since leads to a higher accuracy of the estimated model.
%{\color{red}
%This greatly improves the accuracy of the planning.
Estimating a stationary model instead of bucketing the data is simpler and more efficient since leads to a higher accuracy of the estimated model. To avoid to move too far away from the original \UcbExplore, we decided to define the confidence intervals as if bucketing was used. We thus consider $\underline{N}_k(s,a) = N_k(s,a) / H$ for the construction of the confidence intervals.
%}
For planning, we use the optimistic backward induction procedure as in~\cite{azar2017minimax}. We thus leverage empirical Bernstein inequalities ---which are much tighter--- rather than Hoeffding inequalities as suggested in~\cite{lim2012autonomous}.
In particular, we further approximate the bonus suggested in~\cite[][Alg.\,4]{azar2017minimax} as
%\[
%        b_h(s,a) = \sqrt{\frac{ Var_{s' \sim \hat{p}_k(\cdot|s,a)}[V_{k,h+1}(s')]}{N_k(s,a) \lor 1}} + \frac{(H-h) }{N_k(s,a) \lor 1}.
%\]

\[
        b_h(s,a) = \sqrt{\frac{ Var_{s' \sim \hat{p}_k(\cdot|s,a)}[V_{k,h+1}(s')]}{\underline{N}_k(s,a) \lor 1}} + \frac{(H-h) }{\underline{N}_k(s,a) \lor 1}.
\]
% where $\iota = \log(S_kA (N_k(s,a) \lor 1))$ and $S_k$ is the size of the MDP obtained considering the set of \say{known} states at round $k$.
%where we removed the logarithmic terms.
%We conjecture that the Bernstein improvement may shave off a factor $L$ in the dependency of the sample complexity bound of \UcbExplore.

For \ALGO, we follow the same approach of removing constants and logarithmic terms. We thus use the definition of $\phi$ as in Thm.\,\ref{thm:sample.comlexity} with $\alpha=1$ and without log-terms.
For planning, we use the procedure described in App.\,\ref{app_full_proof} with $b_k(s,a,s') = \sqrt{\frac{\wh p_k(s'|s,a)(1-\wh p_k(s'|s,a))}{N_k(s,a) \lor 1}} + \frac{1}{N_k(s,a) \lor 1}$. Finally, in the experiments we use a state-action dependent value $\wh{\Theta}(s,a, \cK_k) = \big(\sum_{s' \in \cK_k} \sqrt{ \wh{p}_k(s' \vert s,a)(1 - \wh{p}_k(s' \vert s,a))}\big)^2$ instead of taking the maximum over $(s,a)$.

\vspace{0.1in}

Even though we boosted the practical performance of \UcbExplore w.r.t.\,the original algorithm proposed in~\cite{lim2012autonomous} (e.g., the use of Bernstein), we believe it makes the comparison between \ALGO and \UcbExplore as fair as possible.

\subsection{Confusing Chain}\label{app:confusing.chain}
%Sect.\,\ref{sec:experiment} depicted the effect of $\epsilon$ on the sample complexity of the algorithms. Here,

The \emph{confusing chain} environment referred to in Sect.\,\ref{sec:experiment} is constructed as follows. It is an MDP composed of an initial state $s_0$, a chain of length $C$ (states are denoted by $s_1, \ldots, s_C$) and a set of $K$ confusing states ($s_{C+1}, \ldots, s_{C+K}$). Two actions are available in each state. In state $s_0$, we have a forward action $a_0$ that moves to the chain with probability $p_c$ ($p(s_1|s_0,a_0) = p_c$ and $p(s_0|s_0,a_0) = 1-p_c$) and a confusing action that has uniform probability of reaching any confusing state ($p(s_i|s_0,a_1) = 1/K$ for any $i \in\{C+1, \ldots, C+K\}$). In the confusing states, all actions move deterministically to the end of the chain ($p(s_C|s_i,a) = 1$ for any $i \in\{C+1, \ldots, C+K\}$ and $a$).
In each state of the chain, there is a forward action $a_0$ that behaves as in $s_0$ ($p(s_{\min(C,i+1)}|s_i,a_0) = p_c$ and $p(s_i|s_i,a_0) = 1-p_c$, for any $i \in\{1, \ldots, C-1\}$) and a skip action $a_1$ that moves to $m$ states ahead with probability $p_{\mathrm{skip}}$ ($p(s_{\min(C,i+m)}|s_i,a_0) = p_{\mathrm{skip}}$ and $p(s_i|s_i,a_0) = 1-p_{\mathrm{skip}}$, for any $i \in\{1, \ldots, C-1\}$). Finally, $p(s_0|s_c, a) = 1$ for any action $a$. In our experiments, we set $m=4$, $p_{\mathrm{skip}} = 1/3$, $p_c=1$, $C=5$, $K=6$, $L=4.5$.

\newpage

\paragraph{Sample complexity.}

We provide in Tab.\,\ref{tab:new_results} the sample complexity of the algorithms for varying values of $\epsilon$. As mentioned in Sect.\,\ref{sec:experiment}, \ALGO outperforms \UcbExplore for any value of $\epsilon$, and increasingly so when $\epsilon$ decreases. Fig.\,\ref{fig:bbb} complements Fig.\,\ref{fig:3plots} for additional values of $\epsilon$.

% The fact that \ALGO outperforms \UcbExplore is also depicted in Fig.\,\ref{fig:proportion.confusing} where we report the time needed to identify all the $L$-controllable states.

\paragraph{Quality of goal-reaching policies.} We now investigate the quality of the policies recovered by \ALGO and \UcbExplore. In particular, we show that \ALGO is able to find the incrementally near-optimal shortest-path policies to any goal state, while \UcbExplore may only recover sub-optimal policies.  On the confusing chain domain, the intuition is that the set of confusing states makes $s_C$ reachable in just $2$ steps but the confusing states are not in the controllable set and thus the algorithms are not able to recover the shortest-path policy to $s_C$. On the other hand, state $s_C$ is controllable through two policies: 1) the policies $\pi_1$ that takes always the forward action $a_0$ reaches $s_C$ in $5$ steps; 2) the policy $\pi_2$ that takes the skip action $a_1$ in $s_1$ reaches $s_C$ in $4$ steps.
We observed empirically that \ALGO always recovers policy $\pi_1$ (i.e., the fastest policy) while \UcbExplore selects policy $\pi_2$ in several cases. This is highlighted in Tab.\,\ref{tab:expectedtime.confusing} where we report the expected hitting time of the policies recovered by the algorithms. This finding is not surprising since, as we explain in Sect.\,\ref{sect_analysis} and App.\,\ref{app_objectives}, \UcbExplore is designed to find policies reaching states in \textit{at most} $L$ steps on average, yet it is not able to recover incrementally near-optimal shortest-path policies, as opposed to \ALGO. %This gap between the \DCL and \DCstar objectives directly translates into a higher quality goal-reaching policy for state $s_5$ in the confusing chain for \ALGO with respect to \UcbExplore.

\begin{table}[t]
        %\color{red}
        \centering
        \begin{tabular}{ccc}
                \hline
                $\epsilon$ & \ALGO  & \UcbExplore-Bernstein\\
                \hline
                $0.1$ & $ 374,263~(13,906)$ & $5,076,688~(92,643)$\\
                $0.2$ & $ 105,569~(4,645)$ & $636,580~(13,716)$ \\
                $0.4$ & $ 29,160~(829)$ & $108,894~(2,305)$\\
                $0.6$ & $ 15,349~(475)$ & $40,538~(805)$\\
                $0.8$ & $ 9,891~(244)$ & $21,270~(441)$\\
                \hline
        \end{tabular}
        \caption{Sample complexity of \ALGO and \UcbExplore-Bernstein, on the confusing chain domain. Values are averaged over $50$ runs and the $95\%$-confidence interval of the mean is reported in parenthesis.}
        \label{tab:new_results}
\end{table}

\begin{table}[t]
        %\color{red}
        \centering
        \begin{tabular}{ccccccc}
                \hline
                \multicolumn{7}{c}{\UcbExplore-Bernstein}\\
                \hline
                $\epsilon$ & \multicolumn{6}{c}{Expected hitting time $v_\pi(s_0 \to s_i)$}\\
                & $s_0$ & $s_1$ & $s_2$ & $s_3$ & $s_4$ & $s_5$\\
                \hline
                $0.1, 0.2$ & $0$ &  $1$ &  $2$  & $3$ &$4$ &$4$\\
                % $0.2$ & $0$ &  $1$ &  $2$  & $3$ &$4$ &$4$\\
                $0.4$ & $0$ &  $1$ &  $2$  & $3$ &$4$ &$4.94 ~(0.04)$\\
                $0.6$ & $0$ &  $1$ &  $2$  & $3.36~(0.11)$ &$4$ &$4.53 ~(0.07)$\\
                $0.8$ & $0$ &  $1$ &  $2$  & $3.38~(0.11)$ &$4.07~(0.07)$ &$4.53 ~(0.06)$\\
                \hline
        \end{tabular}

        \caption{Expected hitting time of state $s_i$ of the goal-oriented policy $\pi_{s_i}$ recovered by \UcbExplore-Bernstein, on the confusing chain domain. \ALGO recovers the optimal goal-oriented policy in all the runs and for all $\epsilon$. The advantage of \ALGO lies in its final policy consolidation step. Values are averaged over $50$ runs and the $95\%$-confidence interval of the mean is reported in parenthesis (it is omitted when equal to $0$). This shows that \UcbExplore recovers the optimal goal-oriented policy in every run only for $\epsilon$ equal to $0.1$ and $0.2$.}
        \label{tab:expectedtime.confusing}
\end{table}

% !TEX root = MAIN_AutExp.tex
\begin{figure}[t]
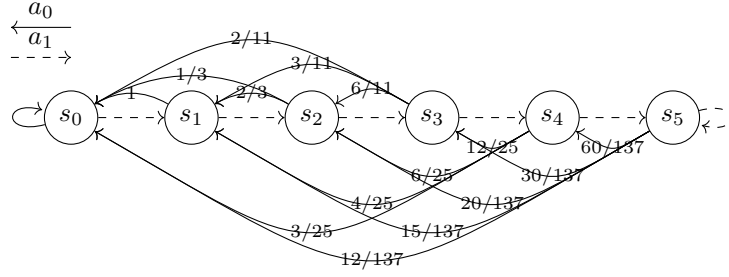

    \centering
    \tikz[scale=0.8]{
            \node[circle, draw] (S0) at (0,0) {$s_0$};
            \node[circle, draw] (S1) at (2,0) {$s_1$};
            \node[circle, draw] (S2) at (4,0) {$s_2$};
            \node[circle, draw] (S3) at (6,0) {$s_3$};
            \node[circle, draw] (S4) at (8,0) {$s_4$};
            \node[circle, draw] (S5) at (10,0) {$s_5$};
            \draw[<-] (-1,1.5) -- node[above]{$a_0$} (0, 1.5);
            \draw[->, dashed] (-1,1) -- node[above]{$a_1$} (0, 1);

            \path[->] (S0) edge[loop left] (S0);
            \draw[->, dashed] (S0) -- (S1);
            \draw[->, dashed] (S1) -- (S2);
            \draw[->, dashed] (S2) -- (S3);
            \draw[->, dashed] (S3) -- (S4);
            \draw[->, dashed] (S4) -- (S5);
            \path[->, dashed] (S5) edge[loop right] (S5);

            \path[->] (S1) edge[bend right] node[font=\scriptsize,] {$1$} (S0);

            \path[->] (S2) edge[bend right] node[font=\scriptsize,] {$2/3$} (S1);
            \path[->] (S2) edge[bend right] node[font=\scriptsize,] {$1/3$} (S0);

            \path[->] (S3) edge[bend right, looseness=1.4] node[font=\scriptsize,] {$6/11$} (S2) ;
            \path[->] (S3) edge[bend right, looseness=1.4] node[font=\scriptsize,midway] {$3/11$} (S1);
            \path[->] (S3) edge[bend right, looseness=1.4] node[font=\scriptsize,midway] {$2/11$} (S0);

            \path[->] (S4) edge[bend left, looseness=1.6] node[font=\scriptsize,midway] {$12/25$} (S3);
            \path[->] (S4) edge[bend left, looseness=1.6] node[font=\scriptsize,midway] {$6/25$} (S2);
            \path[->] (S4) edge[bend left, looseness=1.6] node[font=\scriptsize,midway] {$4/25$} (S1);
            \path[->] (S4) edge[bend left, looseness=1.6] node[font=\scriptsize,midway] {$3/25$} (S0);

            \path[->] (S5) edge[bend left, looseness=1.6] node[font=\scriptsize] {$60/137$} (S4) ;
            \path[->] (S5) edge[bend left, looseness=1.6] node[font=\scriptsize,midway] {$30/137$} (S3);
            \path[->] (S5) edge[bend left, looseness=1.6] node[font=\scriptsize,midway] {$20/137$} (S2);
            \path[->] (S5) edge[bend left, looseness=1.6] node[font=\scriptsize,midway] {$15/137$} (S1);
            \path[->] (S5) edge[bend left, looseness=1.6] node[font=\scriptsize,midway] {$12/137$} (S0);
    }
    \caption{Combination lock domain with $S=6$ states. Expected hitting times from the initial state $s_3$ are $v_{\pi} (s_3 \to s) = (2.18, 1.91, 1.64, 0,   1,   2)$. Consider $L=3$, the set of incrementally $L$-controllable states is $\mathcal{S}_L^{\rightarrow} = \{s_2, s_3, s_4, s_5\}$. The goal-oriented policy to reach $s_4$ and $s_5$ takes always the right action $a_1$, while the policy for $s_2$ always selects the left action $a_0$.}
    \label{fig:comb}
\end{figure}

\subsection{Combination Lock}
We consider the combination lock problem introduced in~\citep{azar2012dynamic}. The domain is a stochastic chain with $S=6$ states and $A=2$ actions. In each state $s_k$, action \emph{right} ($a_1$) is deterministic and leads to state $s_{k+1}$, while action \emph{left} ($a_0$) moves to a state $s_{k-l}$ with probability proportional to $1/(k-l)$ (i.e., inversely proportional to the distance of the states).
Formally, we have that
\begin{align*}
        n(x_k, x_l) = \begin{cases}
                \frac{1}{k-l} &\text{if } l < k\\
                0 & \text{otherwise}
        \end{cases}
        \qquad \text{and} \qquad
        p(x_l | x_k, a_0) = \frac{n(x_k, x_l)}{\sum_{s} n(x_k, s)}.
\end{align*}
We set the initial state to be at $2/3$ of the chain, i.e., $ \Big \lfloor 2 N / 3 \Big \rfloor$.
The actions in the end states are absorbing, i.e., $p(s_0|s_0, a_0) = 1$ and $p(s_{N-1}|s_{N-1}, a_1) = 1$, while the remaining actions behave normally. See Fig.~\ref{fig:comb} for an illustration of the domain.

\paragraph{Sample complexity.} We evaluate the two algorithms \ALGO and \UcbExplore on the combination lock domain, for $\epsilon=0.2$ and $L=2.7$. We further boost the empirical performance of \UcbExplore by using $N$ instead of $\underline{N}$ for the construction of the confidence intervals (i.e., we do not account for the data bucketing in \cite{lim2012autonomous}, see App.\,\ref{app_alg_configurations}). To preserve the robustness of the algorithm, we use $\log(|\mathcal{K}_k|^2)/ (\epsilon')^3$ episodes for \UcbExplore's policy evaluation phase (indeed we noticed that the removal of the logarithmic term here sometimes leads \UcbExplore to miss some states in~$\mathcal{S}_L^{\rightarrow}$ in this domain). For the same reason, in \ALGO we use the value $\wh\Theta(\mathcal{K}_k) = \max_{s,a} \wh\Theta(s,a,\mathcal{K}_k)$ prescribed by the theoretical algorithm instead of the state-action dependent values used in the previous experiment. We average the experiments over $20$ runs and obtain a sample complexity of $30,117$ ($2,087$) for \ALGO and $90,232$ ($2,592$) for \UcbExplore. Fig.\,\ref{fig:comb02} reports the proportion of incrementally $L$-controllable states identified by the algorithms as a function of time. We notice that once again \ALGO clearly outperforms \UcbExplore.

\begin{figure}[t]
    \centering
    \includegraphics[width=.4\textwidth]{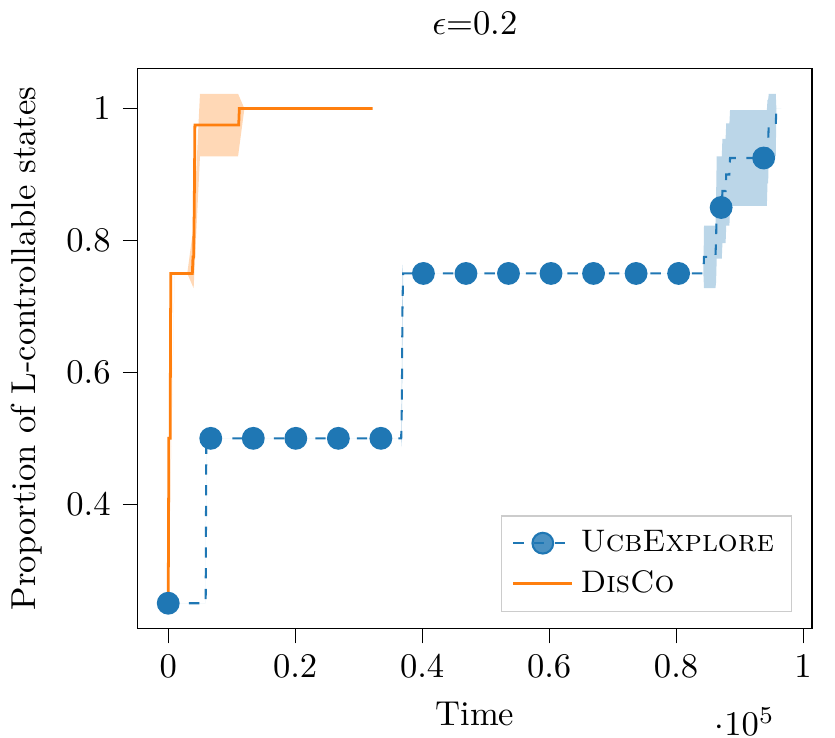}
    \caption{Proportion of the incrementally $L$-controllable states identified by \ALGO and \UcbExplore in the combination lock domain for $L = 2.7$ and $\epsilon=0.2$. Values are averaged over $20$ runs.}
    \label{fig:comb02}
\end{figure}

\begin{figure}[t]
        \centering
        \includegraphics[width=.32\textwidth]{skipdomain_proportion_01.pdf}
        \includegraphics[width=.32\textwidth]{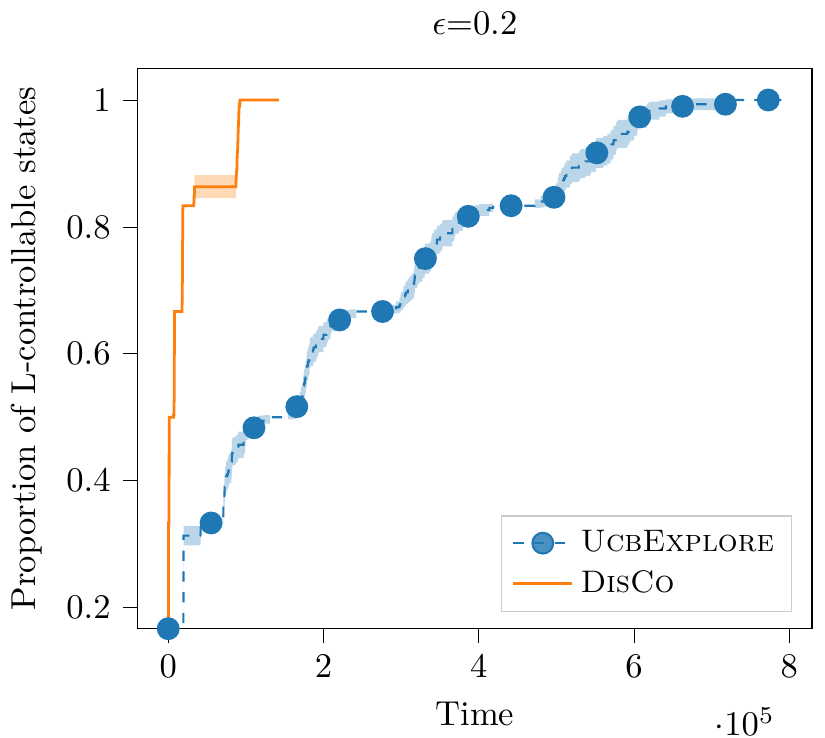}
        \includegraphics[width=.32\textwidth]{skipdomain_proportion_04.pdf}
        \includegraphics[width=.32\textwidth]{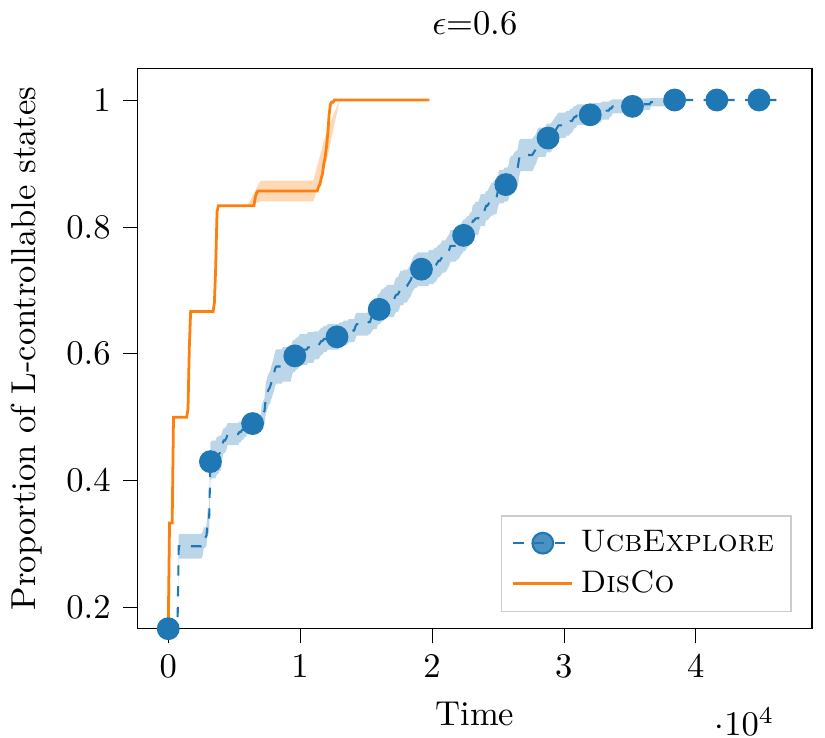}
        \includegraphics[width=.32\textwidth]{skipdomain_proportion_08.pdf}
        \caption{Proportion of the incrementally $L$-controllable states identified by \ALGO and \UcbExplore on the confusing chain domain for $L=4.5$ and $\epsilon \in \{0.1, 0.2, 0.4, 0.6, 0.8\}$. Values are averaged over $50$ runs. \UcbExplore uses Bernstein confidence intervals for planning.}
        \label{fig:bbb}
        %\caption{Proportion of the $L$-controllable states identified by \ALGO and \UcbExplore in the confusing chain (App.~\ref{app:confusing.chain}). Values are averaged over $50$ runs. \UcbExplore uses Bernstein confidence intervals for planning, see App.~\ref{app:experiments}} \label{fig:aaa}
\end{figure}

\end{document}